\theoremstyle{definition}
\theoremstyle{definition}
\newtheorem{problem}{Problem} 
\crefname{problem}{Problem}{Problems}
\newaliascnt{trule}{theorem}
\crefname{trule}{Rule}{Rules}
\newaliascnt{corollary}{theorem}
\crefname{corollary}{Corollary}{Corollaries}
\newaliascnt{conjecture}{theorem}
\crefname{conjecture}{Conjecture}{Conjectures}
\newaliascnt{lemma}{theorem}
\newtheorem{lemma}[lemma]{Lemma}
\crefname{lemma}{Lemma}{Lemmas}
\newaliascnt{assumption}{theorem}
\crefname{assumption}{Assumption}{Assumptions}
\newaliascnt{definition}{theorem}
\newtheorem{definition}[definition]{Definition}
\crefname{definition}{Definition}{Definitions}
\newaliascnt{proposition}{theorem}
\crefname{proposition}{Proposition}{Propositions}
\newaliascnt{remark}{theorem}
\newtheorem{remark}[remark]{Remark}
\crefname{remark}{Remark}{Remarks}
\newaliascnt{example}{theorem}
\newtheorem{example}[example]{Example}
\crefname{example}{Example}{Examples}
\newcommand{\bdmath}{\begin{dmath}}
\newcommand{\edmath}{\end{dmath}}
\newcommand{\beq}{\begin{equation}}
\newcommand{\eeq}{\end{equation}}
\newcommand{\bdm}{\begin{displaymath}}
\newcommand{\edm}{\end{displaymath}}
\newcommand{\bea}{\begin{eqnarray}}
\newcommand{\eea}{\end{eqnarray}}
\newcommand{\beal}{\beq \begin{array}{ll}}
\newcommand{\eeal}{\end{array} \eeq}
\newcommand{\beas}{\begin{eqnarray*}}
\newcommand{\eeas}{\end{eqnarray*}}
\newcommand{\ba}{\begin{array}}
\newcommand{\ea}{\end{array}}
\newcommand{\bit}{\begin{itemize}}
\newcommand{\eit}{\end{itemize}}
\newcommand{\ben}{\begin{enumerate}}
\newcommand{\een}{\end{enumerate}}
\newcommand{\calP}{{\cal P}}
\renewcommand{\boldsymbol}[1]{{\bm #1}}
\newcommand{\hide}[1]{}
\newcommand{\hiddenText}{{\color{gray} hidden text.}}
\newcommand{\hideWithText}[1]{\hiddenText}
\DeclareMathOperator*{\argmin}{arg\,min}
\newcommand{\vu}{\boldsymbol{u}}
\newcommand{\vtau}{\boldsymbol{\tau}}
\newcommand{\blue}[1]{{\color{blue}#1}}
\newcommand{\linkToPdf}[1]{\href{#1}{\blue{(pdf)}}}
\newcommand{\linkToPpt}[1]{\href{#1}{\blue{(ppt)}}}
\newcommand{\linkToCode}[1]{\href{#1}{\blue{(code)}}}
\newcommand{\linkToWeb}[1]{\href{#1}{\blue{(web)}}}
\newcommand{\linkToVideo}[1]{\href{#1}{\blue{(video)}}}
\newcommand{\linkToMedia}[1]{\href{#1}{\blue{(media)}}}
\newcommand{\award}[1]{\xspace} 
\newcommand{\myparagraph}[1]{\noindent\textbf{#1}}
\newcommand*\circled[1]{\tikz[baseline=(char.base)]{
            \node[shape=circle,draw,inner sep=0.5pt] (char) {#1};}}
\newcommand*\anothercircled[1]{\raisebox{.5pt}{\textcircled{\raisebox{-.9pt} {#1}}}}
\newcommand{\traj}{\tau}
\newcommand{\setoftraj}{\vtau}
\newcommand{\throughput}{\alpha}
\newcommand{\setofInstances}{\mathbb{I}}
\newcommand{\candidates}{\mathrm{cand}}
\newcommand{\neighbors}{\mathrm{neigh}}
\newcommand{\reachable}{\mathcal{R}}
\newcommand{\soc}{\boldsymbol{C}}
\newcommand{\trajcost}{\mathcal{C}}
\newcommand{\distance}{\mathrm{dist}}
\newcommand{\cI}{\mathcal{I}}
\newcommand{\cE}{\mathcal{E}}
\newcommand{\cG}{\mathcal{G}}
\newcommand{\bmat}{\left[ \begin{array}}
\newcommand{\emat}{\end{array}\right]}
\newcommand{\tup}[1]{\left( #1\right)}
\newacronym{acr:mapf}{MAPF}{Multi-Agent Path Finding}
\newacronym{acr:fico}{FICO}{Finite-Horizon Closed-Loop Factorization}
\newacronym{acr:soc}{SOC}{Sum of Cost}
\newacronym{acr:mpc}{MPC}{Model Predictive Control}
\newacronym{acr:rl}{RL}{Reinforcement Learning}
\newacronym{acr:ert}{ERT}{Execution Response Time}
\begin{document}

\title{ 
FICO: \textbf{FI}nite-Horizon \textbf{C}l\textbf{O}sed-Loop Factorization for\\ Unified Multi-Agent Path Finding
}

\ifthenelse{\boolean{anonymous}}{%
  \author{Anonymous Authors}%
}{
\author{Jiarui Li,
        Alessandro Zanardi,
        Federico Pecora,
        Runyu Zhang,
        and Gioele Zardini

\thanks{Jiarui Li, Runyu Zhang, and Gioele Zardini are with the Laboratory for Information and Decision Systems, Massachusetts Institute of Technology, Cambridge, MA, USA (e-mails: \{jiarui01, runyuzha, gzardini\}@mit.edu).}
\thanks{Alessandro Zanardi is with Embotech AG, 8005 Zürich, Switzerland (e-mail: zanardi@embotech.com).}
\thanks{Federico Pecora is with Amazon Robotics, North Reading, MA, USA (e-mail: fpecora@amazon.com).}
\thanks{This work was supported by Prof. Zardini's grant from the MIT Amazon Science Hub, hosted in the Schwarzman College of Computing.}
}}

\maketitle

\begin{abstract}
\gls{acr:mapf} is a fundamental problem in robotics and AI, yet most existing formulations treat planning and execution separately and address variants of the problem in an ad hoc manner. 
This paper presents a system-level framework for \gls{acr:mapf} that integrates planning and execution, generalizes across variants, and explicitly models uncertainties.
At its core is the \emph{\gls{acr:mapf} system}, a formal model that casts \gls{acr:mapf} as a control design problem encompassing classical and uncertainty-aware formulations.
To solve it, we introduce \emph{\gls{acr:fico}}, a factorization-based algorithm inspired by receding-horizon control that exploits compositional structure for efficient closed-loop operation.
\gls{acr:fico} enables real-time responses---commencing execution within milliseconds---while scaling to thousands of agents and adapting seamlessly to execution-time uncertainties.
Extensive case studies demonstrate that it reduces computation time by up to two orders of magnitude compared with open-loop baselines, while delivering significantly higher throughput under stochastic delays and agent arrivals.
These results establish a principled foundation for analyzing and advancing \gls{acr:mapf} through system-level modeling, factorization, and closed-loop design.
\end{abstract}

\begin{IEEEkeywords}
Multi-Agent Path Finding, system modeling, factorization, receding-horizon control, closed-loop algorithms, uncertainty modeling, scalability, robustness.
\end{IEEEkeywords}

\begin{figure}[t]
    \centering
    \includegraphics[width=\linewidth, trim=18.5cm 1.5cm 17.7cm 1.5cm, clip]{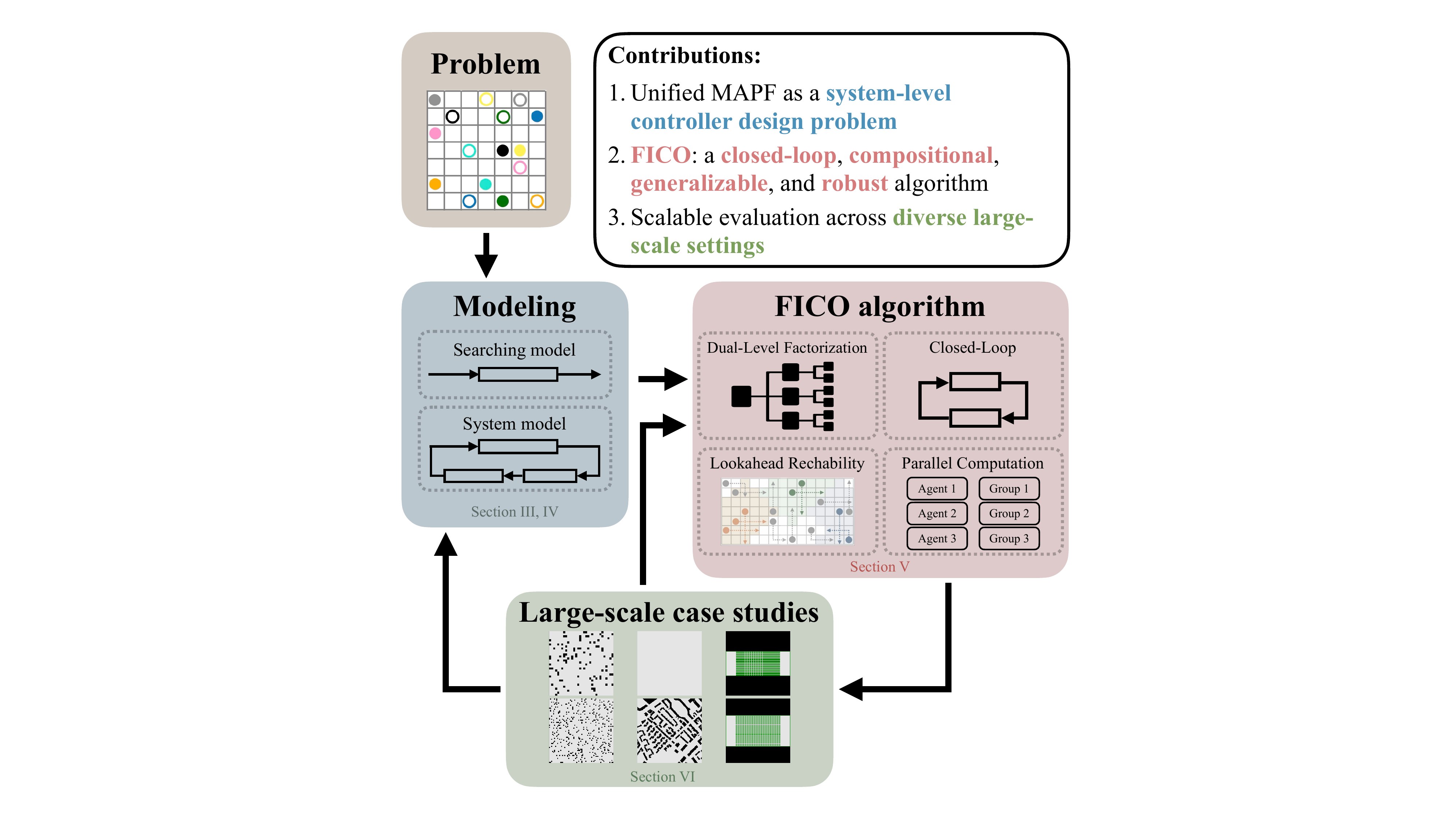}
    \caption{\textbf{Our contributions}: (i) We introduce and formalize the \emph{\gls{acr:mapf} system}, a system-level model that unifies planning and execution, generalizes across \gls{acr:mapf} variants, and naturally incorporates uncertainties, upon which we define the unified \gls{acr:mapf} problem; (ii) We propose the \gls{acr:fico} algorithm, inspired by receding horizon control, to solve the unified problem in an efficient closed-loop manner by applying finite-step lookahead which exploits compositionality; (iii) We present extensive case studies demonstrating that \gls{acr:fico} achieves competitive \emph{performance}, outstanding \emph{efficiency}, and strong \emph{robustness} to diverse execution uncertainties.}
    \label{fig:f1}
\end{figure} 

\section{Introduction}
\IEEEPARstart{M}{odern} logistics increasingly relies on large-scale automated warehouses and fulfillment centers, where thousands of mobile robots must \emph{coordinate} in shared spaces to transport inventory, restock shelves, and fulfill customer orders.
A landmark moment in this transformation was the deployment of robots by Kiva Systems, later acquired by Amazon, which demonstrated that fleets of mobile robots could replace rigid conveyor-based automation with flexible, software-driven coordination~\cite{d2012guest}.
This paradigm shift unlocked unprecedented throughput, scalability, and adaptability, and has since set the standard for the logistics industry.
Today, robotized fulfillment centers are central to global supply chains, e-commerce, and manufacturing.
At the same time, they expose fundamental algorithmic challenges: robots must navigate dense environments, avoid collisions, and complete tasks efficiently under tight deadlines, uncertainties, and continual goal reassignment~\cite{wurman2008coordinating, standley2010finding}.

At the core of these challenges lies the \gls{acr:mapf} problem, a canonical abstraction for multi-robot coordination.
Intuitively, \gls{acr:mapf} asks how to route a fleet of robots from their start locations to their destinations through a shared network of aisles, corridors, or vertices on a graph, ensuring that no two robots ever occupy the same location or traverse the same edge in opposite directions at the same time.
The difficulty comes not from moving one robot, but from coordinating \emph{all} robots simultaneously, avoiding deadlocks and congestion while minimizing global objectives such as overall travel time or energy.
\gls{acr:mapf} has received sustained attention within the robotics, AI, and operations research communities because it captures the combinatorial difficulty of coordinating many agents in discrete and congested environments while retaining a clean mathematical structure~\cite{stern2019mapf}.
Beyond automated warehouses, \gls{acr:mapf} models are directly relevant to other domains of robotics and AI planning, such as manipulation (multi-arm motion planning)~\cite{shaoul2024accelerating}, \gls{acr:rl} (multi-agent coordination under uncertainty)~\cite{wang2025lns2+}, and game-theoretic control~\cite{paul2022multi}.
Yet, despite its wide applicability, the classical formulation of \gls{acr:mapf} has been predominantly cast as a search-based problem: given a set of start and goal locations, compute complete trajectories for all agents that avoid conflicts.
This perspective has enabled remarkable progress, yielding algorithms with strong optimality guarantees and impressive practical performance.
At the same time, it abstracts away the interactive control loop that arises in real deployments, where robots must execute actions in real time, respond to disturbances, and adapt to changing task assignments.

As a result, many existing algorithms, including CBS~\cite{sharon2015conflict}, BCP~\cite{lam2022branch}, ICTS~\cite{sharon2013increasing}, EECBS~\cite{li2021eecbs}, M*~\cite{wagner2015subdimensional}, LNS~\cite{li2022mapf}, and LaCAM~\cite{okumura2023lacam}, are naturally designed in an \emph{open-loop} manner: they produce globally consistent solutions before execution begins. 
While highly effective in static benchmarks, this formulation can create friction in time-critical or uncertain environments, where delaying execution or repeatedly replanning is costly.
To mitigate this, the community has developed a wide range of execution-aware strategies, from post-processing policies for handling delays~\cite{ma2017multi,kottinger2024introducing,su2024bidirectional,vsvancara2019online} to lifelong extensions~\cite{ma2017lifelong, li2021lifelong, zhang2024guidance} and real-time variants that explicitly incorporate execution feedback~\cite{zhang2024planning,liang2025real}.

What is missing, however, is a unifying lens that ties these rich and diverse directions together, under which generalizable and robust \emph{closed-loop} algorithms can be created. 
In contrast, other planning domains, including multi-agent \gls{acr:rl}~\cite{zhang2021multi} and manipulator control~\cite{levine2016end}, have long adopted a control-theoretic perspective in which planning and execution are unified within a feedback loop.
Bringing a similar systematic viewpoint to \gls{acr:mapf} can help contextualize existing methods within a common framework, clarify the role of execution, and open the door to principled treatments of real-time feedback, dynamic replanning, and uncertainties such as stochastic delays or variable fleet sizes.

Moreover, the ``closed-loopness'' enables the corresponding algorithms to leverage \emph{finite-step lookahead}, inspired by receding horizon control, to improve computational efficiency compared with open-loop algorithms, which by definition plan for the entire horizon until termination while maintaining competitive solution quality. On top of that, the finite-step lookahead further opens the door for exploiting compositionality in the sense that agents, although coupled with each other in a long horizon, can be disentangled in a short horizon when applying finite-step lookahead, therefore enabling efficient factorization, which can dramatically tear down the scale of the original problem, especially when combined with modern parallel computation techniques. 

\subsection{Statement of Contribution}
This paper advances the state of the art in \gls{acr:mapf} by introducing a unified execution-aware perspective and a scalable closed-loop algorithm.
Specifically, we make three key contributions.

\myparagraph{System-level modeling of \gls{acr:mapf}} --
We introduce the notion of a \gls{acr:mapf} system, a system-level model that integrates planning and execution within a single feedback loop.
This formulation generalizes classical one-shot \gls{acr:mapf}~\cite{stern2019mapf}, lifelong \gls{acr:mapf}~\cite{ma2017lifelong}, and execution-aware extensions, providing a principled language for analyzing how plans interact with real-time execution and uncertainties.
The model explicitly distinguishes the roles of the controller (planner), actuator (execution), and environment, enabling systematic treatment of uncertainty.
This framework unifies previously fragmented \gls{acr:mapf} variants and situates them as special cases of a single, coherent controller synthesizing problem.

\myparagraph{Closed-loop \gls{acr:fico} algorithm} -- 
Building on this model, we develop \gls{acr:fico}, a closed-loop algorithm inspired by receding-horizon control.
\gls{acr:fico} introduces a finite-horizon dual-level factorization scheme that decomposes large problems into hierarchically organized subproblems based on finite-horizon conflicts and spatial-temporal reachability, which can be solved in parallel and composed without conflicts thanks to the factorization-compatible replanning. On top of these, by integrating with a balanced uniform tie-breaker and an efficient disjoint union-find algorithm, this design allows \gls{acr:fico} to i) respond in real time and enable immediate execution onset, ii) maintain competitive solution quality compared to state-of-the-art planners, iii) scale effectively to dense, large-scale instances by exploiting compositionality, and iv) remain robust in the face of execution uncertainties and dynamic environments.

\myparagraph{Comprehensive experimental evaluation} --
We evaluate \gls{acr:fico} across diverse \gls{acr:mapf} settings, including classical one-shot problems, lifelong task assignments, and large-scale warehouse-style benchmarks.
Across all domains, the dual-level factorization consistently improves both efficiency and robustness. 
In large-scale warehouse studies with thousands of agents, \gls{acr:fico} sustains real-time planning---commencing execution within 15\,ms---and reduces computation time by up to $97.7\%$ relative to open-loop baselines. 
Despite this speed, it preserves solution quality: under uncertainty, it delivers up to $35.7\%$ higher throughput than closed-loop PIBT, and improves item delivery by $3.5\times$--$32\times$ over open-loop baselines facing stochastic delays or agent arrivals. 
In lifelong scenarios with dynamically changing goals, \gls{acr:fico} achieves $59.2\%$ throughput gains over strong closed-loop baselines. 
An ablation study further highlights the complementary role of horizon length, factorization, and planner choice. 
Together, these results position \gls{acr:fico} on a new Pareto frontier, balancing speed, scalability, and robustness for real-time multi-robot coordination.

\section{Related Work}
Research on \gls{acr:mapf} spans a wide range of formulations and algorithmic strategies, reflecting its central role in multi-robot coordination. 
Classical approaches model the problem as a search task and have produced a rich body of algorithms with strong theoretical guarantees. 
Over time, these methods have been extended to settings that more closely resemble real-world deployments, including lifelong variants, execution-aware planning, and approaches explicitly addressing uncertainty. 
Complementary lines of work explore factorization and parallelization to improve scalability. This section reviews these developments, positioning them within the broader context and highlighting the gaps that motivate our contributions.
\subsection{One-shot \gls{acr:mapf}}
The \emph{one-shot} formulation is the classical version of \gls{acr:mapf}, where each agent is assigned a single start and goal location.
It can be cast as a search problem in a high-dimensional joint state space, where a state encodes the positions of all~$N$ agents and transitions correspond to their simultaneous movements.
This formulation quickly leads to combinatorial explosion: for instance, in a four-connected grid where each agent has five possible actions (four moves and waiting), the branching factor reaches~$5^N$, rendering standard search techniques such as DFS, BFS, Dijkstra's algorithm, or A* intractable at scale~\cite{lavalle2006planning,felner2017search}.

To mitigate this, a wide range of specialized algorithms have been developed.
Optimal or bounded-suboptimal methods such as CBS~\cite{sharon2015conflict}, BCP~\cite{lam2022branch}, ICTS~\cite{sharon2013increasing}, ICBS~\cite{boyarski2015icbs}, EECBS~\cite{li2021eecbs}, and M*~\cite{wagner2015subdimensional} reduce the search space but still suffer from exponential scaling and the inherent NP-hardness of the problem~\cite{yu2013planning}.
Faster alternatives, including MAPF-LNS2~\cite{li2022mapf} and LaCAM~\cite{okumura2023lacam}, as well as anytime variants such as MAPF-LNS~\cite{li2021anytime}, LaCAM*~\cite{okumura2023improving}, and engineered LaCAM*~\cite{okumura2023engineering}, improve practical feasibility but remain fundamentally limited by offline computation, often producing solutions of uneven quality~\cite{shen2023tracking}.

\subsection{Lifelong \gls{acr:mapf}} \label{sec:related-work-lifelong-mapf}
The \emph{lifelong} variant of \gls{acr:mapf}~\cite{ma2017lifelong} extends the one-shot formulation by assigning agents new goals as they complete previous ones, capturing realistic scenarios such as automated warehouses where robots must handle continuous pickup-and-delivery tasks. 
While some works assume that future tasks are known in advance~\cite{nguyen2019generalized}, most consider the more realistic case where tasks are revealed online.
The online nature of goal assignment, however, clashes with the offline, search-based framework of classical \gls{acr:mapf}.
A common strategy is to decompose the lifelong problem into a sequence of one-shot instances and repeatedly invoke a one-shot planner.
This effectively turns the problem into a closed-loop process, but repeated replanning is computationally expensive and difficult to scale, even when search trees~\cite{wan2018lifelong} or trajectories~\cite{vsvancara2019online} are reused.
Windowed planning~\cite{li2021lifelong} alleviates this by limiting replanning to a fixed lookahead horizon~$w$ and updating every~$h\leq w$ steps, improving efficiency at the cost of added hyperparameters.
More recently, \emph{guidance graphs} have been proposed to mitigate congestion and improve throughput~\cite{zhang2024guidance, chen2024traffic, zang2025online}, though their synthesis requires computationally intensive and sample-demanding pretraining prior to deployment, which may limit their practicality in real-time applications. 

An alternative direction leverages inherently closed-loop algorithms such as PIBT~\cite{okumura2022priority,okumura2025lightweight}, which generate agents' moves sequentially without precomputing entire trajectories.
Although typically yielding lower solution quality, PIBT's efficiency and generality have made it a common building block within other refined algorithmic designs and frameworks, including the \gls{acr:fico} algorithm proposed in this paper.

\subsection{Execution in \gls{acr:mapf}} \label{sec:related-work-execution}

In \gls{acr:mapf}, \emph{execution} refers to the process of carrying out the planned trajectories.
While lifelong \gls{acr:mapf} inherently involves execution~\cite{ma2017lifelong}, most formulations leave the execution model itself undefined.
When uncertainties are considered, execution is typically handled as a \emph{reactive post-processing step} applied to a precomputed solution.
This component has appeared under various names in the literature: plan-execution policy~\cite{ma2017lifelong}, executor~\cite{zhang2024planning, berndt2023receding,zhang2025concurrent}, execution policy~\cite{atzmon2018robust,atzmon2020robust}, or simply a sequence of \emph{wait} and \emph{move} actions~\cite{liu2024multi}.
These post-processing schemes activate only when deviations render the original plan infeasible.
The typical remedy is to insert dummy vertices (deliberate waits) into the remaining trajectories, reestablishing feasibility under the assumption that execution is perfect from that point onward.
If further deviations occur, the process repeats.
Such methods are a necessary workaround under the classical search model, where replanning and execution are fundamentally decoupled.

Despite their practicality, these schemes face clear limitations.
They usually rely on strong assumptions about uncertainty (most commonly stochastic delays), are often tailored to specific problem variants, and remain computationally demanding due to the complexity of synthesizing inter-agent dependency structures.
Moreover, lifelong \gls{acr:mapf} lacks a precomputed plan to adjust, making reactive modifications even less natural.
As a result, the field still lacks a unified and systematic framework for reasoning about execution in \gls{acr:mapf}.

\subsection{Uncertainty in \gls{acr:mapf}}

Uncertainty is unavoidable in real-world multi-agent systems, where fleets may scale to thousands of robots and even a single failure can propagate widely.
Because \gls{acr:mapf} is closely tied to such applications, handling uncertainty is a central challenge.

Uncertainties can be broadly categorized as \emph{external} (i.e., arising from the environment) and \emph{internal} (i.e., arising from actuation).
A well-studied external source is online goal reassignment, captured by the lifelong \gls{acr:mapf} formulation~\cite{ma2017lifelong}.
Other environmental uncertainties, such as the online addition of new agents, have received less attention and are typically addressed only through costly full replanning~\cite{vsvancara2019online,honig2019persistent}.

Internal uncertainties often manifest as imperfect actuation, with stochastic delays serving as the canonical example.
Robust planning approaches compute~$k$-robust one-shot solutions that tolerate bounded delays~\cite{atzmon2018robust,atzmon2020robust,chen2021symmetry}, but these require heavy upfront computation and lack flexibility. 
More commonly, reactive post-processing is employed: feasibility is restored by inserting waits (dummy vertices) based on inter-agent dependency structures such as Temporal Plan Graphs (TPG) or Action Dependency Graphs (ADG)~\cite{honig2016multi,ma2017multi}.
Extensions such as STPG~\cite{berndt2023receding} and its accelerations~\cite{feng2024real,jiang2025speedup} reduce unnecessary waiting, while alternative dependency structures~\cite{kottinger2024introducing,liu2024multi,su2024bidirectional} improve specific cases.
However, all of these methods require a precomputed solution, and thus apply only to one-shot \gls{acr:mapf}.
To extend delay-handling to the lifelong setting, recent works decompose it into repeated one-shot subproblems and reapply post-processing at each step~\cite{zhuang2025robust,zhang2024planning,zhang2025concurrent}.
While effective in certain scenarios, these methods remain tied to strong assumptions about uncertainty types and struggle to generalize beyond them.

\subsection{Factorization and Parallel Computation}
A central challenge in \gls{acr:mapf} is the curse of dimensionality: as the number of agents grows, the joint search space expands exponentially.
Two main strategies have been explored to mitigate this: \emph{factorization} and \emph{parallelization}, both of which aim to exploit problem structure and computational resources.

Factorization leverages \emph{compositionality}, i.e., the principle that ``the solution of the composition of problems can be approximated by the composition of the solution of problems''.
If subgroups of agents can be planned independently, their solutions may be composed into a globally feasible plan, often preserving performance guarantees such as optimality.
In \gls{acr:mapf}, bottom-up approaches embody this idea by assuming full independence and iteratively merging agents when conflicts arise~\cite{wagner2015subdimensional, standley2010finding,lee2021parallel, zhang2025dynamic}.
However, because these methods typically consider the \emph{entire time horizon} at once, true independence is rare, leading to frequent replanning in overlapping groups and limited scalability. 

Parallel computation, by contrast, seeks to accelerate search by distributing workload across processors.
While most \gls{acr:mapf} planners remain sequential, recent works have applied parallelism to specific components.
Examples include parallelizing CBS search~\cite{lee2021parallel}, decomposing maps into independently planned regions with coordination across boundaries~\cite{leet2022shard}, and parallelizing refinement strategies in LNS-based planners~\cite{okumura2023engineering,chan2024anytime,jiang2024scaling}.
Despite such advances, the strong coupling among agents, especially in dense environments, makes it difficult to decompose problems into independent units that parallelization can fully exploit.

Together, these approaches highlight both the potential and the limitations of exploiting structure for scalability in MAPF. They motivate the need for new formulations that can systematically combine factorization and parallelism within a unified, execution-aware framework.

\subsection{Summary}
In summary, prior work has established strong foundations for \gls{acr:mapf}, from search-based formulations and lifelong variants to execution-aware post-processing and robustness techniques. 
Approaches based on factorization and parallel computation illustrate promising avenues for scalability, but so far they have remained limited: either restricted to bottom-up independence assumptions or confined to local refinements. 
No existing framework systematically integrates factorization with closed-loop execution. 
This gap motivates our contribution: a unified, execution-aware formulation of \gls{acr:mapf} together with an algorithm that treats factorization as a first-class principle. 
By enabling effective factorization with a receding-horizon design, our approach not only provides computational advantages but also delivers competitive solution qualities and enhances resilience to uncertainties, a key requirement for lifelong execution.

\begin{figure}[t]
    \centering
    \includegraphics[width=\linewidth, trim=10cm 0cm 10cm 0cm, clip]{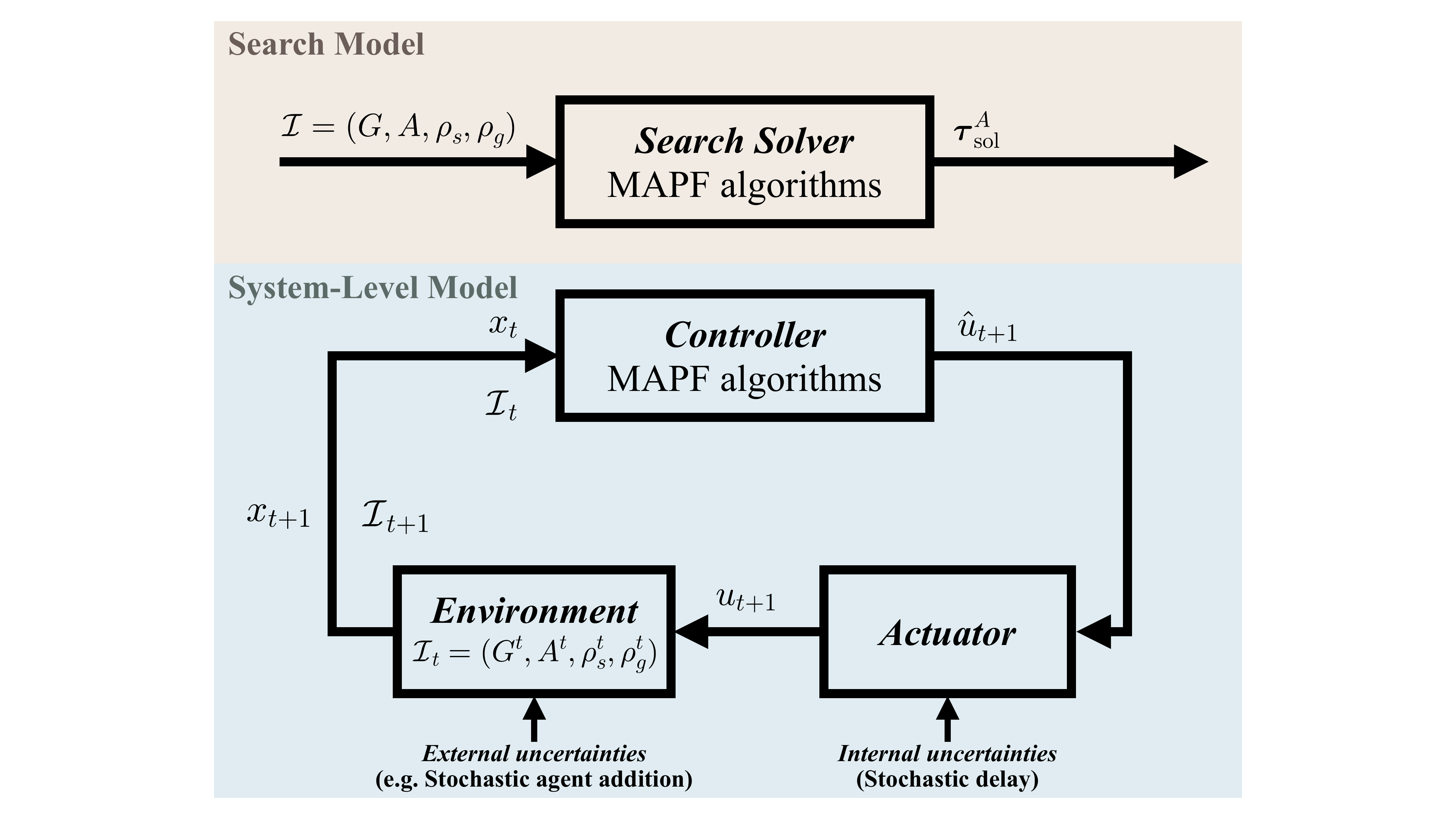}
    \caption{The system-level view of \gls{acr:mapf} system vs. the classical search model.}
    \label{fig:mapf_system_vs_search}
\end{figure}

\section{Classical Search Model of \gls{acr:mapf}}
\label{sec:classical-searching-mapf}

The classical view of \gls{acr:mapf} treats the problem as a search task: given a set of agents with start and goal locations, the objective is to compute complete, conflict-free trajectories for all agents before execution begins.
This perspective, a natural extension of the path-searching problem in computer science, underlies nearly all existing approaches to \gls{acr:mapf}~\cite{stern2019mapf}.

Despite its ubiquity, the searching model has often been described informally or inconsistently in the literature.
To provide a clear foundation for our later developments, this section introduces a uniform set of formal definitions for the search model and its two widely studied: the standard \emph{one-shot} \gls{acr:mapf}, where each agent is assigned a single goal, and the \emph{lifelong} \gls{acr:mapf}~\cite{ma2017lifelong}, where agents receive new goals sequentially as prior ones are completed.

While lifelong \gls{acr:mapf} incorporates the temporal dimension of execution, both formulations remain rooted in the same paradigm: solutions are computed offline as full trajectories, with execution itself excluded from the model. The following subsections formalize these classical definitions to establish a precise baseline for the systematic, execution-aware framework introduced later.

\subsection{One-shot \gls{acr:mapf}}
In the one-shot setting, all agents are given fixed start and goal locations, and the task is to compute a set of conflict-free trajectories.
We formalize this via a \gls{acr:mapf} instance.

\begin{definition}[\gls{acr:mapf} instance]
\label{def:mapf-inst}
An \gls{acr:mapf} \emph{instance} is a tuple~$\cI=\tup{G,A,\rho_s, \rho_g}$ where~$G=\tup{V,E}$ is a directed reflexive graph,~$A=\{a_1, \ldots, a_N\}$ is a set of~$N$ agents, and~$\rho_s\colon A\to V$ and~$\rho_g\colon A\to V$ are maps from each agent~$a\in A$ to a start vertex~$\rho_s(a)=s_a\in V$ and a goal vertex~$\rho_g(a)=g_a\in V$.
\end{definition}

Here, paths and conflicts play a central role in understanding agents' interactions.

\begin{definition}[Path]
\label{def:path}
Let~$G = \tup{V,E}$ be a directed reflexive graph.
A \emph{path} is a finite sequence of vertices~$\pi^G = [v_0, v_1, \ldots, v_T]$, where~$v_t\in V \ \forall t \in \{0, 1, \ldots, T\}$, and consecutive vertices are connected by an edge:~$\forall t \in \{0, 1, \ldots, T-1\}$,~$\tup{v_t, v_{t+1}}\in E$.
The \emph{length} of~$\pi^G$ is~$\ell(\pi^G) \coloneqq T$.
\end{definition}

\begin{definition}[Path conflicts] \label{def:path-conflicts}
Given paths~$\pi^G_i$ and~$\pi^G_j$ of equal length~$T$, we define:
a) \emph{Vertex Conflict}:~$\exists t\in \{0,\ldots,T\}$ such that~$v^i_t = v^j_t$; 
b) \emph{Edge Conflict}:~$\exists t\in \{0,\ldots,T\}$ such that~$v^i_t = v^j_{t-1}$ and $v^i_{t-1} = v^j_t$.
Two paths are \emph{conflict-free} if they exhibit neither vertex conflicts nor edge conflicts.
\end{definition}

In \gls{acr:mapf}, each agent follows a trajectory, obtained by assigning a path to that agent.

\begin{definition}[Trajectory of an agent]
Let~$a_i\in A$ be an agent.
A \emph{trajectory} of~$a_i$, denoted~$\traj^{a_i}=[v^i_0, v^i_1, \ldots, v^i_{T_i}]$, is a path (\cref{def:path}) associated with~$a_i$.
We say that the trajectory~$\traj^{a_i}$ is \emph{valid} if it satisfies~$v^i_0 = \rho_s(a_i)$ and $v^i_{T_i} = \rho_g(a_i)$, where~$\rho_s,g$ denote the start and goal vertex mappings, respectively (\cref{def:mapf-inst}).
Conflicts between trajectories are defined as conflicts between their underlying paths: two trajectories~$\traj^{a_i}$ and~$\traj^{a_j}$ have a vertex (resp. edge) conflict if their underlying paths have a vertex (resp. edge) conflict.
Two agents~$a_i$ and~$a_j$ are \emph{conflict-free} if their assigned trajectories have neither vertex nor edge conflicts at any step.
\end{definition}

\begin{remark}
Some other works in \gls{acr:mapf} use different notion of conflicts, such as cycle conflicts~\cite{stern2019mapf}. 
However, this definition is the most common and suitable to the class of algorithms developed in this work.
\end{remark}

The set of all agents' trajectories constitutes a \emph{solution} to a \gls{acr:mapf} instance.

\begin{definition}[\gls{acr:mapf} solution]
\label{def:mapf-solution}
Let~$\cI=\tup{G,A,\rho_s, \rho_g}$ be a \gls{acr:mapf} instance.
A \emph{solution} to~$\cI$ is a set of valid trajectories~$\vtau_{\text{sol}}^{A} = \{\traj_{\text{sol}}^{a_1}, \ldots, \traj_{\text{sol}}^{a_N}\}$, where $\traj_{\text{sol}}^{a_i}$ is a valid trajectory of agent~$a_i$, such that:
\bit
    \item \emph{Common makespan}:~$\forall a_i\in A$, $\ell(\traj_{\text{sol}}^{a_i}) = T$;\footnote{This condition imposes a common length, or \emph{makespan}, for all trajectories. 
    Since~$G$ is reflexive, an agent that reaches its goal before~$T$ can stay there by waiting ($v^i_t = \rho_g(a_i)$ for all subsequent time steps until~$t=T$), or temporarily vacate the goal to allow passage for other agents before returning.}
    \item \emph{Conflict-freeness}:~$\forall a_i, a_j\in A$, the trajectories $\traj_{\text{sol}}^{a_i}$ and $\traj_{\text{sol}}^{a_j}$ are conflict-free. 
\eit
\end{definition}

\begin{problem}[One-shot \gls{acr:mapf} under the search model]
\label{prob:searching-one-shot-mapf}
Given a \gls{acr:mapf} instance~$\cI$ (\cref{def:mapf-inst}), compute a \gls{acr:mapf} solution~$\vtau_{\mathrm{sol}}^A$ (\cref{def:mapf-solution}). 
\end{problem}

Under the search model, any designed algorithm must output the \emph{entire} solution in a single shot/computation.
No intermediate partial solution (e.g., the first move) can be finalized before the full plan is obtained.
Consequently, feedback from the environment during execution is \emph{not} incorporated into the planning process.
In this context, most existing algorithms generate the complete solution in a batch, often employing backtracking.
As a result, even the first step of execution cannot be initiated until the full plan is computed, an inherent source of delay in time-critical applications~\cite{zhang2024planning}.

In practice, one wants to evaluate the quality of solutions leveraging metrics such as the \emph{makespan} (i.e., the common length of their trajectories, completion time of the last agent), and the \gls{acr:soc} (i.e., the sum of costs for all agents, operational effort, energy expenditure).

\begin{definition}[Metrics for \gls{acr:mapf} solutions] \label{def:oneshot-mapf-metrics}
A \gls{acr:mapf} solution~$\vtau_{\mathrm{sol}}^{A}$ is typically evaluated using:
    \bit
        \item \emph{Makespan}: The length~$T$ of all agents' trajectories in $\vtau_{\mathrm{sol}}^{A}$.
        \item \emph{\gls{acr:soc}}: The cost function is defined as follows:
        \begin{equation*}
        \begin{aligned}
            \soc(\vtau_{\text{sol}}^{A}) & \textstyle = \sum_{a_i\in A} \mathcal{C}(\traj_{\text{sol}}^{a_i}), \\
            \trajcost(\traj_{\text{sol}}^{a_i}) &= \textstyle \sum_{t=0}^{T} I(v^i_t \neq \rho_g(a_i)),
        \end{aligned}
        \end{equation*}
        where~$\trajcost(\traj_{\text{sol}}^{a_i})$ is the cost of the solution trajectory $\traj_{\text{sol}}^{a_i}$, $I(\cdot)$ is the indicator function, and $T$ is the makespan.
    \eit
\end{definition}

\subsection{Lifelong \gls{acr:mapf}}
In the lifelong setting, agents are sequentially assigned new goals as they complete previous ones~\cite{ma2017lifelong}, modeling realistic scenarios such as automated warehouses with continuous pickup-and-delivery tasks.
We now extend the one-shot model to incorporate online goal revelations.

\begin{definition}[Lifelong \gls{acr:mapf} instance]
A \emph{lifelong \gls{acr:mapf} instance} is a tuple~$\cI=\tup{G,A,\rho_s, {\rho_g}, T_{\max}}$ where~$G=\tup{V,E}$ is a directed reflexive graph,~$A=\{a_1, \ldots, a_N\}$ is a set of~$N$ agents,~$\rho_s\colon A\to V$ maps each agent~$a\in A$ to a start vertex~$\rho_s(a)=s_a\in V$, and~${\rho_g}\colon A\to V^{K+1}$,~${\rho_g}(a) = [\rho_g^0(a), \ldots, \rho_g^K(a)]$ maps each agent $a\in A$ to a list of $K+1$ goal vertices it is required to visit sequentially. 
$T_{\max}$ is the maximum allowed number of timesteps.
Initially, only~$\rho_g^0(a)$ is known for each~$a\in A$;
Upon reaching~$\rho_g^k(a)$ at time~$t$, the next goal~$\rho_g^{k+1}(a)$ is revealed.\footnote{We assume~$K+1$ exceeds the maximum number of goals that any agent can reach within~$T_{\max}$.}{}
\end{definition}

\begin{definition}[Trajectory in lifelong \gls{acr:mapf}]
Let~$a_i\in A$.
A \emph{trajectory} of~$a_i$, denoted~$\traj^{a_i} = [v^i_0, \ldots, v^i_{T_i}]$, is a path satisfying~$v_0^i=\rho_s(a_i)$.
The trajectory is \emph{valid} if its length is~$T_\max$.
Conflicts are defined as in the one-shot case.
\end{definition}

In lifelong \gls{acr:mapf}, agents move for the full time horizon~$T_{\mathrm{max}}$, potentially completing multiple assignments.
The complete set of trajectories specifies a solution.

\begin{definition}[Lifelong \gls{acr:mapf} solution]
A \emph{solution} to a lifelong \gls{acr:mapf} instance~$\cI$ is a set of valid trajectories $\vtau_{\text{sol}}^{A} = \{\traj_{\text{sol}}^{a_1}, \ldots, \traj_{\text{sol}}^{a_N}\}$, such that~$\forall a_i, a_j\in A$,~$i\neq j$,~$\traj_{\text{sol}}^{a_i}$ and~$\traj_{\text{sol}}^{a_j}$ are conflict-free. 
\end{definition}

The computational problem is defined as follows.

\begin{problem}[Lifelong \gls{acr:mapf} problem under search model]
\label{prob:searching-lifelong-mapf}
Given a lifelong \gls{acr:mapf} instance~$\cI$, compute a solution~$\vtau_{\mathrm{sol}}^A$ satisfying validity and conflict-freeness constraints, where each agent visits its sequence of goals in order.
In the search model, the full solution is computed offline without feedback from the environment, despite goals being revealed online.
\end{problem}

Because goals are revealed during execution, solution quality is not measured solely by arrival at a final goal, but by how many goals are reached over time.

\begin{definition}[Throughput]
 \label{def:throughput}
For agent~$a_i$ with trajectory~$\traj_{\mathrm{sol}}^{a_i}$ and goal list~$\rho_g(a_i)$, let~$t_0,\ldots,t_{k_i-1}$ be the goal attainment times defined recursively by:
\begin{align*}
t_0 &= \min \{t\mid v_t^i=\rho_g^0(a_i)\},\\
t_j&=\min \{t>t_{j-1}\mid v_t^i=\rho_g^j(a_i)\}, \quad j\geq 1.
\end{align*}
If no such~$t_j$ exists, the sequence terminates.
The number~$k_i$ of attained goals is used to define the \emph{throughput}:
\begin{equation*}
\throughput=\frac{1}{T_\max}\sum_{a_i\in A}k_i,
\end{equation*}
i.e., the average number of goals reached per timestep.
\end{definition}

\cref{prob:searching-lifelong-mapf} and \cref{prob:searching-one-shot-mapf} impose the same validity and conflict-freeness requirements, but differ in their termination conditions.
In one-shot \gls{acr:mapf}, the problem concludes once all agents reach their assigned goals, whereas in lifelong \gls{acr:mapf}, agents operate over a fixed horizon with goals revealed online.
Because future goals are not known in advance, lifelong \gls{acr:mapf} cannot be addressed through a purely offline plan; 
some form of feedback is unavoidable.
A common workaround is \emph{windowed searching}~\cite{li2021lifelong}, which decomposes the horizon into a sequence of one-shot subproblems.
This approach introduces feedback implicitly, but remains tied to repeated search-based planning and does not provide a unified framework for integrating planning and execution.

\section{System-Level Model of \gls{acr:mapf}}
\label{sec:systematic-mapf}

The classical search model of \gls{acr:mapf} introduced in the previous section treats planning as an open-loop problem: given an instance, the algorithm outputs a complete set of trajectories before execution begins.
While this abstraction follows naturally from classical search problems~\cite{cormen2022introduction}, it neglects a key reality: trajectories must ultimately be \emph{executed} in dynamic and uncertain environments, where delays, errors, and disturbances directly affect performance.
Moreover, the search model enforces rigid input-output requirements: a fixed instance must yield a full plan.
This formulation captures one-shot \gls{acr:mapf} exactly, but extensions such as lifelong \gls{acr:mapf} or \gls{acr:mapf}-DP can only be expressed as ad hoc variants, often sacrificing rigor and consistency.
What is missing is a unification of these variants within a single, coherent formulation.

To this end, we introduce the notion of \gls{acr:mapf} system, which casts the problem as a dynamical control loop.
Here, planning and execution interact through three components: the \emph{controller} (planner), which embodies the algorithm and issues movement commands; the \emph{actuator}, which executes the commands subject to uncertainty; and the \emph{environment}, which evolves the system state, such as reveals new goals or agents. 
An illustration comparing the classical search model and the system-level model is shown in~\cref{fig:mapf_system_vs_search}.
This system perspective provides a principled foundation for incorporating execution feedback, modeling uncertainty, and unifying diverse \gls{acr:mapf} variants within a single framework.
It also sets the stage for the \emph{unified \gls{acr:mapf} problem} (\cref{def:unified-mapf}) and the development of our factorization-based closed-loop algorithm, \gls{acr:fico}, which systematically exploits this model to achieve scalability, responsiveness, and robustness.

\begin{definition}[\gls{acr:mapf} system] \label{def:mapf-system}
Let~$\cI_t = \tup{G^t, A^t, \rho_s^t, \rho_g^t}$ be the \gls{acr:mapf} instance at time $t$.
A \emph{state} is a map~$x_t\colon \in (V^t)^{A^t}$, i.e.,~$x_t\colon A^t\to V^t$, mapping each agent to its current vertex.
For~$a\in A^t$, we write~$x_t(a)$ for the position of~$a$ at time~$t$.
A (per-agent) \emph{movement} command is a map~$u_t\in (E^t)^{A^t}$, i.e.,~$u_t\colon A^t\to E^t$, with the consistency constraint
\begin{equation*}
\forall a\in A^t:\; u_t(a) = \tup{x_t(a),\, x_{t+1}(a)} \in E^t,
\end{equation*}
for the executed next state~$x_{t+1}$.
Because~$G^t$ is reflexive, waiting is captured by edges~$\tup{v,v}\in E^t$.
Furthermore, a \gls{acr:mapf} system features the following components:
\begin{itemize}
    \item \emph{Controller}: produces planned movements~$u_t\in (E^t)^{A^t}$.
    If open-loop,~$\hat{u}_t=g(\cI_t,t)$.
    If closed-loop,~$\hat{u}_t=g(x_t,\cI_t,t)$.
    \item \emph{Actuator}: realizes the plan under uncertainties~$\omega_t$,
     \begin{equation*}
  u_t = h(\hat u_t, x_t, \cI_t; \omega_t) \in (E^t)^{A^t}.
  \end{equation*}
  A perfect actuator satisfies~$h(\hat u_t, x_t, \cI_t; \omega_t)\equiv \hat u_t$.
  \item \emph{Environment}: updates the state and possibly the instance:
  \begin{equation*}
  \tup{x_{t+1}, \cI_{t+1}}=f(u_t, \cI_t). 
  \end{equation*}
\end{itemize}
At~$t=0$,~$\cI_0=\tup{G^0,A^0,\rho_s^0,\rho_g^0}$ and~$x_0(a)=\rho_s^0(a) \ \forall a\in A^0.$
\end{definition}

\subsection{Controller}
In the \gls{acr:mapf} system, the \emph{controller} corresponds to the decision-making algorithm that plans agent movements.
Following the taxonomy in~\cite{zhang2024planning}, exisiting \gls{acr:mapf} algorithms can be broadly classified as either \emph{open-} or \emph{closed-loop}.
In the former case, the controller computes the entire plan in advance, without leveraging intermediate state feedback during execution. This is the dominant approach in one-shot \gls{acr:mapf}.
In the latter case, the controller computes movements incrementally, incorporating the current state at each decision step.
Examples include PIBT~\cite{okumura2022priority} and its variants~\cite{okumura2025lightweight, gandotra2025anytime}, which aims only to resolve the immediate next state and action at each time step, yielding greater computational efficiency. This property naturally enables closed-loop adaptation, since each action is computed from the observed current state. Such closed-loop methods are suitable for both one-shot and lifelong settings because of their ability to adaptively select actions from real-time feedback. We now formally define the controller, in both closed-loop and open-loop forms.

\begin{definition}[Controller]
Let~$x_t\in (V^t)^{A^t}$ be the state and~$\cI_t\in \setofInstances$ the instance at time~$t$.
A \emph{controller} is a map
\begin{equation*}
g_t\colon (V^t)^{A^t}\times \setofInstances \to (E^t)^{A^t}
\end{equation*}
that outputs a \emph{planned movement}~$\hat{u}_t=g_t(x_t,\cI_t)$, where~$\hat{u}_t(a)=\tup{x_t(a),\hat{v}_{t+1}(a)}\in E^t$ for each~$a\in A^t$.
If \emph{open-loop},~$g_t$ does not depend on~$x_t$ except at~$t=0$.
All~$\hat{u}_t$ are derived from a precomputed plan:~$\hat u_t = g(\cI_0, t).$
If \emph{closed-loop},~$g_t$ can access~$x_t$ at each step:~$\hat u_t = g(x_t, \cI_t, t).$
\end{definition}

\begin{remark}[Open- vs. closed-loop]
Open-loop algorithms can be deployed in a closed-loop fashion by \emph{replanning} at each step, analogous to \gls{acr:mpc}, which executes only the first move of a plan before re-solving.
However, this typically incurs high computational cost, as each replanning step solves a full \gls{acr:mapf} instance from scratch.
Similarly, the \emph{windowed planning} strategy in lifelong \gls{acr:mapf}~\cite{li2021lifelong} is effectively open-loop when the planning horizon exceeds one step, and equivalent to closed-loop (with the same computational limitations) when the horizon is one step.
\end{remark}

\subsection{Actuator} \label{subsec:actuator}

The \emph{actuator} represents the physical execution layer of the \gls{acr:mapf} system, translating planned movements into actual agent motions in the environment.
In an ideal setting, each agent executes its assigned move exactly as planned at every timestep.
In practice, however, execution may be imperfect due to delays, slippage, hardware limitations, or other disturbances. In the following actuator definition, we explicitly account for imperfect execution, and a concrete example of an imperfect actuator with stochastic delay is provided in \cref{exp:actuator-stochastic-delay}. 

\begin{definition}[Actuator] \label{def:actuator}
Let~$x_t\in (V^t)^{A^t}$ be the current state and~$\hat{u}_t\in (E^t)^{A^t}$ the planned movement produced by the controller.
An \emph{actuator} is a map
\begin{equation*}
h_t : (E^t)^{A^t} \times (V^t)^{A^t} \to (E^t)^{A^t}
\end{equation*}
that outputs the \emph{executed movement}~$u_t=h_t(\hat{u}_t,x_t)$, where~$u_t(a)=\tup{x_t(a),v_{t+1}(a)}\in E^t$ for each~$a\in A^t$.
An \emph{ideal actuator} satisfies~$h_t(\hat{u}_t,x_t)=\hat{u}_t$ for all~$t$, i.e., every planned movement will be executed exactly.
\end{definition}

\subsection{Environment}
The \emph{environment} represents the physical and informational context in which the \gls{acr:mapf} system operates.
It maintains the underlying world state (graph, agent set, start/goal mappings) and evolves over time as agents move.
In the static case, the graph and agent sets remain fixed;
in more general settings, the environment can change dynamically, for example due to {stochastic agent addition} or time-varying graph topology.
At each timestep, the environment receives the executed movement~$u_t$ from the actuator, updates the system state~$x_{t+1}$ accordingly, and produces the updated \gls{acr:mapf} instance~$\cI_{t+1}$ to be used by the controller in the next planning step.

Here we define the environment, and an example of a dynamic environment is provided in \cref{exp:env-agent-addition}. 

\begin{definition}[Environment] \label{def:environment}
Let~$u_t\in (E^t)^{A^t}$ be the executed movement at time~$t$ and~$\cI_t=\tup{G^t,A^t,\rho_s^t,\rho_g^t}$ the current instance.
An \emph{environment} is a map
\begin{equation*}
f_t\colon (E^t)^{A^t}\times \setofInstances_t\to (V^{t+1})^{A^{t+1}}\times \setofInstances_{t+1}
\end{equation*}
that produces the next state~$x_{t+1}$ and the next instance~$\cI_{t+1}$.
In a \emph{perfect static environment}, the environment should satisfy
i) the instance~$\cI_t$ is time-invariant, i.e.,~$\tup{G^{t+1}, A^{t+1}, \rho_s^{t+1}, \rho_g^{t+1}}=\tup{G^t, A^{t}, \rho_s^{t}, \rho_g^{t}}$; 
ii) for any agent~$a \in A^t$ and its action $u_t(a) = (x_t(a), v_{t+1}(a))$, the state transitions evolve deterministically as~$x_{t+1}(a) = v_{t+1}(a)$.

\emph{Lifelong \gls{acr:mapf}} offers an example of dynamic environment where the environment additionally handles online goal reassignment.
If~$x_{t+1}(a_i) = \rho_g^k(a_i)$ for some~$k$, then the goal mapping is updated to~$\rho_g^{t+1}(a_i) = \rho_g^{k+1}(a_i).$

\end{definition}

\subsection{Uncertainty in \gls{acr:mapf} system}

Now that we have formalized the \gls{acr:mapf} system-level model, we can notice that it naturally accommodates imperfect execution and dynamic environments by modifying the actuator and environment transition processes defined in \cref{sec:systematic-mapf}.
We illustrate this modeling capability via two common uncertainty classes.
First, we characterize actuation uncertainty by modeling it as \emph{stochastic delay}, where agents occasionally fail to move as planned, with delays propagating through local dependencies.
Furthermore, we characterize environmental uncertainty by modeling it as \emph{stochastic agent addition}, where new agents enter the environment following a stochastic process, at different times and locations.

\subsubsection{\textbf{Stochastic delay}}
{As mentioned in Section \ref{subsec:actuator}, in practical settings}, executions can be imperfect: some agents can be delayed, producing~$u_t\neq \hat{u}_t$.
In the following, we present a possible case and how to model it.

\begin{definition}[Primary delay set]
At each timestep~$t$, let~$D_t\subseteq A^t$ be the \emph{primary delay set}, where each agent~$a_i\in A^t$ is included independently with probability~$p_{\mathrm{delay}}\in [0,1]$.
\end{definition}

A primary delay may block other agents attempting to move into the delayed agent's current vertex, creating secondary delays that propagate through the team.
This propagation can be captured by a simple graph-based model.

\begin{definition}[Planned movement dependency graph]
Given the current state~$x_t$ and planned movement~$\hat{u}_t$, the \emph{dependency graph}~$G_{\mathrm{dep}}^{t}=(A^t,E_{\mathrm{dep}}^t)$ is defined by:
\begin{equation*}
    E_{\mathrm{dep}}^t=\{\tup{a_i,a_j}\in A^t\times A^t\mid \hat{v}_{t+1}^i=v_t^j\},
\end{equation*}
where~$\hat{v}_{t+1}^i$ is the planned destination of~$a_i$, and~$v_t^j$ is the current position of~$a_j$.
An edge~$\tup{a_i,a_j}$ indicates that~$a_i$'s move depends on~$a_j$ vacating its current vertex.\footnote{Unlike the TPG in~\cite{honig2016multi}, which encodes dependencies between events across all timesteps in a complete solution, our dependency graph represents a single-timestep projection that can be computed online from the planned next-step movements of all agents.}

\end{definition}

\begin{lemma}
The set of stationary agents at time~$t$ is:
\begin{equation*}
    A_{\mathrm{stat}}^t=\{a_i\in A^t\mid \exists a_j\in D_t \text{ s.t. }a_i\to a_j \in G_{\mathrm{dep}}^t\},
\end{equation*}
where $a_i \rightarrow a_j$ denotes that $a_j$ is reachable from $a_i$ via a directed path in $G_{\mathrm{dep}}^t$.
\end{lemma}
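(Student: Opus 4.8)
The plan is to make precise what it means for an agent to be \emph{stationary} and then to recognize the claimed set as an ordinary reachability set in $G_{\mathrm{dep}}^t$. Physically, $a_i$ is forced to stay at $v_t^i$ exactly when either $a_i$ is primarily delayed, i.e. $a_i\in D_t$, or the vertex it plans to enter fails to be vacated, i.e. the unique agent $a_j$ with $v_t^j=\hat{v}_{t+1}^i$ is itself stationary. I would therefore define $A_{\mathrm{stat}}^t$ as the \emph{least} set $S\subseteq A^t$ with $D_t\subseteq S$ and closed under the rule ``$\langle a_i,a_j\rangle\in E_{\mathrm{dep}}^t$ and $a_j\in S$ imply $a_i\in S$''. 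The lemma then asserts that this least fixed point equals $R\coloneqq\{a_i\in A^t\mid a_i\to a_j\text{ for some }a_j\in D_t\}$, the set of agents that can reach a primarily delayed agent along a directed path (with the trivial length-zero path accounting for the agents in $D_t$ themselves).

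The set equality I would prove by double inclusion. For $R\subseteq A_{\mathrm{stat}}^t$, take $a_i\in R$, witnessed by a path $b_0,\ldots,b_m$ in $G_{\mathrm{dep}}^t$ with $b_0=a_i$, $b_m\in D_t$, and $\langle b_k,b_{k+1}\rangle\in E_{\mathrm{dep}}^t$, and induct on $m$: the case $m=0$ is $a_i\in D_t\subseteq A_{\mathrm{stat}}^t$, and the inductive step applies the closure rule to the edge $\langle b_0,b_1\rangle$ together with $b_1\in A_{\mathrm{stat}}^t$. For $A_{\mathrm{stat}}^t\subseteq R$, I would simply check that $R$ itself satisfies $D_t\subseteq R$ and the closure rule; minimality of $A_{\mathrm{stat}}^t$ then forces $A_{\mathrm{stat}}^t\subseteq R$. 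Together these give $A_{\mathrm{stat}}^t=R$, which is the stated formula.

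The step I expect to be the main obstacle is justifying that the \emph{least} fixed point is the physically correct stationary set, rather than a larger one, and that the realized movement is well-defined despite the self-referential rule. The danger is a directed cycle of non-delayed agents, where each agent appears ``blocked by the next'' and could be spuriously declared stationary; physically such a cycle is a rotation in which everyone moves. I would resolve this using the structure of a conflict-free planned movement $\hat{u}_t$. Since each vertex carries at most one agent, $v_t^j=\hat{v}_{t+1}^i$ determines $a_j$ uniquely, so every node of $G_{\mathrm{dep}}^t$ has out-degree at most one and $G_{\mathrm{dep}}^t$ is a functional graph whose orbits are chains ending either at an agent with an empty target or in a single cycle. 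Conflict-freeness forbids swaps, eliminating $2$-cycles, and forbids any distinct agent from planning to enter the vertex occupied by a waiting agent, so a self-loop (a voluntarily waiting agent) carries no incoming edge from a different agent. Hence a cycle reachable from some $a_i\notin R$ avoids $D_t$ and consists solely of agents that vacate simultaneously; following the unique out-edges backward from that cycle, or from an empty-target agent, a backward induction along the chain shows every agent on it executes its planned move. This certifies that the agents outside $R$ indeed move and that $u_t$ is uniquely determined, completing the identification of $A_{\mathrm{stat}}^t$ with $R$.
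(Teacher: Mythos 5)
Your proposal is correct, and its core identification --- stationary agents are exactly those that can reach a primarily delayed agent along dependency edges --- is the same recursive-propagation idea the paper relies on. The difference is one of rigor and scope. The paper's entire proof is a single informal sentence: a primary delay ``blocks its incoming neighbors\dots which recursively blocks agents reachable from it''; it simply asserts the least-fixed-point propagation that you formalize. Your double-inclusion argument (induction on path length for $R\subseteq A_{\mathrm{stat}}^t$; closure of $R$ under the blocking rule plus minimality for the converse) is the honest proof of the equality that the paper takes for granted, and it is what turns the lemma into a theorem rather than a definition in disguise. Your third paragraph goes genuinely beyond the paper: the observation that the statement is only meaningful once one commits to the \emph{least} fixed point, and that any larger fixed point would spuriously freeze rotation cycles, is a real subtlety the paper never addresses. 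Your resolution --- out-degree at most one because distinct agents occupy distinct vertices, no $2$-cycles because conflict-free plans forbid swaps, no incoming edges into self-loops by vertex-conflict-freeness, hence every cycle disjoint from $D_t$ is a rotation whose members all move --- is sound, provided you state explicitly the assumptions it leans on: that the current state $x_t$ is collision-free and that the planned movement $\hat{u}_t$ is conflict-free (both implicit in the paper's framework but never invoked in its proof). Without those assumptions the dependency graph need not be functional and the rotation analysis breaks down, although the set equality of your first two paragraphs survives unconditionally, since it is pure graph theory. In short, your proof subsumes the paper's one-liner and additionally certifies that the actuator in the paper's delay example is well defined and collision-free, which the paper leaves unargued.
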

\begin{proof}
A primary delay in~$D_t$ blocks its incoming neighbors in~$G_{\mathrm{dep}}^t$, which recursively blocks agents reachable from it.
\end{proof}

\begin{example}[Actuator with stochastic delay]
\label{exp:actuator-stochastic-delay}
Given $D_t$ and~$A_{\mathrm{stat}}^t$, the actual movement is:
\begin{equation*}
    v_{t+1}^i=\begin{cases}
        v_t^i,&a_i\in A_{\mathrm{stat}}^t\\
        \hat{v}_{t+1}^i,&\text{otherwise}.
    \end{cases}
\end{equation*}
This defines the actuator function~$u_t=h(\hat{u}_t,x_t,t,D_t)$ that directly incorporates stochastic delay.
\end{example}

\subsubsection{\textbf{Stochastic agent addition}}
In practice, the environment may change over time due to agents entering or leaving the system (e.g., to recharge or for maintenance purposes).
While removals are trivial to handle, additions introduce new potential conflicts and increase planning complexity.

\begin{example}[Environment transition with stochastic addition] \label{exp:env-agent-addition}
For instance, let~$\mathrm{Add}_t\in \{0,1\}$ be a Bernoulli random variable with~$\mathrm{Pr}(\mathrm{Add}_t=1)=p_{\mathrm{add}}$ and~~$\mathrm{Pr}(\mathrm{Add}_t=0)=1-p_{\mathrm{add}}$.
If~$\mathrm{Add}_t=1$, a new agent~$a_{\mathrm{new}}^t$ is added with start~$v_{\mathrm{n,s}}^t$ and goal $v_{\mathrm{n,g}}^t$.
Let~$\tilde{x}_{t+1}$ be the executed next state after actuation.
The environment update is:
\begin{equation*}
    x_{t+1} = \begin{cases}
            \tilde{x}_{t+1}, &\mathrm{Add}_t = 0 \\
            \tilde{x}_{t+1} \cup \{(a_{\mathrm{n}}^t, v_{\mathrm{n}, \mathrm{s}}^t)\}, & \mathrm{Add}_t = 1
        \end{cases}
\end{equation*}
The corresponding \gls{acr:mapf} instance updates as:
\begin{equation*}
\cI_{t+1}=\begin{cases}
\cI_t, &\text{Add}_t = 0 \\
\hat{\cI}_t,&\text{Add}_t = 1,
\end{cases}
\end{equation*}
where
\begin{equation*}
\hat{\cI}_t=\tup{G^t, A^t \cup \{a_{\text{n}}^t\}, \rho_s^t \cup \{\tup{a_{\text{n}}^t, v_{\text{n}, \text{s}}^t}\}, \rho_g^t \cup \{\tup{a_{\text{n}}^t, v_{\text{n}, \text{g}}^t}\}}.
\end{equation*}
\end{example}

\subsection{Unified \gls{acr:mapf} problem}

The various \gls{acr:mapf} variants introduced in the literature, often with ad hoc differences in assumptions, notation, or scope, can be expressed in a single, coherent framework by viewing them as \emph{control design problems} for the \gls{acr:mapf} system-level model introduced in \cref{sec:systematic-mapf}.
This perspective subsumes traditional one-shot and lifelong \gls{acr:mapf}, and naturally accommodates uncertainty.

\begin{problem}[Unified \gls{acr:mapf} problem]
\label{def:unified-mapf}
Let a \gls{acr:mapf} system at time $t$ be given by:
\begin{itemize}
    \item the current \gls{acr:mapf} instance $\cI_t = (G^t, A^t, \rho_s^t, \rho_g^t)$,
    \item the current state $x_t$,
    \item an actuator that maps planned movements $\hat{u}_t$ and $x_t$ to executed movements $u_t$ (\cref{def:actuator}),
    \item an environment that maps $u_t$ and $\cI_t$ to $\tup{x_{t+1}, \cI_{t+1}}$ (\cref{def:environment}).
\end{itemize}
The \emph{unified \gls{acr:mapf} problem} is the problem of designing a \emph{controller}~$g \colon (x_t, \cI_t, t) \mapsto \hat{u}_t$
that produces a planned movement $\hat{u}_t$ at each timestep $t \ge 0$ until a specified terminal condition holds (e.g.,~$t = T_{\max}$, all agents at goals, or other task-completion criteria).
\end{problem}

\begin{remark}[Solution]
From the executed state sequence~$\{x_t\}_{t=0}^T$, the solution~$\traj_{\mathrm{sol}}^A$ can be reconstructed for performance evaluation:
\begin{equation*}
\traj_{\mathrm{sol}}^{a_i} = [v^i_0, \ldots, v^i_T]
\quad\text{s.t.}\quad
\forall t \quad x_t(a_i)=v^i_t .
\end{equation*}
\end{remark}

\begin{lemma}
\label{lem:unified-mapf-examples}
Under \cref{def:unified-mapf}, the following classical \gls{acr:mapf} problems correspond to specific assumptions on the actuator, environment, and termination condition:
\begin{itemize}
    \item \emph{One-shot \gls{acr:mapf}}: Perfect actuator; static environment; termination when all agents reach their goals.
    \item \emph{Lifelong \gls{acr:mapf}}: Perfect actuator; dynamic environment with online goal updates ($\rho_g$ changes); terminates at~$T_{\max}$.
    \item \emph{\gls{acr:mapf}-DP}~\cite{ma2017lifelong}: Actuator subject to stochastic delays; perfect environment; specialized conflict definitions.
\end{itemize}
\end{lemma}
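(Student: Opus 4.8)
The plan is to treat this as a specialization-and-verification argument: for each of the three variants, I would substitute the stated assumptions on the actuator, environment, and termination condition into the \gls{acr:mapf} system (\cref{def:mapf-system}) and the unified problem (\cref{def:unified-mapf}), and then check that the resulting control-design problem coincides with the corresponding classical definition. The single observation that drives all three cases is that a perfect actuator combined with a deterministic environment makes the executed state sequence $\{x_t\}$ a deterministic function of the controller's planned movements; via the reconstruction $\traj_{\mathrm{sol}}^{a_i} = [v^i_0,\ldots,v^i_T]$ from the preceding remark, designing a controller therefore becomes equivalent to computing a set of trajectories. Under determinism the controller's entire output is predictable in advance, so there is no distinction between computing the plan offline and replanning online, which is exactly why the open-loop classical problems are recovered.

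First I would handle one-shot \gls{acr:mapf}. Setting $h_t(\hat u_t,x_t)=\hat u_t$ (perfect actuator) gives $u_t=\hat u_t$, and the static-environment clause of \cref{def:environment} fixes the instance across time and enforces $x_{t+1}(a)=v_{t+1}(a)$. Starting from $x_0(a)=\rho_s^0(a)$, the reconstructed trajectories then satisfy $v^i_0=\rho_s(a_i)$ automatically, while the termination condition ``all agents at goals'' forces $v^i_T=\rho_g(a_i)$, so every trajectory is valid in the sense of \cref{def:mapf-solution}. The common makespan is the termination time $T$, and conflict-freeness is precisely the constraint imposed on admissible controllers. Hence the controllers solving \cref{def:unified-mapf} under these assumptions are in correspondence with the solutions of \cref{prob:searching-one-shot-mapf}.

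Next I would treat lifelong \gls{acr:mapf} identically up to the environment: the actuator is again perfect, but the goal-update branch of \cref{def:environment} relabels $\rho_g^k(a_i)\to\rho_g^{k+1}(a_i)$ upon attainment, reproducing exactly the online goal-revelation mechanism of the lifelong instance. Because termination now occurs at $t=T_{\max}$, the reconstructed trajectories have length $T_{\max}$ and meet the validity requirement of the lifelong solution, while the goal-attainment times and throughput of \cref{def:throughput} are read off directly from $\{x_t\}$. This yields the claimed correspondence with \cref{prob:searching-lifelong-mapf}.

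The \gls{acr:mapf}-DP case is where I expect the real work to lie. Here the environment is again perfect and static, but the actuator is the stochastic-delay map of \cref{exp:actuator-stochastic-delay}, so $u_t\neq\hat u_t$ in general and the executed state sequence is no longer a deterministic function of the plan. The main obstacle is the phrase ``specialized conflict definitions'': the standard vertex and edge conflicts of \cref{def:path-conflicts} no longer suffice once delays are possible, since a non-delayed agent may collide with a delayed agent still occupying a vertex. To close this case I would show that the collision events made possible by the delay actuator---mediated by the dependency graph and the stationary set $A_{\mathrm{stat}}^t$---coincide with the conflict notion used in the \gls{acr:mapf}-DP model of~\cite{ma2017lifelong}. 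Establishing this semantic alignment, rather than any hard computation, is the crux of the argument.
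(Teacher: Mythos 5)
Your proposal takes essentially the same route as the paper's proof: specialize the actuator, environment, and termination condition for each variant, then verify that the reconstructed trajectories coincide with the corresponding classical definition (plan-matching execution for one-shot, online goal relabeling plus the throughput metric for lifelong, stochastic-delay actuation for MAPF-DP). Your one-shot and lifelong cases mirror the paper's almost verbatim (with slightly more explicit detail on trajectory validity and the determinism observation), and the semantic alignment you flag as the remaining crux for MAPF-DP is handled in the paper at the same level of assertion---it simply states that conflicts are ``defined accordingly'' and that trajectories match MAPF-DP execution semantics---so nothing in your plan diverges from or falls short of what the paper actually proves.
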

\begin{proof}
We obtain each classical formulation by fixing the actuator, environment, and termination condition in \cref{def:unified-mapf}.
\paragraph*{One-shot \gls{acr:mapf}}
Assume a \emph{perfect actuator},~$h_t(\hat{u}_t,x_t)=\hat{u}_t$, and a \emph{static environment},~$f_t(u_t,\cI_t)=\tup{x_{t+1},\cI_t}$ with time-invariant~$A,G,\rho_s,\rho_g$.  
Let termination occur when all agents reach their goals.
A classical solver precomputes a conflict-free plan, which the controller~$g$ executes step by step.
Because actuation is perfect and the environment does not change, execution exactly matches the plan, and termination occurs at the common makespan~$T$.  
The reconstructed solution \(\traj_{\mathrm{sol}}^A\) is identical to \cref{def:mapf-solution}, so one-shot MAPF is a special case.

\paragraph*{Lifelong \gls{acr:mapf}}
Keep the \emph{perfect actuator}, but let the \emph{environment} update goals online: if~$x_{t+1}(a)=\rho_g^k(a)$ then~$\rho_g^{t+1}(a)=\rho_g^{k+1}(a)$, while~$A,G$ remain fixed, i.e.,~$G^{t+1}=G^t$ and $A^{t+1}=A^t$.  
Termination is fixed at~$t=T_{\max}$.  
The resulting trajectory~$x_0,\dots,x_{T_{\max}}$ records each agent’s sequence of attained goals, yielding~$k_i$ per agent as in \cref{def:throughput}.  
Throughput~$\throughput=\frac{1}{T_{\max}}\sum_{a_i}k_i$ coincides with the lifelong MAPF metric.

\paragraph*{\gls{acr:mapf}-DP}
Fix a \emph{perfect environment} (static instance) and define an \emph{actuator with stochastic delay}:~$h_t$ may replace the commanded edge with a wait or propagate blocking, exactly as in the MAPF-DP model.
Conflicts are defined accordingly.  
Controllers interacting with this actuator generate trajectories matching MAPF-DP execution semantics.
\end{proof}

\begin{figure*}[t!]
    \centering
    \includegraphics[width=\linewidth, trim=3.5cm 7.5cm 3.5cm 5.5cm, clip]{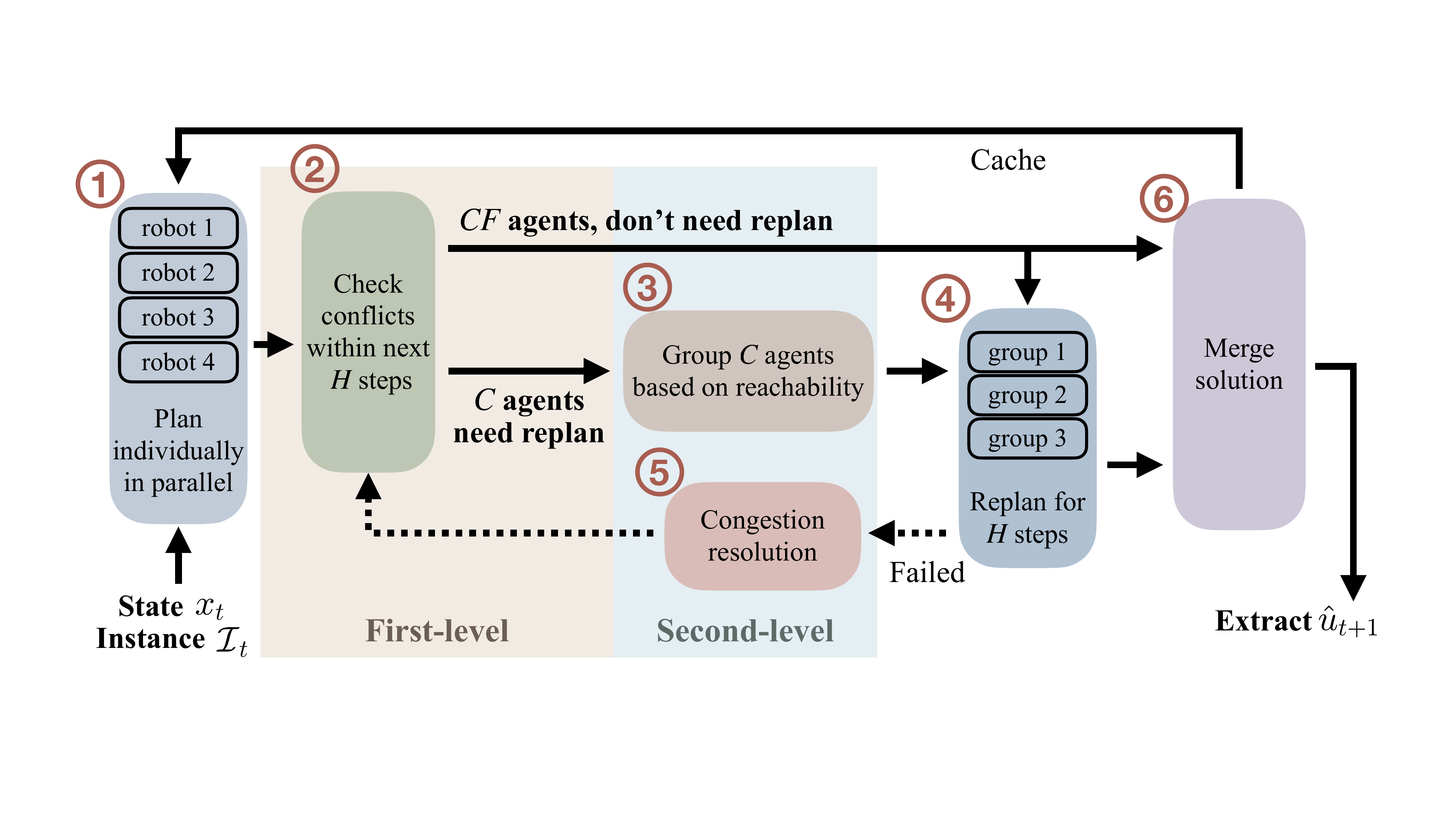}
    \caption{The algorithm plans in a receding-horizon fashion, computing and finalizing robot movements one timestep at a time. This online approach enables immediate execution of each computed step.}
    \label{fig:fico_algo}
\end{figure*}

\section{\gls{acr:fico} Algorithm}
\label{sec:fico-algo}

Since most existing \gls{acr:mapf} algorithms are based on the classical search model (\cref{sec:classical-searching-mapf}), they are inherently \emph{open-loop} and thus poorly suited for dynamic environments with unavoidable uncertainties, such as large-scale warehouses.
Robust \emph{closed-loop} alternatives remain comparatively underexplored.
By reframing \gls{acr:mapf} as a control design problem, we align it with control-theoretic practice, where closed-loop systems are standard for ensuring adaptability~\cite{franklin2010feedback}.
This perspective motivates the central question of this section: can we design computationally efficient, closed-loop algorithms that deliver the benefits of long-horizon planning without its prohibitive computational burden?

The \gls{acr:fico} algorithm addresses this question by importing ideas from receding-horizon control into the \gls{acr:mapf} domain.
Analogous to \gls{acr:mpc}, \gls{acr:fico} repeatedly solves a finite-horizon planning problem, executes only the first action, and then replans using updated state information.
This structure combines the \emph{responsiveness} and \emph{robustness} of closed-loop control with the \emph{performance advantages} of deeper lookahead.
In what follows, we first formalize the notion of lookahead in \gls{acr:mapf} (\cref{sec:lookahead-mapf}) and introduce factorization-compatible trajectories, which enable compositional factorization (\cref{sec:factorization-compatible-traj}).
We then present \gls{acr:fico} (\cref{sec:fico-algo-summary} and \cref{sec:fico-algo-sub}), detailing its main stages and computational properties.

\subsection{Lookahead in \gls{acr:mapf} algorithms}
\label{sec:lookahead-mapf}
A defining feature of any \gls{acr:mapf} solver is its \emph{lookahead horizon}, the extent to which it anticipates future agent interactions when selecting the next action.
Lookahead matters because avoiding conflicts requires reasoning not only about immediate moves but also about how agents' paths will interact over time.

Most established \gls{acr:mapf} algorithms adopt an \emph{infinite-step lookahead}: they compute a complete plan from the current state until all agents reach their goals.
This full-horizon perspective enables strong guarantees such as optimality, completeness, or both~\cite{sharon2015conflict,li2021eecbs, okumura2023lacam, okumura2023improving}, but at a steep computational cost.
Moreover, such algorithms operate as open-loop controllers: once a plan is computed, execution follows it rigidly unless external disruptions force replanning.
In dynamic or uncertain environments, this means frequent replanning from scratch, which is often computationally prohibitive.
At the opposite extreme are \emph{one-step lookahead} methods, such as PIBT~\cite{okumura2022priority}.
Such algorithms consider only the immediate next move of all agents, typically guided by greedy heuristics.
Their advantages are clear: they are naturally closed-loop, lightweight, and reactive.
Yet their short horizon can lead to myopic decisions and poor performance in key metrics such as \gls{acr:soc} and throughput.

The \gls{acr:fico} algorithm is designed to bridge these extremes.
It employs an~$H$-step lookaehad within a closed-loop architecture, directly inspired by receding-horizon control.
At each timestep, i) \gls{acr:fico} plans~$H$ steps into the future, incorporating richer foresight than one-step methods, ii) executes only the first step before replanning, preserving adaptability, and iii) keeps computation tractable compared to full-horizon solvers.
Crucially, finite lookahead is not just a trade-off between foresight and cost: it also enables \emph{efficient hierarchical factorization}.
Because \gls{acr:fico} only needs to guarantee conflict-freeness over~$H$ steps, the global \gls{acr:mapf} problem can be decomposed into smaller, independent subproblems that are solved in parallel and then composed without conflicts.
Correct composition is ensured by requiring that each subproblem's solution remains compatible with higher-priority plans.
\subsection{Factorization-compatible trajectories} 
\label{sec:factorization-compatible-traj}

Factorization has long been recognized as a powerful strategy for tackling otherwise intractable problems: by decomposing a large system into smaller subproblems, one can exploit structure and solve each component more efficiently.
This principle has been applied in diverse areas such as game theory~\cite{zanardi2022factorization} and motion planning~\cite{zanardi2023factorization}.
Yet, despite its promise, it has been underutilized in \gls{acr:mapf}.
The reason lies in the role of lookahead (\cref{sec:lookahead-mapf}).
With infinite-step lookahead, every agent’s trajectory must be planned until its goal is reached, which tightly couples all agents and leaves little room for decomposition.
The~$H$-step lookaehead of \gls{acr:fico} changes this picture.
By limiting the planning horizon, the entanglement across agents can be reduced, exposing the system's inherent compositionality.
In fact, large-scale \gls{acr:mapf} problems typically exhibit \emph{localized congestion}: while a few regions are densely populated and require coordination, the majority of agents move independently and can be planned in isolation, at least temporarily (as shown later in~\cref{tab:agent-number-reduction}).
This structural property makes factorization particularly attractive.
To exploit it, \gls{acr:fico} introduces a dual-level factorization.
In the first level, agents whose trajectories do not interact with others, termed conflict-free (CF) agents, are factored out and handled separately, since their plans can be generated independently without affecting the rest of the system.
The second level focuses only on the remaining conflict (C) agents, whose interactions are nontrivial.
Here, we apply a reachability-based factorization that clusters agents according to their potential conflicts and coordinates their motions jointly.

A key challenge is ensuring that the two levels remain consistent.
Trajectories planned for CF agents must not be invalidated when combined with those of the C agents.
We formalize this requirement through the notion of \emph{factorization-compatible trajectories}:~$H$-step plans for one subset of agents that can be composed with those of another without introducing new conflicts.
This concept provides the theoretical backbone for \gls{acr:fico}'s compositional approach, guaranteeing that independently generated solutions can be assembled into a coherent global plan.

\begin{definition}[Factorization-compatible trajectories]
  \label{def:cond-traj-single}
  Consider a set of agents $A' \subsetneq A$ and an agent $a_i \in A \setminus A'$. 
  Let~$\setoftraj^{A'} = \{\traj^{a_j} \mid a_j \in A'\}$ denote a set of trajectories for agents in~$A'$ given a priori. 
  A trajectory~$\traj^{a_i}$  is \emph{factorization-compatible} given~$\setoftraj^{A'}$, denoted~$\traj^{a_i} \mid \setoftraj^{A_1}$, if it is conflict-free with every trajectory in~$\setoftraj^{A'}$ over the next~$H$ timesteps.
\end{definition}

\begin{definition}[Factorization-compatible trajectory set]
  \label{def:cond-traj-set}
  Let~$A',A''$ be disjoint subsets of~$A$, and let~$\setoftraj^{A'}$ be a given set of trajectories. 
  A set of trajectories~$\setoftraj^{A''}$ is \emph{factorization-compatible} with~$\setoftraj^{A'}$, denoted~$\setoftraj^{A''} \mid \setoftraj^{A'}$, if for each~$a_i \in A''$, its trajectory~$\traj^{a_i} \in \setoftraj^{A''}$ satisfies \cref{def:cond-traj-single}.
\end{definition}

\begin{remark}[Relation to agent priority]
In \gls{acr:fico}'s hierarchical factorization,~$A'$ represents agents assigned higher priority in the iteration at hand.
Their trajectories are fixed for the horizon~$H$, and must remain unaffected by the replanning of~$A''$.
By ensuring~$\setoftraj^{A''}\mid \setoftraj^{A'}$, we guarantee that subproblem solutions can be composed into a globally feasible~$H$-step plan.
This property is a key enabler for \gls{acr:fico} scalability, as it allows the parallel solution of smaller subproblems while preserving correctness. 
\end{remark}

\begin{lemma}[Two-way composition]
\label{lem:twoway}
Let~$A',A''$ be disjoint.
If~$\setoftraj^{A'}$ and~$\setoftraj^{A''}$ are each internally conflict-free over the next~$H$ steps and~$\setoftraj^{A''}\mid \setoftraj^{A'}$, then~$\setoftraj^{A'}\cup \setoftraj^{A''}$ is conflict-free over the next~$H$ steps.
\end{lemma}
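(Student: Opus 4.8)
The plan is to reduce the conflict-freeness of the combined trajectory set to a pairwise statement and then dispatch each pair against one of the three hypotheses. First I would observe that, by \cref{def:path-conflicts} and the trajectory conflict convention, a set of trajectories is conflict-free over the next $H$ timesteps precisely when every \emph{pair} of distinct agents in the set is mutually conflict-free over that window; there is no higher-order interaction, since both the vertex and edge conflict conditions are defined on pairs of paths. Hence it suffices to fix an arbitrary pair of distinct agents $a_i, a_j \in A' \cup A''$ and verify that $\traj^{a_i}$ and $\traj^{a_j}$ exhibit neither a vertex nor an edge conflict over the next $H$ timesteps.

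Next I would carry out the case analysis induced by the partition $A' \sqcup A''$; disjointness, which is part of the hypothesis, is exactly what makes these cases exhaustive and mutually exclusive. If both $a_i, a_j \in A'$, they are conflict-free by the internal conflict-freeness of $\setoftraj^{A'}$; symmetrically, if both lie in $A''$, they are conflict-free by the internal conflict-freeness of $\setoftraj^{A''}$. In the remaining cross case, say $a_i \in A''$ and $a_j \in A'$, the hypothesis $\setoftraj^{A''} \mid \setoftraj^{A'}$ (\cref{def:cond-traj-set}) specializes via \cref{def:cond-traj-single} to the statement that $\traj^{a_i}$ is conflict-free with \emph{every} trajectory in $\setoftraj^{A'}$ over the next $H$ timesteps, and in particular with $\traj^{a_j} \in \setoftraj^{A'}$.

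The only point needing a moment's care is that \cref{def:cond-traj-set} is phrased directionally ($A''$ compatible \emph{given} $A'$), whereas conflict-freeness is a symmetric relation between two trajectories; I would note that the vertex and edge conditions of \cref{def:path-conflicts} are invariant under swapping the two paths, so ``$\traj^{a_i}$ conflict-free with $\traj^{a_j}$'' and the reverse coincide, and the cross case is settled regardless of which of $a_i, a_j$ lies in $A''$. Since every pair falls into exactly one of the three cases and each yields conflict-freeness, the union $\setoftraj^{A'} \cup \setoftraj^{A''}$ is conflict-free over the next $H$ timesteps. I do not anticipate a genuine obstacle: the lemma is essentially the bookkeeping fact that conflict-freeness is a pairwise property, so a union is conflict-free once all same-set pairs are controlled internally and all cross-set pairs are controlled by the factorization-compatibility hypothesis.
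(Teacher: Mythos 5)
Your proof is correct and takes essentially the same approach as the paper's: a pairwise case analysis in which same-set pairs are covered by internal conflict-freeness and cross-set pairs by the factorization-compatibility hypothesis, exactly as the paper argues (if more tersely). Your extra remark that vertex and edge conflicts in \cref{def:path-conflicts} are symmetric under swapping the two paths, so the directional phrasing of \cref{def:cond-traj-set} is harmless, is a detail the paper leaves implicit but does not change the argument.
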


\begin{proof}
By definition of factorization-compatibility, no~$a'\in A''$ conflicts with any~$a\in A'$.
Internal conflict-freeness rules out conflicts within~$A',A''$. 
Hence, no pair in~$A'\cup A''$ conflicts over the horizon.
\end{proof}

\begin{lemma}[Composition]
\label{lem:comp-fact}
Let~$\tup{A_1,\ldots,A_m}$ be a partition of~$A$.
Suppose, for each~$k=1,\ldots,m$,~$\setoftraj^{A_k}$ is internally conflict-free.
For any pairs $i,j\in\{1,\ldots,m\}$, $i \neq j$, $\setoftraj^{A_i} \mid \setoftraj^{A_j}$. 
Then,~$\bigcup_{k=1}^{m}\setoftraj^{A_k}$ is conflict-free over the next~$H$ steps.
\end{lemma}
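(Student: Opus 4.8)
The plan is to reduce the $m$-block statement to the already-established two-block case (\cref{lem:twoway}) by induction on the number of blocks $m$. The structural fact I will lean on throughout is that conflict-freeness of a trajectory set is a purely \emph{pairwise} property: by \cref{def:path-conflicts}, both vertex and edge conflicts are defined between a single pair of trajectories, so a set of trajectories is conflict-free over the next $H$ steps if and only if every unordered pair of its trajectories is. This observation is what lets me assemble the global claim from the internal and cross-block hypotheses.

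For the induction, the base case $m=1$ is immediate, since $\bigcup_{k=1}^{1}\setoftraj^{A_k}=\setoftraj^{A_1}$ is internally conflict-free by assumption. For the inductive step, set $B=\bigcup_{k=1}^{m-1}A_k$ with trajectory set $\setoftraj^{B}=\bigcup_{k=1}^{m-1}\setoftraj^{A_k}$. Applying the induction hypothesis to the partition $\tup{A_1,\ldots,A_{m-1}}$ of $B$ shows that $\setoftraj^{B}$ is internally conflict-free over the next $H$ steps. Since $\tup{A_1,\ldots,A_m}$ is a partition, $A_m$ and $B$ are disjoint, and $\setoftraj^{A_m}$ is internally conflict-free by hypothesis, so two of the three premises of \cref{lem:twoway} are already in place.

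It then remains to verify $\setoftraj^{A_m}\mid\setoftraj^{B}$. By \cref{def:cond-traj-set}, this requires that each $\traj^{a_i}$ with $a_i\in A_m$ be conflict-free with every trajectory in $\setoftraj^{B}$ over the next $H$ steps. Fix such an $a_i$ and any $\traj^{a_j}\in\setoftraj^{B}$; then $a_j\in A_k$ for a unique $k<m$, and the hypothesis supplies $\setoftraj^{A_m}\mid\setoftraj^{A_k}$ directly (it is quantified over all ordered pairs of distinct blocks), so $\traj^{a_i}$ and $\traj^{a_j}$ do not conflict over the horizon. As $\setoftraj^{B}$ is the union of the $\setoftraj^{A_k}$, this establishes $\setoftraj^{A_m}\mid\setoftraj^{B}$. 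Applying \cref{lem:twoway} to the disjoint, internally conflict-free sets $\setoftraj^{A_m}$ and $\setoftraj^{B}$ with $\setoftraj^{A_m}\mid\setoftraj^{B}$ then yields that $\setoftraj^{A_m}\cup\setoftraj^{B}=\bigcup_{k=1}^{m}\setoftraj^{A_k}$ is conflict-free over the next $H$ steps, completing the induction.

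I do not expect a genuine obstacle here; the lemma is a routine $m$-fold iteration of \cref{lem:twoway}. The only point requiring a little care is the bookkeeping when promoting the block-by-block cross-compatibility $\setoftraj^{A_m}\mid\setoftraj^{A_k}$ to compatibility with the merged block $\setoftraj^{B}$ — this hinges precisely on the pairwise characterization of conflict-freeness noted in the first paragraph, and (if one prefers to avoid relying on the ambient quantification over ordered pairs) on the symmetry of conflicts between two trajectories, so that $\setoftraj^{A_i}\mid\setoftraj^{A_j}$ may be read in whichever direction the induction requires. An alternative, non-inductive route gives the same conclusion in one stroke: pick any two distinct agents in $\bigcup_k A_k$; if they lie in the same block they are conflict-free by internal conflict-freeness, and if they lie in distinct blocks they are conflict-free by the corresponding cross-block compatibility, so every pair is conflict-free and hence the union is.
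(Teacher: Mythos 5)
Your proof is correct and follows essentially the same route as the paper: induction on $m$, reducing to the two-block case of \cref{lem:twoway}. The only difference is one of care rather than substance—you explicitly verify the promotion of the block-by-block hypothesis $\setoftraj^{A_m}\mid\setoftraj^{A_k}$ to compatibility with the merged block $\setoftraj^{B}$ (a step the paper's one-line proof leaves implicit), and your closing pairwise argument is just a compressed version of the same observation.
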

\begin{proof} 
This can be easily proved by induction on~$m$.
The base~$m=2$ is \cref{lem:twoway}.
For the induction step, compose~$\bigcup_{j<k}\setoftraj^{A_j}$ (inductive hypothesis) with~$\setoftraj^{A_k}$ using \cref{lem:twoway}.
\end{proof}

\subsection{Summary of \gls{acr:fico}} \label{sec:fico-algo-summary}

\label{sec:fico-summary}
We now present the \gls{acr:fico} algorithm, a closed-loop \gls{acr:mapf} controller that unifies the~$H$-step receding-horizon principle from \cref{sec:lookahead-mapf} with the factorization-compatible insights from \cref{sec:factorization-compatible-traj}.
At each timestep~$t$, \gls{acr:fico} is given the current \gls{acr:mapf} instance~$\cI$, the current state~$x_t$, and the finite horizon length~$H$.
The goal is to compute the planned movement~$\hat{u}_t$, the first action for every agent, such that i) it is part of a feasible~$H$-step plan from the current state, ii) it respects factorization-compatibility constraints between subgroups, and iii) it can be recomputed at~$t+1$ using updated information, in receding-horizon fashion.

\paragraph{High-level structure}
The algorithm proceeds in six distinct stages (see \cref{fig:fico_algo} and \cref{alg:fico}).
It starts with \emph{parallel individual planning} \circled{1}, where each agent generates its own~$H$-step greedy-optimal path toward its goal.
This leverages a balanced perfect heuristic to match~$A^\star$ optimal path distribution without its full overhead.
The resulting set of trajectories is then analyzed through \emph{first-level factorization} \circled{2}: spatial hashing detects all conflicts within the next~$H$ steps, separating agents into conflict-free and conflicting sets. 
Trajectories of conflict-free agents are fixed as a high priority, in the sense of \cref{def:cond-traj-single}, and will not be altered by subsequent replanning.
Conflicting agents are then refined through \emph{second-level factorization} \circled{3}, where they are grouped according to conflicts in their horizon-limited reachable regions.
This grouping, supported by \cref{lem:reach}, ensures that each group can be replanned independently without creating inter-group conflicts.
Within each group, \emph{factorization-compatible replanning} \circled{4} is performed in parallel for the~$H$-step horizon, explicitly compatible with the fixed trajectories of conflict-free groups and any previously finalized higher-priority groups to preserve factorization-compatibility as in \cref{def:cond-traj-set}.
Such replanning leverages an adapted version of PIBT that incorporates hindrance-aware neighbor selection to improve performance in dense scenarios.
When a group fails to produce a compatible plan, typically due to space being fully occupied by higher-priority trajectories, \gls{acr:fico} triggers \emph{congestion resolution} \circled{5}, expanding the group to include nearby high-priority agents, thereby relaxing constraints and increasing the available maneuvering space.
This expansion is repeated until a feasible plan emerges, and in the worst case the process degenerates to classical PIBT, which can always produce a next step if one exists.
Finally, the algorithm performs \emph{merging and execution} \circled{6}, combining all~$H$-step subplans into a globally conflict-free plan leveraging \cref{lem:comp-fact}.
Only the first step of each trajectory is executed, with the remaining~$H-1$ steps serving as a receding-horizon reference for the next planning cycle.

\subsection{Details of \gls{acr:fico} algorithm} \label{sec:fico-algo-sub}

\makeatletter
\xpatchcmd{\algorithmic}{\itemsep\z@}{\itemsep=0.75pt}{}{}
\makeatother

\begin{algorithm}[t]
\resizebox{\columnwidth}{!}{
\begin{minipage}{\columnwidth}
    \small
    \caption{\gls{acr:fico} algorithm}
    \label{alg:fico}
    \begin{algorithmic}[1]
    \Require {Current instance $\cI_t = \tup{G^t,A^t,\rho_s^t, \rho_g^t}$, current state $x_t$, horizon size $H$} 
    \Ensure Planned movements $\hat{u}_t$
    
    \State $\calP_t = \{\tup{a_i, \mathsf{False}} \mid a_i \in A^t\}$ \Comment{Agent labels for \circled{5}}
    \State $\vtau_{\text{plan}}^{A^t} \gets \emptyset$

    \LComment{\textbf{Step \circled{1}: Parallel individual planning (\cref{alg:individual-optimal-planning})}}
    \State $\hat{\vtau}_{\text{plan}}^{A^t} \gets \Call{IndividualOptimalPlanning}{\cI_t, x_t, H}$ 
  
    \While{$\mathsf{True}$}
        \State $\mathrm{planning\_success} \gets \mathsf{True}$

        \LComment{\textbf{Step \circled{2}: Conflict detection (\cref{alg:conflict-detection})}}
        \State $A_{\mathrm{CF}}^t, A_{\mathrm{C}}^t \gets \Call{DetectConflicts}{\cI_t, \hat{\vtau}_{\text{plan}}^{A^t}, H, \calP_t}$
  
        \State $\vtau_{\text{plan}}^{A_{\mathrm{CF}}^t} \gets \hat{\vtau}_{\text{plan}}^{A_{\mathrm{CF}}^t}$
  
        
        \LComment{\textbf{Step \circled{3}: Reachability grouping (\cref{alg:reachability-grouping})}}
        \State $\mathcal{A}_{C}^t \gets \Call{ReachabilityGrouping}{\cI_t, A_{\mathrm{C}}^t, x_t, H}$

        \LComment{\textbf{Step \circled{4}: Parallel factorization-compatible planning}}
        \ForAll{$A_{\mathrm{C}}^{t,k} \in \mathcal{A}_{C}^t$}
        
            \State $\pi^{t,k} \gets \Call{PrioritiesComputation}{\cI_t, A_{\mathrm{C}}^{t,k}}$
            \State $\widetilde{\vtau}_{\text{plan}}^{A_{\mathrm{C}}^{t,k}} \gets \Call{FactorizationCompatiblePlanning}{\newline 
            \cI_t, A_{\mathrm{C}}^{t,k}, \vtau_{\text{plan}}^{A_{\mathrm{CF}}^t}, x_t, \pi^{t,k}, H}$

            \LComment{\textbf{Step \circled{5}: Congestion resolution}}
            \If{$\widetilde{\vtau}_{\text{plan}}^{A_{\mathrm{C}}^{t,k}} = \emptyset$}
                \State $\mathrm{planning\_success} \gets \mathsf{False}$
                \State $\Call{CongestionResolution}{\cI_t, \vtau_{\text{plan}}^{A^t}, x_t, A_{\mathrm{C}}^t, A_{\mathrm{CF}}^t, \newline 
                H, \calP_t}$ 
                \State \textbf{break}
            \Else
            \State $\vtau_{\text{plan}}^{A_{\mathrm{C}}^{t,k}} \gets \widetilde{\vtau}_{\text{plan}}^{A_{\mathrm{C}}^{t,k}}$
            \EndIf
        \EndFor

        \LComment{\textbf{Step \circled{6}: Merging}}
        \If{$\mathrm{planning\_success}$}
            \State $\hat{\vu}_t \gets \left\{(a_i, \tup{\tau_{\text{plan}}^{a_i}[0], \tau_{\text{plan}}^{a_i}[1]}) \mid a_i \in A^t\right\}$
            
            \State \Return{$\hat{\vu}_t$}
        \EndIf
    \EndWhile
    \end{algorithmic}
\end{minipage}
}
\end{algorithm}

We now detail each stage, linking its design choices to the formal concepts introduced in \cref{sec:lookahead-mapf,sec:factorization-compatible-traj} and to the guarantees they preserve.

\myparagraph{\circled{1} Parallel individual planning} -- 
Given the current instance~$\cI_t$ and state~$x_t$, each agent~$a_i\in A^t$ computes an individual~$H$-step trajectory~$\traj^{a_i}$ from its current vertex~$v_t^i$ toward its goal~$\rho_g^t(a_i)$, ignoring all other agents (Algorithm \ref{alg:individual-optimal-planning}).
Formally, this is the solution of the single-agent~$H$-step planning problem, obtained by restricting the \gls{acr:mapf} instance to~$a_i$ alone, as in \cref{sec:systematic-mapf}.
Because the factorization stages that follow rely on high-quality individual trajectories, we seek cost-optimal solutions for this single-agent subproblem and select one of them uniformly to decrease congestion.
To achieve this efficiently, \gls{acr:fico} leverages a {balanced perfect heuristic}~$\gamma(v,a_i)$ (improved from~\cite{okumura2022priority}) to guide search.
The heuristic gives the exact shortest-path distance from~$v$ to~$\rho_g^t(a_i)$ and is computed lazily via a backward Breadth-First Search (BFS) seeded at the goal, avoiding unnecessary preprocessing (see \cref{alg:perfect_distance_heuristic}). 
When multiple shortest paths exist, it is important to select among them uniformly to reduce the likelihood of conflicts. 
If each agent follows this principle of uniformity, the probability that two agents choose the same vertex or edge at the same time decreases. 
However, when a uniform random tie-breaking rule is applied, where equally optimal edges are selected uniformly at each vertex, such heuristics guarantee individual optimality but do not reproduce the uniform optimal path distribution of exhaustive~$A^\star$ search: some optimal paths may be favored due to tie-breaking biases, as shown in~\cref{fig:uniform_tie_breaker}.
To recover uniformity, we define the \emph{candidate set} for~$v$ as
\begin{equation}
\label{eq:candidates}
\candidates(v) \;=\; \{ v'\in \neighbors(v) \mid \gamma(v',a_i) = \gamma(v,a_i) - 1 \},
\end{equation}
and the \emph{optimal path count}~$c(v,a_i)$ as the number of shortest paths from~$v$ to~$\rho_g^t(a_i)$.
It can be computed efficiently via the lazy dynamic-programming recursion 
\begin{equation}
\label{eq:opt-path-count}
c(v,a_i) \;=\; \textstyle{\sum_{v' \in \candidates(v)} c(v',a_i)},
\end{equation}
using the same backward BFS order as for~$\gamma$ (see \cref{alg:optimal_path_count}).

The following result formalizes the key property of the~$c(\cdot,a_i)$-based tie-breaker: it exactly reproduces the optimal-path distribution of exhaustive~$A^\star$ search.

\begin{lemma}[Optimal path distribution equivalence]
\label{lem:optimal-path-distribution}
Let~$G^t$ be the current graph,~$a_i\in A^t$ an agent, and~$\rho_g^t(a_i)$ its goal.
If at each step the agent chooses its next vertex~$\tilde{v}\in\candidates(v)$ with probability
\begin{equation}
\label{eq:balance-tie-breaker}
\mathrm{Pr}(\tilde{v},a_i) \;=\;
\frac{c(\tilde{v},a_i)}{\sum_{v' \in \candidates(v)} c(v',a_i)},
\end{equation}
then the resulting distribution over complete shortest paths from~$v$ to~$\rho_g^t(a_i)$ is uniform.
\end{lemma}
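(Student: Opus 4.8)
The plan is to exploit the multiplicative structure of the sampling rule \eqref{eq:balance-tie-breaker} together with the recursion \eqref{eq:opt-path-count} defining $c(\cdot,a_i)$, so that the probability of generating any one complete shortest path telescopes to a value that does not depend on the path. First I would fix the start vertex $v$, set $L \coloneqq \gamma(v,a_i)$ and $g \coloneqq \rho_g^t(a_i)$, and observe from the definition of $\candidates$ in \eqref{eq:candidates} that every sampled step moves to a neighbor whose distance-to-goal decreases by exactly one. Consequently the process stays on the shortest-path DAG, reaches $g$ after exactly $L$ steps with probability one, and its support is precisely the set of complete shortest paths $P = (v_0,v_1,\dots,v_L)$ with $v_0=v$, $v_L=g$, and $v_{k+1}\in\candidates(v_k)$ for every $k$.

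Next, for such a path $P$ I would write its probability as the product of the per-step choice probabilities from \eqref{eq:balance-tie-breaker},
\[
\mathrm{Pr}(P) \;=\; \prod_{k=0}^{L-1} \frac{c(v_{k+1},a_i)}{\sum_{v'\in\candidates(v_k)} c(v',a_i)} .
\]
The crux is that, by the recursion \eqref{eq:opt-path-count}, the denominator at step $k$ equals exactly $c(v_k,a_i)$, so the product collapses to the telescoping chain
\[
\mathrm{Pr}(P) \;=\; \prod_{k=0}^{L-1} \frac{c(v_{k+1},a_i)}{c(v_k,a_i)} \;=\; \frac{c(v_L,a_i)}{c(v_0,a_i)} .
\]
Using the base case $c(g,a_i)=1$ (the goal has the single trivial shortest path to itself, and a backward-BFS initialization of the recursion assigns it count one), this evaluates to $1/c(v,a_i)$.

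Finally I would note that $c(v,a_i)$ is by definition the total number of shortest paths from $v$ to $g$, so $\mathrm{Pr}(P)=1/c(v,a_i)$ is identical for every complete shortest path $P$; hence the induced distribution over such paths is uniform, reproducing the optimal-path distribution of exhaustive $A^\star$ search. The argument needs no heavy machinery: the only points requiring care are (i) verifying that the sampling never leaves the shortest-path DAG and always terminates at $g$, so that the support is exactly the set of shortest paths and the stated recursion is well-founded, and (ii) correctly invoking \eqref{eq:opt-path-count} to identify each normalizing denominator with $c(v_k,a_i)$. This identification of the normalizing constant is the one place where the recursion is genuinely used and is what makes the product telescope; I expect it to be the main (and essentially only) subtlety, while the remaining steps are routine verification.
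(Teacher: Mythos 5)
Your proof is correct and is essentially the paper's own argument in unrolled form: the paper proves $\mathrm{Pr}(\text{path})=1/c(v,a_i)$ by induction on the remaining distance to the goal, and the inductive step uses exactly the identity $\mathrm{Pr}(v_{k+1}\mid v_k)=c(v_{k+1},a_i)/c(v_k,a_i)$ (i.e., recognizing the normalizing denominator as $c(v_k,a_i)$ via the counting recursion) that drives your telescoping product. The two arguments coincide in their key step and conclusion, so no further comparison is needed.
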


\begin{proof}
We can prove this by induction on~$d=\gamma(v,a_i)$, the remaining distance to the goal.
Consider the base case~$d=0$: the only path from~$v=\rho_g^t(a_i)$ to itself is the trivial path of length zero, so the distribution is trivially uniform.
Looking at the inductive step, assume the claim holds for all vertices with~$\gamma(\cdot,a_i)<d$.
Consider~$v$ with~$\gamma(v,a_i)=d$.
The probability of selecting a particular shortest path~$\tup{v,v_1,\ldots,v_d}$ is
    $\mathrm{Pr}\tup{v,v_1,\ldots,v_d}=\mathrm{Pr}(v_1\mid v)\cdot \mathrm{Pr}(\tup{v_1,\ldots,v_d)\mid v_1}.$
By the tie-breaking rule,
\begin{equation*}
    \mathrm{Pr}(v_1\mid v)=\frac{c(v_1,a_i)}{\sum_{u\in \candidates(v)}c(u,a_i)}.
\end{equation*}
By definition of~$c(\cdot,a_i)$, we have
\begin{equation*}
    c(v,a_i)=\textstyle \sum_{u\in \candidates(v)}c(u,a_i),
\end{equation*}
so~$\mathrm{Pr}(v_1\mid v)=c(v_1,a_i)/c(v,a_i)$.
By the inductive hypothesis, conditional on choosing~$v_1$, all shortest paths from~$v_1$ to the goal are equally likely, each with probability~$1/c(v_1,a_i)$.
Thus:
\begin{equation*}
    \mathrm{Pr}\tup{v,v_1,\ldots,v_d}=\frac{c(v_1,a_i)}{c(v,a_i)}\cdot \frac{1}{c(v_1,a_i)}=\frac{1}{c(v,a_i)}.
\end{equation*}
Since this expression does not depend on the particular path~$\tup{v,v_1,\ldots,v_d}$, all~$c(v,a_i)$ shortest paths from~$v$ to~$\rho_g^t(a_i)$ are chosen with equal probability.
\end{proof}

\begin{remark}
\cref{lem:optimal-path-distribution} guarantees that, in the absence of conflicts, \gls{acr:fico}'s parallel individual planning produces a uniform distribution over optimal single-agent trajectories, exactly matching the path-selection behavior of exhaustive~$A^\star$ search but at significantly lower computational cost.
This ensures that the subsequent first-level factorization stage~\circled{2} starts from high-quality, unbiased trajectories, maximizing the chance that many agents will be finalized as conflict-free.
\end{remark}

\begin{figure}[tb]
    \centering
    \subfloat[Random tie-breaking]{\includegraphics[width=0.47\linewidth]{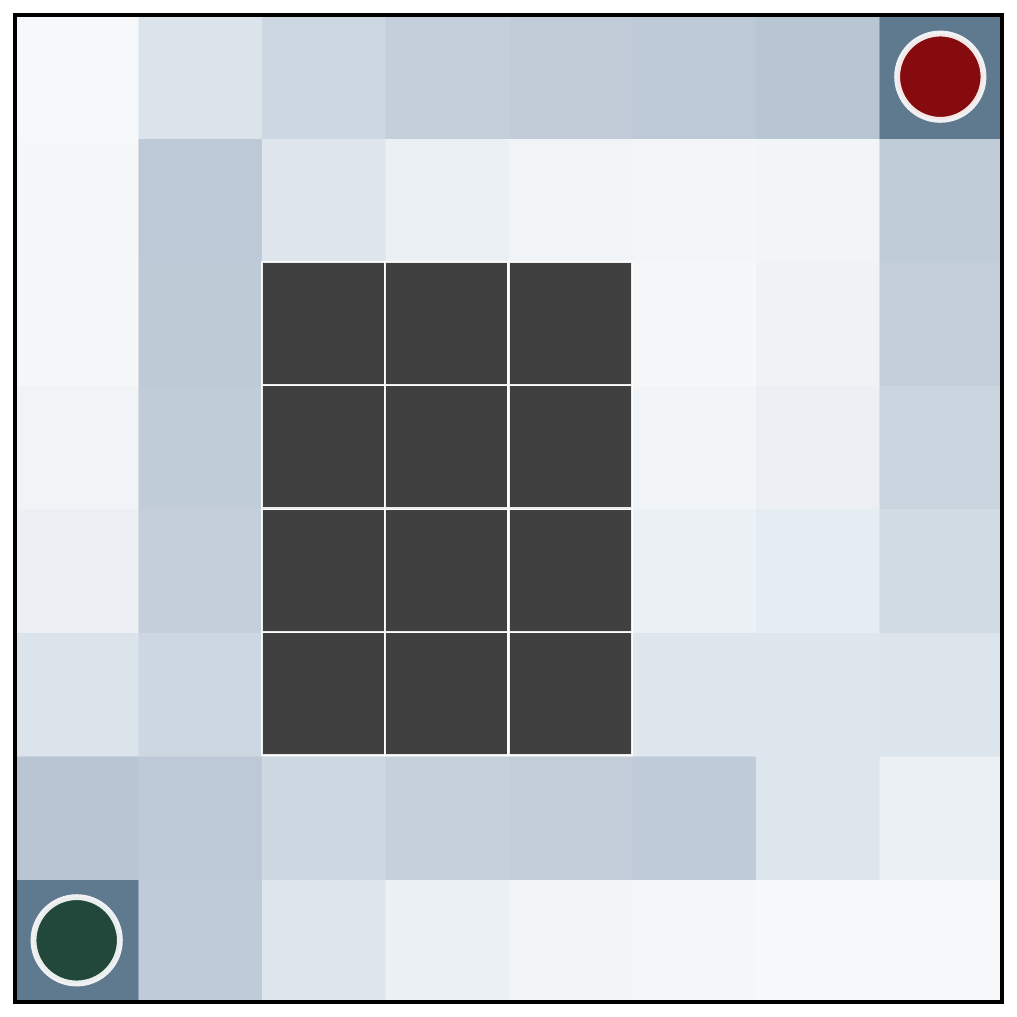}} \hfill
    \subfloat[Balanced tie-breaking \& $A^*$]{\includegraphics[width=0.47\linewidth]{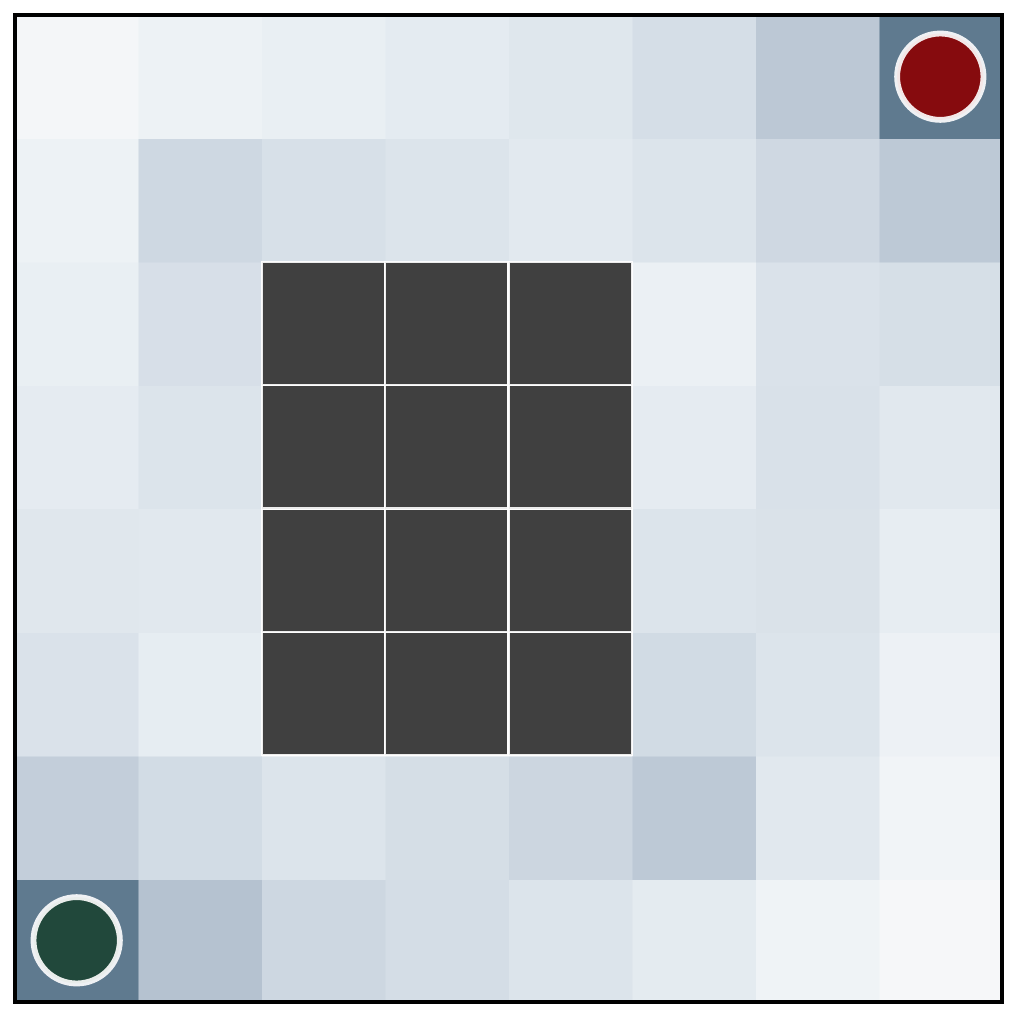}} 
    \caption{\textbf{Effect of tie-breaking strategies on trajectory distribution.} 
    The green and red dots mark the start and goal of the individual path-finding problem. 
    Colors denote the frequency of vertex visits across 2000 Monte-Carlo trials. 
    While both strategies yield individually optimal trajectories, random tie-breaking produces a skewed distribution of visits (std~$=387.1$) compared to the more uniform distribution under balanced tie-breaking and~$A^*$ (std~$=298.0$).}
    \label{fig:uniform_tie_breaker}
\end{figure}

\begin{algorithm}[tb]
    \resizebox{\columnwidth}{!}{
    \begin{minipage}{1\columnwidth}
    \small
    \caption{Individual optimal planning}
    \label{alg:individual-optimal-planning}
    \begin{algorithmic}[1] 
    
    \Require{Instance $\cI_t = \tup{G^t,A^t,\rho_s^t, \rho_g^t}$, current state $x_t$, horizon $H$}
    \Ensure{Individual optimal paths for the next $H$ steps $\hat{\vtau}_{\mathrm{plan}}^{A^t}$}

    \State $\hat{\vtau}_{\mathrm{plan}}^{A^t} \leftarrow \{\}$
    \ForAll{$a_i \in A^t$}
        \State $v_{\mathrm{current}}^{a_i} \leftarrow x_t(a_i)$
        \State $\hat\tau_{\mathrm{plan}}^{a_i} \leftarrow [v_{\mathrm{current}}^{a_i}]$
        \ForAll{$t' \leftarrow 1 \ldots H$}
            \LComment{Individual planning leveraging \cref{eq:candidates} and \cref{lem:optimal-path-distribution}}
            \State $v_{\mathrm{next}}^{a_i} \leftarrow $\Call{BalancedOptimalPlanning}{$\cI_t, v_{\mathrm{current}}^{a_i}, a_i$}
            \State $\hat\tau_{\mathrm{plan}}^{a_i}.\textbf{push}(v_{\mathrm{next}}^{a_i})$
            \State $v_{\mathrm{current}}^{a_i} \leftarrow v_{\mathrm{next}}^{a_i}$
        \EndFor
        \State $\hat{\vtau}_{\mathrm{plan}}^{A^t} \leftarrow \hat{\vtau}_{\mathrm{plan}}^{A^t} \cup \{\hat\tau_{\mathrm{plan}}^{a_i}\}$
    \EndFor
    \State \Return{$\hat\vtau_{\mathrm{plan}}^{A^t}$} 

    \Statex

    \Function{BalancedOptimalPlanning}{$\cI_t, v, a$}
    \LComment{Get candidate neighboring vertices as per~\cref{eq:candidates}}
    \State $\mathrm{cand}(a) \leftarrow \Call{GetCandidateNeighbors}{\cI_t,x_t,a}$
    \LComment{Choose the next vertex as per~\cref{eq:balance-tie-breaker} in~\cref{lem:optimal-path-distribution}}
    \State $v_{\mathrm{next}} \leftarrow \Call{WeightedSampling}{\cI_t,x_t,a}$
    \State \Return $v_{\mathrm{next}}$
    \EndFunction
    
    \end{algorithmic}
    \end{minipage}
}
\end{algorithm}

\myparagraph{\circled{2} Conflict detection} --
Once individual trajectories are generated in \circled{1}, the next step is to determine which agents can proceed without further coordination and which require conflict resolution (Algorithm \ref{alg:conflict-detection}).
A \emph{conflict} occurs when two agents occupy the same vertex at the same timestep (vertex conflict) or attempt to traverse the same edge in opposite directions at the same time (edge conflict).
Detecting such conflicts over the~$H$-step horizon allows us to \emph{factorize} the agent set for this timestep into i) conflict-free agents, denoted as $A_{\mathrm{CF}}^t$ whose trajectories can be finalized immediately, and ii) conflicting agents, denoted as $A_{\mathrm{C}}^t$, whose plans will be refined in later stages.

\begin{remark}[Effectiveness of the first-level factorization] \label{rem:effectiveness-first-level}
The first-level factorization removes conflict-free agents from the coordinated (re-)planning process, since they can safely follow their individual trajectories. 
As shown in \cref{tab:agent-number-reduction}, this step leads to a substantial reduction in the number of agents involved. 
In most cases, the problem size decreases dramatically, and even in extremely dense scenarios a meaningful reduction is achieved. 
These results also corroborate the presence of \emph{localized congestion}, as discussed in \cref{sec:factorization-compatible-traj}.
\end{remark}
\begin{table}[tb]
    \centering
    \caption{Reduction in number of agents after applying the first-level factorization (\anothercircled{2}) across maps and agent densities.}
    \label{tab:agent-number-reduction}
    
    \begin{minipage}[t]{0.47\linewidth}
        \centering
        \parbox[c][1.2em][c]{\textwidth}{\centering\textbf{\href{https://movingai.com/benchmarks/mapf/random-64-64-10.png}{\texttt{Random Map}}}}
        \vspace{0.5em}
        \begin{tabular}{@{}cc@{}}
            \toprule
            \textbf{\ $N$} & \textbf{${|A_{\mathrm{CF}}^t|}/{N}$} \\ [0.3em]
            \textbf{} & \textbf{(Agent reduction) [\%]} \\
            \midrule
            100 & 92.03  \\
            400 & 67.96  \\
            700 & 46.74  \\
            1000 & 27.44 \\
            1300 & 13.45 \\
            1600 & 8.79 \\
            \bottomrule
        \end{tabular}
    \end{minipage}
    \begin{minipage}[t]{0.47\linewidth}
        \centering
        \parbox[c][1.2em][c]{\textwidth}{\centering\textbf{\href{https://movingai.com/benchmarks/mapf/warehouse-20-40-10-2-2.png}{\texttt{Warehouse Map B}}}}
        \vspace{0.5em}
        \begin{tabular}{@{}cc@{}}
            \toprule
            \textbf{\ $N$} & \textbf{${|A_{\mathrm{CF}}^t|}/{N}$} \\ [0.3em]
            \textbf{} & \textbf{(Agent reduction) [\%]} \\
            \midrule
            1000 & 90.43  \\
            2000 & 80.82  \\
            3000 & 68.83  \\
            4000 & 53.87 \\
            5000 & 35.04 \\
            6000 & 21.72 \\
            \bottomrule
        \end{tabular}
    \end{minipage}
\end{table}

A naive all-pairs check requires~$\mathcal{O}(H\cdot N^2)$ operations, where~$N=\vert A^t\vert$, which quickly becomes prohibitive for large teams.
To overcome this, we introduce a \emph{spatial hashing} method that efficiently identifies agents in conflict and thus requiring replanning over the next~$H$ steps (see~\cref{alg:conflict-detection}).
The key idea is to maintain a hash table that maps each space-time cell~$\tup{v,\traj}$ to the (at most one) agent occupying it in its individual plan.
This reduces conflict detection to~$\mathcal{O}(H\cdot N)$ time, since each of the~$H$ planned positions per agent is inserted and checked exactly once.
This hashing framework naturally handles vertex conflicts; similarly, edge conflicts can be detected by hashing directed edges~$\tup{v, v', \traj}$ instead of vertices.
In either case, the method ensures that the first-level factorization starts from the complete set of true conflicts while avoiding the quadratic cost of naive pairwise checks.

\begin{algorithm}[tb]
    \resizebox{\columnwidth}{!}{
    \begin{minipage}{1\columnwidth}
        \small
        \caption{Finite-horizon conflict detection}
        \label{alg:conflict-detection}
        \begin{algorithmic}[1]
        
            \Require Current instance $\cI_t$, individual trajectories for the next $H$ steps $\hat{\vtau}_{\text{plan}}^{A^t}$, horizon $H$, agent labels $\calP_t$
            \Ensure{Conflict-free agents $A_{\mathrm{CF}}$, conflicting agents $A_{\mathrm{C}}$}
            \State $\mathrm{conflicted} \gets$ $\{\tup{a_i, \mathsf{False}} \mid a_i \in A^t\}$
            \State $\mathcal{M} \gets \Call{BuildSpatialHash}{\hat{\vtau}_{\text{plan}}^{A^t}, H}$
            \LComment{\textbf{Vertex conflicts checking} (\cref{def:path-conflicts}) \\ Mark $\mathrm{conflicted}(a_i) = \mathsf{True}$ if $a_i$ has vertex conflicts}
            \State $\mathrm{conflicted} \leftarrow \Call{CheckVertexConflicts}{\mathcal{M}, H}$
            \LComment{\textbf{Edge conflicts checking} (\cref{def:path-conflicts}) \\ Mark $\mathrm{conflicted}(a_i) = \mathsf{True}$ if $a_i$ has edge conflicts}
            \State $\mathrm{conflicted} \leftarrow \Call{CheckEdgeConflicts}{\mathcal{M}, H}$
            \State $A_{\mathrm{CF}} \leftarrow \{a_i \mid \mathrm{conflicted}(a_i) = \mathsf{False}\}$
            \State $A_{\mathrm{C}} \leftarrow \{a_i \mid \mathrm{conflicted}(a_i) = \mathsf{True}\}$
            \State \Return $A_{\mathrm{CF}}, A_{\mathrm{C}}$
        
            \Statex

            \Function{BuildSpatialHash}{$\hat{\vtau}_{\mathrm{plan}}^{A^t}, H$}
                \State $\mathcal{M} \gets \{\}$
                \ForAll{$t' \gets 0 \ldots H$}
                    \State $\mathcal{M} \gets \mathcal{M} \cup \{(t', \{\})\}$
                    \ForAll{$a_i \in A^t$}
                        \State $v \gets \hat{\tau}_{\mathrm{plan}}^{a_i}[t']$
                        \If{$v\in \mathrm{keys}(\mathcal{M}(t'))$}
                            \State $\mathcal{M}(t')(v) \gets \mathcal{M}(t')(v) \cup \{a_i\}$
                        \Else
                            \State $\mathcal{M}(t') \leftarrow \mathcal{M}(t') \cup \{\tup{v,a_i}\}$
                        \EndIf
                    \EndFor
                \EndFor
                \State \Return{$\mathcal{M}$}
            \EndFunction
    
        \end{algorithmic}
    \end{minipage}
}
\end{algorithm}

\myparagraph{\circled{3} Reachability grouping} --
After the first-level factorization separates conflict-free agents from those with conflicts, {we finalize the trajectories of the conflict-free agents without modification, while} the set of conflicting agents~$A_{\mathrm{C}}^t$ is \emph{factorized a second time} into smaller, independent groups (Algorithm \ref{alg:reachability-grouping}).
Two agents are placed in the same group if, within the next~$H$ timesteps, their reachable regions (conditioned on the fixed, finalized trajectories for conflict-free agents) intersect in space-time.
This ensures that agents in different groups cannot influence each other's feasibility during subsequent planning.

\begin{definition}[Reachable region]
\label{def:reach-region}
For an agent~$a_i\in A_{\mathrm{C}}^t$ at current vertex~$v_t^i$, the \emph{reachable region}~$\reachable_H(a_i)\subseteq V^t \times \{1,\ldots,H\}$ is the set of space-time vertices~$\tup{v,\traj}$ that the agent can occupy within at most~$H$ steps, following any path on $G_t$ that avoids collisions with finalized trajectories of conflict-free agents. 
\end{definition}

Operationally,~$\reachable_H(a_i)$ is computed via a BFS from~$v_t^i$ truncated at depth~$H$, with branching factor~$b$ and pruning if the path collides with conflict-free agents.
A spatial hashing table maps each space-time vertex to the (at most one) agent that first visits it during BFS exploration.
Whenever two agents' reachable regions intersect in the table, their groups are merged leveraging disjoint set union (DSU) structures~\cite{cormen2022introduction}. 

\begin{algorithm}[tb]
    \resizebox{\columnwidth}{!}{
    \begin{minipage}{1\columnwidth}
    \small
    \caption{Multi-Agent Reachability Grouping}
    \label{alg:reachability-grouping}
    \begin{algorithmic}[1]

    \Require{Current instance $\cI_t$, conflicting agents $A_{\mathrm{C}}$, current state $x_t$, horizon $H$}
    \Ensure{Reachability groups $\mathcal{G} = \{G_1, G_2, \ldots, G_k \mid G_i \subseteq A_{\mathrm{C}}\}$}
    

        \LComment{Compute $H$-step reachable sets and build mapping}
        \State $\mathcal{R} \leftarrow \Call{ComputeReachableSets}{\cI_t, A_{\mathrm{C}}, x_t, H}$
        \State $M \leftarrow \{\}$ 

        \ForAll{$a_i \in A_{\mathrm{C}}$}
            \ForAll{$v \in \mathcal{R}[a_i]$}
                \If{$v \in \mathrm{keys}(M)$}
                    \State $M(v) \gets M(v) \cup \{a_i\}$
                \Else
                    \State $M \gets M \cup \{(v, \{a_i\})\}$
                \EndIf
            \EndFor
        \EndFor

        \LComment{Union agents with overlapping reachable sets}
        \ForAll{$v \in \mathrm{keys}(M)$}
            \State $S \leftarrow M(v)$
            \If{$|S| > 1$}
                \State $a_{\text{rep}} \leftarrow \Call{Representative}{S}$
                \ForAll{$a \in S \setminus \{a_{\text{rep}}\}$}
                    \State $\Call{Union}{a_{\text{rep}}, a}$
                \EndFor
            \EndIf
        \EndFor

        \LComment{Extract connected components}
        \State $\mathcal{C} \leftarrow \{\}$
        \ForAll{$a_i \in A_{\mathrm{C}}$}
            \State $r \leftarrow \Call{Find}{a_i}$
            \If{$r \in \mathrm{keys}(\mathcal{C})$}
                \State $\mathcal{C}(r) \leftarrow \mathcal{C}(r) \cup \{a_i\}$
            \Else
                \State $\mathcal{C} \gets \mathcal{C} \cup \{(r, \{a_i\})\}$
            \EndIf
        \EndFor
        \State \Return{$\{\mathcal{C}(r) \mid r \in \mathrm{keys}(\mathcal{C})\}$}
    \Statex
    \Function{ComputeReachableSets}{$\cI_t, A_{\mathrm{C}}, x_t, H$}
        \State $\mathcal{R} \leftarrow \{\}$
        \ForAll{$a_i \in A_{\mathrm{C}}$}
            \State $\mathcal{R} \gets \mathcal{R} \cup \{(a_i, \emptyset)\}$
            \State $Q \leftarrow \langle x_t(a_i) \rangle$, $V \leftarrow \{x_t(a_i)\}$, $d[x_t(a_i)] \leftarrow 0$
        \EndFor
        \While{$Q \neq \emptyset$}
            \State $v \leftarrow Q.\text{dequeue}()$
            \State $\mathcal{R}(a_i) \gets \mathcal{R}(a_i) \cup \{v\}$
            
            \If{$d[v] < H$}
                \ForAll{$u \in \text{Neighbors}(v)$}
                    \If{$u \notin V$}
                        \State $Q.\text{enqueue}(u)$
                        \State $V \leftarrow V \cup \{u\}$
                        \State $d[u] \leftarrow d[v] + 1$
                    \EndIf
                \EndFor
            \EndIf
        \EndWhile
        \State \Return{$\mathcal{R}$}
    \EndFunction
    \end{algorithmic}
    \end{minipage}
    }
\end{algorithm}

\begin{lemma}[Correctness of reachability-based grouping]
\label{lem:reach}
Let~$\cG_1,\ldots,\cG_m$ be the groups produced by the BFS and DSU reachability grouping procedure.
If~$a_i\in \cG_p$ and~$a_j\in \cG_q$ with~$p\neq q$, then for all~$\tup{v,\traj}\in \reachable_H(a_i)$ and~$\tup{v',\traj'}\in \reachable_H(a_j)$, we have~$\tup{v,\traj}\neq \tup{v',\traj'}$, i.e., no space-time conflicts are possible between groups in the next~$H$ steps. 
In particular, no vertex and edge conflicts are possible between the agents in different groups in the next $H$ steps. 

\end{lemma}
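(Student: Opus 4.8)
The plan is to prove the contrapositive of the space-time disjointness claim and then read off the two conflict statements as corollaries. Concretely, I would first show that whenever the space-time reachable regions of two conflicting agents overlap, the grouping procedure of \cref{alg:reachability-grouping} necessarily places them in the same group; the lemma is then exactly its contrapositive.

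For that first step I would argue directly from the construction of the map $M$ and the union loop in \cref{alg:reachability-grouping}. Suppose $\tup{v,\traj}\in\reachable_H(a_i)\cap\reachable_H(a_j)$ for some space-time vertex (\cref{def:reach-region}). By construction $M(\tup{v,\traj})$ collects exactly the agents whose reachable set contains $\tup{v,\traj}$, so $\{a_i,a_j\}\subseteq M(\tup{v,\traj})$ and $\lvert M(\tup{v,\traj})\rvert>1$; the union loop then merges every element of this set into a single disjoint-set class, which by standard disjoint-set-union correctness~\cite{cormen2022introduction} forces $a_i$ and $a_j$ to receive the same \textsc{Find} root. The connected-component extraction consequently assigns them to the same group. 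Taking the contrapositive, $a_i\in\cG_p$ and $a_j\in\cG_q$ with $p\neq q$ imply $\reachable_H(a_i)\cap\reachable_H(a_j)=\emptyset$, which is precisely $\tup{v,\traj}\neq\tup{v',\traj'}$ for all $\tup{v,\traj}\in\reachable_H(a_i)$ and $\tup{v',\traj'}\in\reachable_H(a_j)$.

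It then remains to derive the ``in particular'' statement, using that each agent's finalized $H$-step trajectory stays inside its own reachable region (every cell it visits is reachable under the fixed conflict-free trajectories, hence lies in $\reachable_H(\cdot)$). A vertex conflict (\cref{def:path-conflicts}) between $a_i$ and $a_j$ at a time $\traj$ would place a common cell $\tup{v,\traj}$ in both trajectories, hence in both reachable regions, contradicting disjointness; this case is immediate, since the current positions are distinct, so $\traj\ge 1$ and the cell lies in the admissible range $\{1,\ldots,H\}$. The edge-conflict case is where I expect the real work: an edge conflict on $(a,b)$ at time $\traj$ has $a_i$ visiting $\tup{a,\traj-1},\tup{b,\traj}$ and $a_j$ visiting $\tup{b,\traj-1},\tup{a,\traj}$, four pairwise-distinct cells, so no overlap is handed to us directly. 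My plan is to exploit reflexivity of $G^t$: since $a_j$ occupies $b$ at time $\traj-1$, it may wait one step along the self-loop $\tup{b,b}\in E^t$, placing $\tup{b,\traj}$ into $\reachable_H(a_j)$, provided $\tup{b,\traj}$ is not blocked by a conflict-free trajectory. But $a_i$ already occupies $\tup{b,\traj}$ and its trajectory is compatible with the conflict-free agents, so that cell carries no conflict-free agent; hence $\tup{b,\traj}\in\reachable_H(a_i)\cap\reachable_H(a_j)$, again contradicting disjointness.

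The main obstacle is thus this last reduction, turning a swap (which on its own shares no single cell) into a genuine space-time overlap. It rests on two facts I would state with care: waiting is always available because $G^t$ is reflexive, and the target cell $\tup{b,\traj}$ is free of conflict-free agents precisely because a conflicting agent is about to occupy it. I would also verify the index bound $\traj\in\{1,\ldots,H\}$, so that the extra wait step stays within the horizon. Everything else---the bookkeeping in $M$ and the DSU correctness---is routine.
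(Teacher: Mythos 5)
Your proposal is correct and follows essentially the same route as the paper: different DSU groups imply disjoint space-time reachable regions, vertex conflicts then contradict disjointness immediately, and edge conflicts are ruled out by the same key mechanism---a reflexivity-enabled wait move whose target cell cannot be blocked by a conflict-free agent (since a conflicting agent occupies it), producing a genuine overlap. The only difference is cosmetic symmetry: the paper derives the contradiction by showing $\tup{v,\tau+1}\notin\reachable_H(a_i)$ forces a conflict-free blocker that would also exclude $\tup{v,\tau+1}$ from $\reachable_H(a_j)$, whereas you place the swapped cell directly into both regions; the logic is identical.
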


\begin{proof}
Agents are merged into the same DSU set exactly when their reachable regions overlap at some space-time vertex.
Thus, if~$a_i$ and~$a_j$ remain in different DSU sets, their reachable regions must be disjoint. 
This immediately precludes vertex conflicts.
We prove the absence of edge conflicts by contradiction. Assume there is an edge conflict between~$a_i$ and~$a_j$. By definition (cf.~\cref{def:path-conflicts}), there must exist a time $\tau < H$ and two adjacent vertices $v,v' \in G^t$, such that $\tup{v, \tau}, \tup{v', \tau+1}\in \reachable_H(a_i)$ and $\tup{v', \tau}, \tup{v, \tau+1}\in \reachable_H(a_j)$. 
Since the regions are disjoint, we have $\tup{v, \tau+1}\notin \reachable_H(a_i)$. However, this can only happen when there is a conflict-free agent $a_k^{\mathrm{CF}}$ that occupies vertex $v$ at $\tau+1$, but that contradicts with $\tup{v, \tau+1}\in \reachable_H(a_j)$. Therefore, $a_i$ and $a_j$ will not have edge conflicts. 
\end{proof}

\begin{lemma}[Complexity] \label{lem:grouping-complexity}
Let~$N_{\mathrm{C}}=\vert A_{\mathrm{C}}^t\vert$ be the number of conflicting agents,~$b$ the branching factor of~$G^t$, and~$\alpha(\cdot)$ the inverse Ackermann function (cf. \cite{cormen2022introduction}).
The BFS and DSU grouping procedure runs in~$\mathcal{O}(N_{\mathrm{C}}\cdot b^H\cdot \alpha(N_{\mathrm{C}}))$ times and uses~$\mathcal{O}(N_{\mathrm{C}}\cdot b^H)$ space.
\end{lemma}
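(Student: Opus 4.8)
The plan is to bound separately the four phases of \cref{alg:reachability-grouping} --- reachable-set computation, map construction, union merging, and component extraction --- and then observe that the union phase dominates the others.

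First I would handle the BFS in \textsc{ComputeReachableSets}. Since each vertex has at most~$b$ neighbors and the search is truncated at depth~$H$, the depth-$H$ ball around~$x_t(a_i)$ contains~$\mathcal{O}(b^H)$ vertices; the visited set~$V$ guarantees each vertex is expanded at most once, so a single agent's BFS touches~$\mathcal{O}(b^H)$ vertices and edges and produces a reachable set~$\reachable_H(a_i)$ of size~$\mathcal{O}(b^H)$. Summing over the~$N_{\mathrm{C}}$ conflicting agents gives~$\mathcal{O}(N_{\mathrm{C}} b^H)$ time and~$\mathcal{O}(N_{\mathrm{C}} b^H)$ space for this phase.

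Next, building the space-time map~$M$ inserts each incidence~$\tup{a_i,v}$ with~$v\in\reachable_H(a_i)$ exactly once, so the total number of insertions is~$\sum_i |\reachable_H(a_i)| = \mathcal{O}(N_{\mathrm{C}} b^H)$; with~$\mathcal{O}(1)$ amortized hashing, both the time and the storage for~$M$ are~$\mathcal{O}(N_{\mathrm{C}} b^H)$. The key counting step for the union phase then follows: over all keys~$v$, the number of \textsc{Union} calls is~$\sum_v (|M(v)|-1)\le \sum_v |M(v)| = \mathcal{O}(N_{\mathrm{C}} b^H)$, while the final extraction performs exactly~$N_{\mathrm{C}}$ \textsc{Find} calls. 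Invoking the standard disjoint-set-union bound~\cite{cormen2022introduction} --- a sequence of~$m$ operations over a ground set of~$n$ elements, with union by rank and path compression, runs in~$\mathcal{O}((m+n)\alpha(n))$ --- with~$n=N_{\mathrm{C}}$ and~$m=\mathcal{O}(N_{\mathrm{C}} b^H)$ yields~$\mathcal{O}(N_{\mathrm{C}} b^H\,\alpha(N_{\mathrm{C}}))$ for all DSU operations.

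Finally I would combine the phases. The reachable-set and map phases are~$\mathcal{O}(N_{\mathrm{C}} b^H)$, the DSU phase is~$\mathcal{O}(N_{\mathrm{C}} b^H\,\alpha(N_{\mathrm{C}}))$, so the latter dominates and the total running time is~$\mathcal{O}(N_{\mathrm{C}} b^H\,\alpha(N_{\mathrm{C}}))$; for space, the DSU structure and the component map~$\mathcal{C}$ use only~$\mathcal{O}(N_{\mathrm{C}})$, so storage is dominated by the reachable sets and~$M$, giving~$\mathcal{O}(N_{\mathrm{C}} b^H)$. The main obstacle is the careful accounting in the union phase: one must argue that the number of \textsc{Union} calls is governed by the total agent--cell incidence count~$\mathcal{O}(N_{\mathrm{C}} b^H)$ rather than by a naive per-key worst case, and then apply the amortized inverse-Ackermann bound over the correct ground-set size~$N_{\mathrm{C}}$ (the number of conflicting agents, \emph{not} the number of space-time cells). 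The only other point needing care is justifying the~$\mathcal{O}(b^H)$ ceiling on the truncated reachable set, which relies on the visited set preventing re-expansion.
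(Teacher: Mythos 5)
Your proof is correct and follows essentially the same route as the paper's own: bound each agent's truncated BFS by the $\mathcal{O}(b^H)$ ball size, charge hash-table insertions and queries at $\mathcal{O}(1)$ amortized cost, and invoke the inverse-Ackermann bound for the union--find operations, yielding $\mathcal{O}(N_{\mathrm{C}}\cdot b^H\cdot \alpha(N_{\mathrm{C}}))$ time and $\mathcal{O}(N_{\mathrm{C}}\cdot b^H)$ space. The only difference is that you make explicit the count of \textsc{Union} calls (bounded by the total agent--cell incidences) and the identification of the DSU ground set as the $N_{\mathrm{C}}$ agents rather than the space-time cells, details the paper's terser proof leaves implicit; this is a refinement of the same argument, not a different one.
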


\begin{proof}
Each agent's BFS explores at most~$\mathcal{O}(b^H)$ space-time vertices, each inserted and queried in~$\mathcal{O}(1)$ expected time.
Each DSU operation (find/union) is~$\mathcal{O}(\alpha(N_{\mathrm{C}}))$ amortized (cf. Theorem 21.14 in \cite{cormen2022introduction}).
The total cost is therefore~$\mathcal{O}(N_{\mathrm{C}}\cdot b^H\cdot \alpha(N_{\mathrm{C}}))$.
\end{proof}

\begin{figure}[tb]
    \centering
    \includegraphics[width=\linewidth, trim=19cm 11.8cm 19cm 11.8cm, clip]{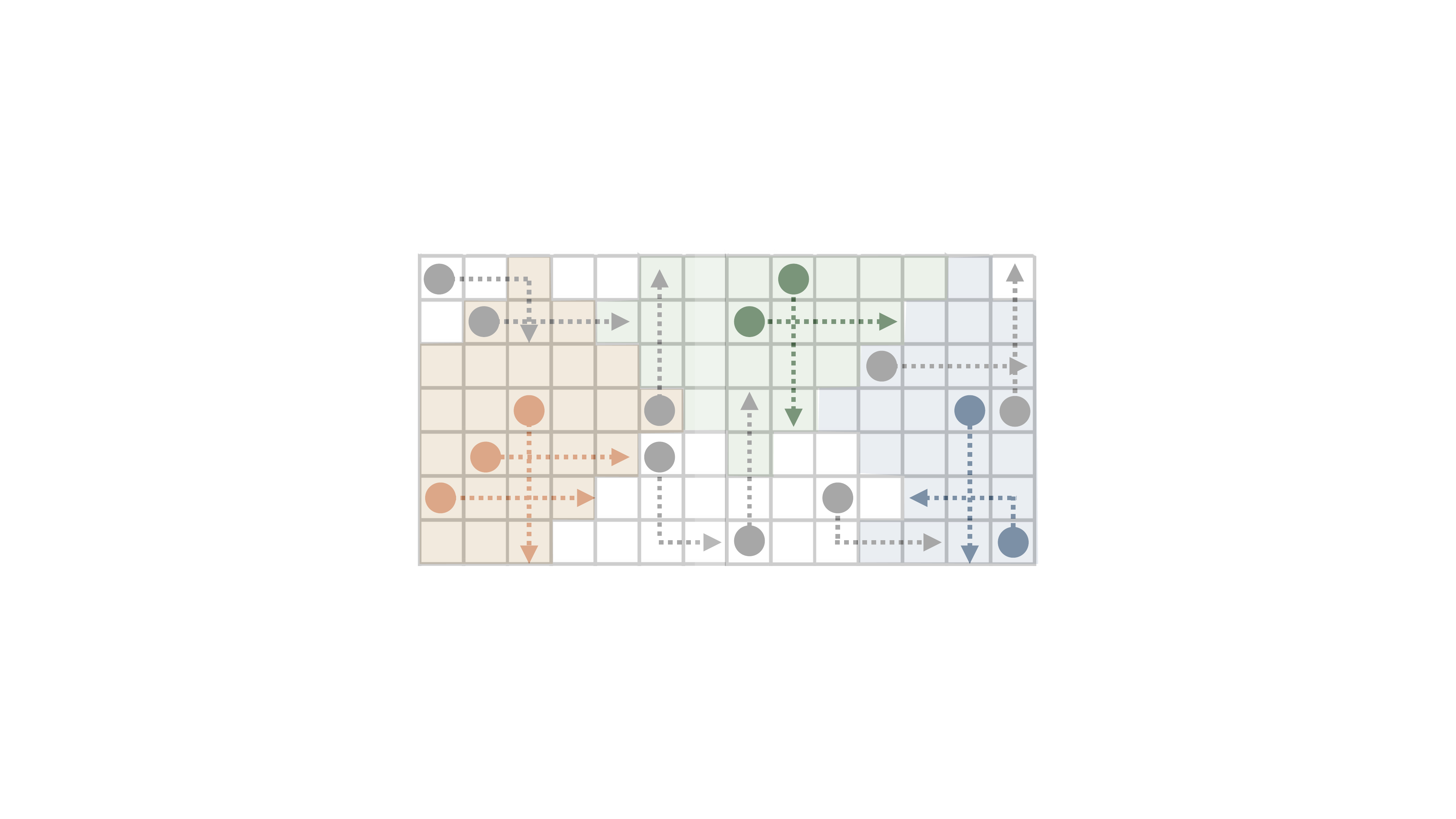}
    \caption{Reachability-based grouping with a horizon of~$H=3$. 
Gray agents are conflict-free and thus exempt from replanning. 
Colored agents, which will encounter conflicts within the next~$H$ steps, are partitioned into groups (distinct solid colors). 
Each group is defined by overlapping reachable regions, depicted as translucent areas matching the agent colors.}
    \label{fig:reachability_grouping}
\end{figure}

\begin{example}[Example of reachability-based grouping]
\label{ex:reachability-grouping}
In \cref{fig:reachability_grouping}, the gray agents are conflict-free, meaning their individual trajectories (gray dotted lines) are finalized for the next~$H$ steps, where $H=3$ in this example.
Agents in other colors are the conflicting agents.
Here, the orange agents, the green agents, and the blue agents form separate reachability groups respectively: within each group, reachable regions (marked as translucent colored regions) intersect in space-time, but no such intersection exists between any two groups.
\end{example}

\begin{remark}[Connection to classical reachability analysis]
The grouping criterion described above is an instance of a finite-horizon reachable set computation, a central concept in control theory~\cite{blanchini2008set, bertsekas2012dynamic}.
In the standard setting of discrete-time dynamical systems~$x_{k+1}=f(x_k,u_k)$, with~$x_k\in X,u_k\in U$, the~$H$-step reachable set from an initial condition~$x_0$ is
\begin{equation*}
    \reachable_H(x_0)=\{x_\tau\mid \exists \{u_i\}_{i=0}^{\tau-1} \text{ s.t. }x_\tau\text{ is reached in }\tau\leq H \text{ steps}\}.
\end{equation*}
Our~$\reachable_H(a_i)$ corresponds to this concept in the special case of a graph-based dynamics model, with control inputs representing allowed moves to neighboring vertices and additional state constraints induced by the frozen trajectories of finalized agents.
The disjointness property of the groups in \cref{lem:reach} is analogous to the separability of reachable sets in control: if two reachable sets are disjoint, their evolutions remain independent over the considered horizon.
\end{remark}

\myparagraph{\circled{4} Parallel factorization-compatible planning} --
In this stage, each independent group of conflicting agents produced by step \circled{3} is replanned in parallel over the next~$H$ steps, {(recall Definition \ref{def:cond-traj-single} and \ref{def:cond-traj-set})} with the finalized trajectories of the conflict-free agents from \circled{2}.
The compatibility is enforced by maintaining a hash map of space-time cells occupied by conflict-free agents, allowing constant-time checks that prevent replanned agents from entering those cells.
Because the non-reachability property of \circled{3} ensures that different groups have disjoint reachable zones, each group can be planned independently without introducing inter-group conflicts.
For each group, we employ a modified version of the PIBT algorithm~\cite{okumura2025lightweight}.
The modification incorporates hindrance information into PIBT's neighbor-selection rule, biasing agents away from moves that could lead to future deadlocks with fixed obstacles or other finalized trajectories. Besides that, the balanced tie-breaker described in~\cref{lem:optimal-path-distribution} is applied as well to uniformly distribute the trajectories and prevent congestion.

\begin{lemma}[Compatibility]
\label{lem:two-comp}
Let~$\cG_i$ and $\cG_j$ be any two groups of conflicting agents produced by the BFS and DSU reachability grouping procedure in \circled{3} with $i \neq j$. Suppose the resulting trajectories of them after replanning are $\hat\setoftraj^{\cG_i}$ and $\hat\setoftraj^{\cG_j}$. 
Then $\hat\setoftraj^{\cG_i} \mid \hat\setoftraj^{\cG_j}$. 
\end{lemma}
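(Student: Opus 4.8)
The plan is to reduce the statement to \cref{lem:reach} by establishing that every replanned trajectory is confined to the reachable region of its agent. Unfolding \cref{def:cond-traj-set,def:cond-traj-single}, proving $\hat\setoftraj^{\cG_i}\mid \hat\setoftraj^{\cG_j}$ amounts to checking that, for every $a\in \cG_i$ and every $a'\in \cG_j$, the replanned trajectories $\hat\traj^{a}$ and $\hat\traj^{a'}$ exhibit neither a vertex nor an edge conflict over the next $H$ steps. Since $i\neq j$, the groups $\cG_i$ and $\cG_j$ are distinct blocks of the partition returned by the DSU procedure in \circled{3} and are therefore disjoint, so \cref{def:cond-traj-set} is applicable.

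The key step is a containment claim: for each $a\in \cG_i$, writing $\hat\traj^{a}=[v^a_0,\ldots,v^a_H]$ with $v^a_0=x_t(a)$, every space-time cell $\tup{v^a_\tau,\tau}$ belongs to $\reachable_H(a)$ for all $\tau\in\{1,\ldots,H\}$ (and symmetrically for $a'\in \cG_j$). I would prove this directly from the semantics of the factorization-compatible planning in \circled{4}: the modified PIBT emits only moves along edges of $G^t$ (with waiting realized by the reflexive self-loops) and, through the hash map of cells occupied by the finalized conflict-free agents, never routes an agent into such a cell. Hence $\hat\traj^{a}$ is exactly one of the $H$-step paths on $G^t$ that avoids collisions with the finalized conflict-free trajectories, which is precisely the family of paths defining $\reachable_H(a)$ in \cref{def:reach-region}. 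Every cell the trajectory visits is therefore an element of $\reachable_H(a)$.

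With the containment in hand, the conclusion follows from \cref{lem:reach}. Because $a\in \cG_i$ and $a'\in \cG_j$ with $i\neq j$, that lemma guarantees $\reachable_H(a)\cap \reachable_H(a')=\emptyset$ and, moreover, that no vertex or edge conflict is possible between cells of the two reachable regions over the next $H$ steps. Identifying each trajectory with its set of visited space-time cells, we have $\hat\traj^{a}\subseteq \reachable_H(a)$ and $\hat\traj^{a'}\subseteq \reachable_H(a')$; thus any vertex or edge conflict between $\hat\traj^{a}$ and $\hat\traj^{a'}$ would already be a conflict between cells of $\reachable_H(a)$ and $\reachable_H(a')$, which is excluded. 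As $a\in \cG_i$ and $a'\in \cG_j$ were arbitrary, every trajectory of $\hat\setoftraj^{\cG_i}$ is conflict-free with every trajectory of $\hat\setoftraj^{\cG_j}$, i.e., $\hat\setoftraj^{\cG_i}\mid \hat\setoftraj^{\cG_j}$.

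The main obstacle I anticipate is rigorously justifying the containment claim, since it requires pinning down the invariants the modified-PIBT replanner preserves: that it produces legal $H$-step graph walks, and that its hash-map feasibility check coincides with the ``avoids collisions with finalized conflict-free agents'' clause in \cref{def:reach-region}. Once these two invariants are stated precisely, the disjointness and the delicate edge-conflict reasoning (with its reflexivity/blocking subtlety) are entirely inherited from \cref{lem:reach}, so nothing of that argument needs to be reproduced here.
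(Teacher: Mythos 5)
Your proof is correct and takes essentially the same route as the paper's: both reduce the claim to \cref{lem:reach} (disjointness and conflict-exclusion between reachable regions of distinct groups) and then conclude compatibility directly from \cref{def:cond-traj-single,def:cond-traj-set}. The only difference is that you make explicit the containment invariant---that the modified-PIBT replanner in step \circled{4} keeps each replanned trajectory inside its agent's reachable region of \cref{def:reach-region}---which the paper's terser proof uses implicitly when it passes from reachable regions to trajectories; this is a useful clarification rather than a different argument.
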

\begin{proof}
According to \cref{lem:reach}, $\forall a_i \in \cG_i$ and $\forall a_j \in \cG_j$, $\traj^{a_i}$ and $\traj^{a_j}$ are conflict-free over the next $H$ timesteps. Then by the definitions of factorization-compatible trajectories and set (\cref{def:cond-traj-single}, \cref{def:cond-traj-set}), $\hat\setoftraj^{\cG_i} \mid \hat\setoftraj^{\cG_j}$ is proved. 
\end{proof}

\begin{lemma}[Bounded total computation] \label{lem:replanning-complexity}
Let~$\mathcal{G}_k$ be a group of conflicting agents with reachable region~$\reachable_k$ over horizon~$H$, and let~$G_{\mathrm{cond}}^k$ be the subgraph of~$\reachable_k$ obtained by removing all space-time cells occupied by conflict-free agents. The conflict resolution is guaranteed to be finished in $\vert \cG_k \vert$ iterations, and the time complexity is bounded by $\mathcal{O}(\vert \cG_k \vert\log\vert \cG_k \vert)$. 
\end{lemma}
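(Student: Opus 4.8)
The plan is to decompose the cost of the group-level factorization-compatible planning into a one-time priority-sorting cost and the per-agent decision cost incurred by the modified PIBT recursion, and then to argue that the latter is linear in $\vert \cG_k \vert$ because each agent's next-vertex choice is finalized exactly once. The whole analysis hinges on the fact that planning for $\cG_k$ happens on the conditioned subgraph $G_{\mathrm{cond}}^k$, in which the cells occupied by conflict-free agents have been removed and are treated as static space-time obstacles rather than as additional agents to be scheduled.

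First I would establish the iteration bound. The modified PIBT processes the agents of $\cG_k$ in the order given by the priority assignment $\pi^{t,k}$: at each outer iteration it selects the highest-priority still-undecided agent and resolves its move through priority inheritance with backtracking. The key invariant, inherited from standard PIBT~\cite{okumura2022priority,okumura2025lightweight}, is that whenever an agent requests a vertex occupied by an undecided lower-priority agent, that agent is recursively forced to move and becomes decided; because priorities are strict and inheritance flows only from higher to lower priority, the inheritance relation is acyclic and no agent is ever revisited once decided. Consequently every agent in $\cG_k$ is assigned its next vertex exactly once, giving at most $\vert \cG_k \vert$ iterations, which proves the first claim.

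Next I would bound the work per decision. Each agent examines only its candidate neighbors, whose number is at most the branching factor $b$ of $G^t$, a constant for fixed connectivity (e.g., $b=5$ on a four-connected grid with waiting). For each candidate the algorithm performs $\mathcal{O}(1)$ hash-table lookups: one to test reservation by an already-decided agent, and one to test compatibility with the finalized conflict-free trajectories, which live in $G_{\mathrm{cond}}^k$ as static cells (so conflict-free agents never enter the recursion and do not inflate the iteration count). The balanced tie-breaker of \cref{lem:optimal-path-distribution} samples among at most $b$ candidates, costing $\mathcal{O}(b\log b)=\mathcal{O}(1)$. Hence each of the $\vert \cG_k \vert$ decisions costs $\mathcal{O}(b)=\mathcal{O}(1)$, and producing a full $H$-step plan repeats this a constant number $H$ of times, so the total decision work is $\mathcal{O}(\vert \cG_k \vert)$.

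Finally I would combine the two contributions: computing and sorting the priorities $\pi^{t,k}$ over the agents of $\cG_k$ costs $\mathcal{O}(\vert \cG_k \vert \log \vert \cG_k \vert)$, which dominates the linear decision work, yielding the claimed bound. I expect the main obstacle to be the iteration bound rather than the arithmetic: one must rule out that backtracking re-examines already-finalized agents or loops indefinitely. I plan to discharge this by formalizing the acyclicity of priority inheritance together with the finiteness of each candidate list, so that each recursive call strictly shrinks the set of undecided agents; appealing to the termination guarantee of the underlying PIBT subroutine, which always returns a valid one-step configuration whenever one exists, then closes the gap.
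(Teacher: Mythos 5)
Your proposal is correct and follows essentially the same route as the paper's proof: both reduce the claim to PIBT's known per-timestep analysis operating unchanged on the conditioned subgraph $G_{\mathrm{cond}}^k$ (conflict-free agents treated as static space-time obstacles, never entering the recursion), with the priority sort contributing the $\mathcal{O}(\vert \cG_k \vert \log \vert \cG_k \vert)$ term and the sequential, inheritance-based decisions---each agent finalized exactly once---giving the $\vert \cG_k \vert$ iteration bound. The only difference is that the paper cites the PIBT complexity result as a black box, whereas you unpack it (acyclicity of inheritance, constant branching factor, constant $H$), which is a finer-grained but not a different argument.
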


\begin{proof}
The time complexity of PIBT follows from~\cite{okumura2022priority}. In our case, although the introduction of factorizable-compatible agents (conflict-free agents) will further shrink the usable space, PIBT's mechanism operates unchanged on $G_{\mathrm{cond}}^k$. Agents will be sorted by their initial priorities, and each agent will be visited sequentially in order of priorities with inheritance, and determine its next move among the neighbors and the current vertex. 
\end{proof}

\begin{remark}[Anytime and modular factorization-compatible planning]
The modular design of \gls{acr:fico}, illustrated in \cref{fig:fico_algo}, enables seamless integration with a broad class of \gls{acr:mapf} solvers. 
In particular, the factorization-compatible planner in step~\circled{4} of \cref{alg:fico} can be instantiated with different algorithms, thereby turning static methods into adaptive frameworks that incorporate feedback naturally while preserving the computational and structural advantages of factorization. 
For example, \gls{acr:fico}+PIBT corresponds to the baseline version in \cref{sec:fico-algo-summary}, while \gls{acr:fico}+CBS~\cite{sharon2015conflict} or EECBS~\cite{li2021eecbs} provide optimality guarantees within the factorization window, and LaCAM*~\cite{okumura2023improving} or Anytime PIBT~\cite{gandotra2025anytime} yield anytime improvements, with PIBT itself serving as a fast anytime backup. 
A systematic evaluation of these variants is beyond the scope of this paper and left for future work.
\end{remark}

\myparagraph{\circled{5} Congestion resolution} --
Factorization-compatible planning described in \circled{4} aims at resolving the conflicts in all conflicting agents in $A_{\mathrm{C}}^t$ by replanning them group by group in parallel. In certain cases, this process for a group~$\mathcal{G}_k$ may fail because the finalized trajectories of nearby conflict-free agents fully occupy the reachable region~$\reachable_k$ over~$H$.
In such cases, we make the algorithm robust by expanding the conflicting agent set~$A_{\mathrm{C}}^t$ by merging it with the minimal set of nearby conflict-free agents whose trajectory's removal from the finalized trajectories yields a non-empty feasible region and allows for successful replanning for all conflicting agents. 
Formally, let $\reachable_H^{A_{\mathrm{C}}^t} = \bigcup_{a\in A_{\mathrm{C}}^t} \reachable_H(a)$  be the entire reachable region for all conflicting agents. 
Define the blocking set
\begin{equation*}
    A_B^t=\{a\in A_{\mathrm{CF}}^t\mid \traj^a\cap \reachable_H^{A_{\mathrm{C}}^t}\neq \emptyset\},
\end{equation*}
where~$\traj^a$ is the planned~$H$-step trajectory of a conflict-free agent~$a$ in step~\circled{1}.
The failure in factorization-compatible replanning implies that the usable space for replanning, $\reachable_H^{A_{\mathrm{C}}^t} \setminus \bigcup_{a\in A_B^t}\traj^a$, is not enough, which results in no feasible plan for conflicting agents, represented as $\mathcal{P}\left(\reachable_H^{A_{\mathrm{C}}^t} \setminus \bigcup_{a\in A_B^t}\traj^a\right)=\emptyset$. 
Here, we define the enlargement operator
\begin{equation*}
    \mathcal{E}(A_{\mathrm{C}}^t)=A_{\mathrm{C}}^t\cup \argmin_{S\subseteq A_B^t, \mathcal{P}\left(\reachable_H^{A_{\mathrm{C}}^t} \setminus \bigcup_{a\in S}\traj^a\right)\neq\emptyset}\vert S\vert.
\end{equation*}
However, $\argmin S$ cannot be directly computed. In practice, an iterative estimation is applied to estimate the enlarged $A_{\mathrm{C}}^t$
\begin{equation*}
    \hat\cE(A_{\mathrm{C}}^t)=A_{\mathrm{C}}^t\cup \hat S
\end{equation*}
\begin{equation*}
 \textup{where}~~   \hat S = \left\{a\in A_{\mathrm{CF}}^t \mid a \text{ is among the top } d \text{ closest to }A_{\mathrm{C}}^t\right\} \footnote{The ``among the top $d$ closest to $A_{\mathrm{C}}^t$'' requirement could be formally written as $\hat S = \left\{a\in A_{\mathrm{CF}}^t \mid \left\vert \left\{ b \in A_{\mathrm{CF}}^t \mid \distance(b, A_{\mathrm{C}}^t) < \distance(a, A_{\mathrm{C}}^t) \right\} \right\vert \leq d \right\}$, where $\distance(a, A_{\mathrm{C}}^t) = \min_{a'\in A_{\mathrm{C}}^t} \distance(a,a')$ represents the closest distance to the conflicting agent set. }
\end{equation*}
and $d$ is a tunable parameter. This enlargement procedure then updates $A_{\mathrm{C}}^t \leftarrow \hat \cE(A_{\mathrm{C}}^t)$, $A_{\mathrm{CF}}^t = A^t \setminus A_{\mathrm{C}}^t$, and repeats until a feasible $H$-step plan exists. 
If all agents are eventually merged into a single group (worst case), the process degenerates to classical PIBT, which is guaranteed to return a valid next step when one exists.

\begin{lemma}[Termination and worst-case] \label{lem:termination-to-pibt}
Starting from~$A_{\mathrm{C}}^t$, repeatedly applying the recursion~$A_{\mathrm{C}}^t \leftarrow \hat\cE(A_{\mathrm{C}}^t)$ terminates after at most~$\vert A_{\mathrm{CF}}^t\vert$ iterations.
In the worst case,~$A_{\mathrm{C}}^t=A^t$ and the method reduces to PIBT, which is complete for finite-horizon \gls{acr:mapf}.
\end{lemma}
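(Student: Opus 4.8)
The plan is to prove termination by a strict-monotonicity argument on the cardinality of the conflicting set, and then to identify the worst-case fixed point with unconstrained PIBT and invoke the latter's completeness.

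First I would establish that each invocation of the recursion $A_{\mathrm{C}}^t \leftarrow \hat\cE(A_{\mathrm{C}}^t)$ strictly increases $\vert A_{\mathrm{C}}^t\vert$. Congestion resolution is triggered \emph{only} when factorization-compatible replanning fails, i.e., when $\mathcal{P}(\reachable_H^{A_{\mathrm{C}}^t}\setminus\bigcup_{a\in A_B^t}\traj^a)=\emptyset$. A failure of this form certifies that the blocking set $A_B^t$ is nonempty: if no conflict-free trajectory intersected the reachable region, the usable space would equal $\reachable_H^{A_{\mathrm{C}}^t}$ itself and the (unconstrained) group could always be planned, contradicting failure. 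Since $A_B^t\subseteq A_{\mathrm{CF}}^t$ is nonempty and $\hat S$ selects the top $d\geq 1$ conflict-free agents closest to $A_{\mathrm{C}}^t$ (or all remaining conflict-free agents when fewer than $d$ exist), we have $\hat S\neq\emptyset$ and, crucially, $\hat S\cap A_{\mathrm{C}}^t=\emptyset$. Hence $\vert \hat\cE(A_{\mathrm{C}}^t)\vert\geq \vert A_{\mathrm{C}}^t\vert+1$ on every failed iteration.

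Next I would convert this into the iteration bound. Because $A^t=A_{\mathrm{C}}^t\sqcup A_{\mathrm{CF}}^t$ is fixed and finite, and each failed iteration transfers at least one agent from $A_{\mathrm{CF}}^t$ into $A_{\mathrm{C}}^t$, the quantity $\vert A_{\mathrm{CF}}^t\vert$ is a strictly decreasing, nonnegative integer potential. Therefore, after at most its initial value $\vert A_{\mathrm{CF}}^t\vert$ iterations, either a feasible $H$-step plan is produced (and the loop exits) or $A_{\mathrm{CF}}^t=\emptyset$, which yields the claimed bound. For the worst case, I would observe that once $A_{\mathrm{CF}}^t=\emptyset$ we have $A_{\mathrm{C}}^t=A^t$ and no frozen trajectories remain; the hash map of cells occupied by conflict-free agents is empty, so the factorization-compatibility constraint of \circled{4} becomes vacuous and every move is admissible. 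With all agents merged into a single group, the replanner then coincides with plain PIBT on the full agent set. I would close by invoking the completeness of PIBT for the single next step (cf.~\cite{okumura2022priority,okumura2025lightweight}): through its priority-inheritance and backtracking mechanism, PIBT is guaranteed to return a valid collision-free next move whenever one exists, which is exactly the finite-horizon requirement here.

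The main obstacle I anticipate is the strict-monotonicity step. One must argue rigorously that a replanning failure genuinely implies a nonempty blocking set drawn from $A_{\mathrm{CF}}^t$, so that a fresh agent is always available to absorb, and one must handle the boundary case where fewer than $d$ conflict-free agents remain, in which $\hat S$ is simply all of them. Ensuring that $\hat S$ never re-selects agents already in $A_{\mathrm{C}}^t$---so that the potential $\vert A_{\mathrm{CF}}^t\vert$ truly decreases rather than stalling at a fixed point---is the crux; the finiteness bound and the degeneration to complete PIBT follow routinely once this is in place.
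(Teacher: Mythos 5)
Your proof is correct and follows essentially the same route as the paper's: each failed congestion-resolution step strictly enlarges $A_{\mathrm{C}}^t$ by absorbing agents from the finite set $A_{\mathrm{CF}}^t$, which bounds the number of iterations by $\vert A_{\mathrm{CF}}^t\vert$, after which $A_{\mathrm{C}}^t = A^t$ and the method degenerates to plain PIBT, whose single-step completeness is invoked. If anything, your version is more rigorous than the paper's, which merely asserts the strict enlargement (by at least $\min(d,\vert A_{\mathrm{CF}}^t\vert)$ agents per iteration) without supplying the argument you give---that a replanning failure forces a nonempty blocking set, so that $\hat S$ is nonempty and disjoint from $A_{\mathrm{C}}^t$---and without spelling out why the compatibility constraints become vacuous in the worst case.
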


\begin{proof}
Suppose Each iteration strictly enlarges~$\vert A_{\mathrm{C}}^t\vert$ by at least $\min(d,\vert A^t_{\mathrm{CF}} \vert)$ agents from~$A_{\mathrm{CF}}^t$,
Since~$\vert A_{\mathrm{CF}}^t \vert$ is finite, the process stops after at most~$\vert A_{\mathrm{CF}}^t \vert$ iterations and then all agents will be replanned freely using PIBT. 
\end{proof}

\begin{remark}[Difference from global PIBT] \label{rem:difference-with-global-pibt}
In the worst case, \gls{acr:fico} falls back to PIBT.
The only distinction is that, in \gls{acr:fico}, agents are first partitioned into reachability-based groups (step~\circled{3}).
Each group is then planned independently with PIBT.
Because the groups are disjoint over the~$H$-step horizon, this decomposition does not alter the solution: the outcome is identical to running a ``global'' PIBT on all agents together.
The advantage is computational: planning can be carried out in parallel across groups.
\end{remark}

\begin{remark}[Connections to control]
This enlargement procedure can be viewed as a discrete analog of constraint relaxation in \gls{acr:mpc}: when the constrained feasible set becomes empty, the algorithm selectively relaxes constraints (finalized trajectories) until the set becomes non-empty, preserving feasibility while minimizing deviation from original conditioning.
\end{remark}

\myparagraph{\circled{6} Merging} -- 
Once all groups have been successfully replanned, their resulting~$H$-step trajectories are combined into a single joint plan for the current timestep.
Following the receding-horizon paradigm, we do not execute the entire~$H$-step plan.
Instead, we extract only the first step of each agent, and treat the remaining steps as a lookahead reference to be discarded after execution. The trajectories of all conflict-free agents are cached for reuse in future planning.

\begin{lemma}[Consistency of merged plans]
Let $\cG_1,\ldots,\cG_m$ be a partition of the conflicting agents $A_{\mathrm{C}}^t$ constructed using BFS and DSU in \circled{3} and are replanned in the conflict-resolution in \circled{4}, while the conflict-free agents $A_{\mathrm{CF}}^t$ keep their original individual trajectories, then the merged set of first step actions~$\{u_t^a\mid a\in A^t\}$ is globally conflict-free at timestep~$t+1$.
\end{lemma}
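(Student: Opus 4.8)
The plan is to exhibit the merged plan as the composition of a partition of the full agent set $A^t$ and then invoke \cref{lem:comp-fact} directly. Concretely, I would take the partition $\tup{A_{\mathrm{CF}}^t, \cG_1, \ldots, \cG_m}$ of $A^t$, with the finalized conflict-free trajectories $\vtau_{\text{plan}}^{A_{\mathrm{CF}}^t}$ and the replanned group trajectories $\hat\setoftraj^{\cG_1}, \ldots, \hat\setoftraj^{\cG_m}$ as its $m+1$ blocks. By \cref{lem:comp-fact} it then suffices to verify two things: (i) each block is internally conflict-free over the next $H$ steps, and (ii) every pair of distinct blocks is factorization-compatible. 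The lemma's conclusion gives conflict-freeness of the union over the full horizon, from which the single-step claim follows.

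For (i), the block $A_{\mathrm{CF}}^t$ is internally conflict-free by the very definition of the first-level factorization \circled{2}: conflict detection (\cref{alg:conflict-detection}) labels an agent conflict-free only when its individual $H$-step trajectory conflicts with no other agent's trajectory, so in particular no two agents of $A_{\mathrm{CF}}^t$ conflict with each other. Each group block $\cG_k$ is internally conflict-free because the factorization-compatible planner \circled{4} (the adapted PIBT of \cref{lem:replanning-complexity}) returns, by construction, group trajectories that are mutually conflict-free over the horizon.

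For (ii), I would split the pairs into two kinds. For two distinct groups $\cG_i,\cG_j$, \cref{lem:two-comp} yields $\hat\setoftraj^{\cG_i}\mid \hat\setoftraj^{\cG_j}$ (which itself rests on the disjoint-reachability guarantee of \cref{lem:reach}). For a group $\cG_k$ paired with the conflict-free block, compatibility is exactly the invariant enforced during \circled{4}: replanning of $\cG_k$ is carried out against the frozen space-time cells occupied by $\vtau_{\text{plan}}^{A_{\mathrm{CF}}^t}$, giving $\hat\setoftraj^{\cG_k}\mid \vtau_{\text{plan}}^{A_{\mathrm{CF}}^t}$. Since conflict-freeness between two trajectory sets is a symmetric relation (\cref{def:path-conflicts}), this covers every unordered pair and so satisfies the hypothesis of \cref{lem:comp-fact}. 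Applying that lemma, $\vtau_{\text{plan}}^{A_{\mathrm{CF}}^t}\cup\bigcup_{k=1}^m \hat\setoftraj^{\cG_k}$ is conflict-free over the next $H$ steps. The last step descends from the horizon to the executed action: since $\{u_t^a\mid a\in A^t\}$ consists precisely of the first transitions of these horizon trajectories, any vertex or edge conflict at $t+1$ would already be a conflict in the first transition and hence within the horizon, a contradiction; thus the merged first-step actions are globally conflict-free at $t+1$.

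I expect the only real obstacle to be bookkeeping rather than mathematics: one must ensure that the partition used is the one that survives any congestion-resolution passes \circled{5}, which migrate agents from $A_{\mathrm{CF}}^t$ into the conflicting set and re-run the grouping. Once \texttt{planning\_success} holds in \cref{alg:fico}, the blocks $\tup{A_{\mathrm{CF}}^t,\cG_1,\ldots,\cG_m}$ do form a genuine partition of $A^t$ satisfying (i) and (ii), so the composition argument applies as stated.
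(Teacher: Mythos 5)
Your proof is correct and follows essentially the same route as the paper's own: pairwise group compatibility via \cref{lem:two-comp}, group-versus-conflict-free compatibility from the construction invariant of step \circled{4}, composition via \cref{lem:comp-fact}, and finally descent from the $H$-step horizon to the first executed action. You are in fact somewhat more careful than the paper, since you explicitly verify the internal conflict-freeness hypotheses required by \cref{lem:comp-fact} and address the partition bookkeeping after congestion-resolution passes in \circled{5}, both of which the paper leaves implicit.
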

\begin{proof}
\cref{lem:two-comp} ensures the compatibility between any two distinct groups constructed and replanned in the conflict-resolution process in \circled{4},\circled{5}. By construction, each group is compatible with the conflict-free agent set. Then the compatibility of all agents' trajectories over the next $H$ steps could be guaranteed using~\cref{lem:comp-fact}. In particular, the conflict-freeness at step~1 is obtained. 
\end{proof}

\subsection{Theoretical analysis}

\begin{lemma}[Completeness] \label{lem:completeness}
Let~$x_t$ be the state of the problem at time~$t$ (\cref{def:mapf-system}). 
Given~$x_t$ and the current instance~$\cI_t$ as input, if a valid next-step movement~$\hat u_t$ exists, then \gls{acr:fico} will find such a movement in finite time.
\end{lemma}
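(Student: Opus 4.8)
The plan is to reduce the completeness of \gls{acr:fico} to the completeness of PIBT via the worst-case degeneration established in \cref{lem:termination-to-pibt}, after first confirming that every stage of \cref{alg:fico} runs in finite time. The overall structure is: (a) each single pass of the main loop is finite; (b) the number of loop iterations is finite; (c) the terminal iteration reduces to a global PIBT call that is guaranteed to succeed whenever a valid next step exists.

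First I would argue that a single pass through the body of the main \textbf{while} loop in \cref{alg:fico} terminates in finite time. Step~\circled{1} performs one depth-$H$ BFS per agent and so finishes in finite time; step~\circled{2} detects all conflicts in $\mathcal{O}(H\cdot N)$ via \cref{alg:conflict-detection}; step~\circled{3} groups the conflicting agents in $\mathcal{O}(N_{\mathrm C}\cdot b^{H}\cdot \alpha(N_{\mathrm C}))$ by \cref{lem:grouping-complexity}; and step~\circled{4} replans each group within the finite-iteration bound of \cref{lem:replanning-complexity}. Hence each iteration of the loop costs only finite time, regardless of its outcome.

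Next I would bound the number of iterations. Each iteration either (i) succeeds, in which case every group $A_{\mathrm C}^{t,k}$ yields a nonempty factorization-compatible plan, $\mathrm{planning\_success}$ remains \textsf{True}, and the algorithm returns $\hat u_t$; or (ii) fails for some group, triggering \emph{congestion resolution}~\circled{5}, which strictly enlarges $A_{\mathrm C}^t$ by at least $\min(d,|A_{\mathrm{CF}}^t|)\ge 1$ conflict-free agents. Because the update $A_{\mathrm C}^t\leftarrow\hat\cE(A_{\mathrm C}^t)=A_{\mathrm C}^t\cup\hat S$ is purely additive, the enlargement is monotone, and by \cref{lem:termination-to-pibt} after at most $|A_{\mathrm{CF}}^t|$ such enlargements we reach $A_{\mathrm C}^t=A^t$. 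At that point the grouping of \circled{3} produces a single group containing all agents with no frozen conflict-free trajectories, so the planner of \circled{4} reduces exactly to a global PIBT over $A^t$ (\cref{rem:difference-with-global-pibt}). To finish, I invoke PIBT's completeness: by hypothesis a valid next-step movement exists for $x_t,\cI_t$; since no cells are frozen in this degenerate case, the finite-horizon problem handed to PIBT is exactly the one for which such a movement is valid, so PIBT returns a valid next step, $\mathrm{planning\_success}$ becomes \textsf{True}, and \gls{acr:fico} returns a valid $\hat u_t$. Combining finiteness of each iteration with the $|A_{\mathrm{CF}}^t|$-bound on the iteration count yields termination in finite time.

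The main obstacle is this final step: one must be certain that the valid next-step movement assumed to exist globally is still realizable once the horizon is truncated to $H$, and, during intermediate iterations, after conflict-free trajectories are frozen into obstacles. The clean resolution is precisely the worst-case reduction—by letting congestion resolution progressively unfreeze all potentially blocking trajectories until $A_{\mathrm C}^t=A^t$, the instance presented to PIBT carries no artificial factorization constraints, so the hypothesized single-step solution lies in PIBT's feasible set and its completeness applies without further assumptions. A secondary care point is that intermediate (non-degenerate) failures do not prematurely terminate the algorithm; this is handled by the \textbf{break}-and-retry structure of \cref{alg:fico}, which guarantees that any failure merely advances the enlargement counter rather than aborting.
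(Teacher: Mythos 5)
Your proposal is correct and follows essentially the same route as the paper's own proof: reduce to the worst case via \cref{lem:termination-to-pibt} and \cref{rem:difference-with-global-pibt}, where congestion resolution has enlarged $A_{\mathrm C}^t$ to all of $A^t$ so that step~\circled{4} is equivalent to a global PIBT call, and then invoke PIBT's single-step completeness. Your write-up is more detailed than the paper's (explicitly bounding the per-iteration cost and the number of enlargement iterations, and flagging that the frozen-trajectory constraints vanish in the degenerate case), but the underlying argument is the same.
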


\begin{proof}
By \cref{lem:termination-to-pibt} and \cref{rem:difference-with-global-pibt}, in the worst case \gls{acr:fico} reduces to PIBT with reachability-based factorization. 
It is known that PIBT is single-step complete~\cite{okumura2022priority}: it processes agents sequentially according to their priorities and employs backtracking to ensure that all agents receive a valid move. 
Agents that cannot progress are backtracked and remain at their current location, while already-processed agents keep valid assignments. 
Thus, whenever a valid next-step movement exists, PIBT, and therefore \gls{acr:fico}, is guaranteed to find one in finite time.
\end{proof}

\begin{lemma}[Complexity] \label{lem:computational-complexity-normal}
Let~$N=\lvert A^t\rvert$ be the number of agents under consideration, $N_{\mathrm{C}}=\vert A_{\mathrm{C}}^t \vert$ the number of conflicting agents,~$H$ the lookahead horizon,~$b$ the branching factor, and~$N_{\mathrm{thread}}$ the number of parallel threads.
Assume that the distance heuristic and optimal-path-count queries used in \gls{acr:fico} have amortized constant cost (due to lazy caching; cf.\ \cref{alg:perfect_distance_heuristic,alg:optimal_path_count}). 
Then one replanning cycle of \gls{acr:fico} \emph{without} congestion resolution (steps $\circled{1}\!\to\!\circled{2}\!\to\!\circled{3}\!\to\!\circled{4}\!\to\!\circled{6}$) runs in
\[
\mathcal{O}\!\big(HN\big)\;+\;\mathcal{O}\!\big(N_{\mathrm{C}}\, b^{H}\,\alpha(N_{\mathrm{C}})\big)\;+\;
\mathcal{O}\!\Big(\big\lceil\tfrac{N_{\mathrm{C}}}{N_{\mathrm{thread}}}\big\rceil \log \big\lceil\tfrac{N_{\mathrm{C}}}{N_{\mathrm{thread}}}\big\rceil\Big)
\]
time {under a balanced-load asssumption in step~$\circled{4}$}.
In the worst case of complete load imbalance in step~$\circled{4}$, the last term becomes $\mathcal{O}(N_{\mathrm{C}}\log N_{\mathrm{C}})$.
\end{lemma}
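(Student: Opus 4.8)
The plan is to bound each of the five stages in the cycle $\circled{1}\!\to\!\circled{2}\!\to\!\circled{3}\!\to\!\circled{4}\!\to\!\circled{6}$ separately and then add the per-stage costs, matching the three displayed terms to the linear-scan stages ($\circled{1},\circled{2},\circled{6}$), the reachability grouping ($\circled{3}$), and the parallel replanning ($\circled{4}$), respectively.

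First I would treat $\circled{1}$, $\circled{2}$, and $\circled{6}$ together, since they all contribute only linearly. In $\circled{1}$ each of the $N$ agents runs \cref{alg:individual-optimal-planning}, performing exactly $H$ vertex selections; by the amortized-constant-cost assumption on the heuristic $\gamma$ and the path count $c$, each selection (candidate enumeration and weighted sampling, cf.\ \cref{eq:candidates,eq:balance-tie-breaker}) costs $O(1)$, giving $O(HN)$. Stage $\circled{2}$ is $O(HN)$ by the spatial-hashing argument already established for \cref{alg:conflict-detection}: each of the $H$ planned positions and edges of each agent is inserted and queried exactly once. Stage $\circled{6}$ merely reads the first entry of each agent's trajectory, costing $O(N)$. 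Together these yield the first displayed term $O(HN)$.

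Second, stage $\circled{3}$ is governed directly by \cref{lem:grouping-complexity}, which already establishes the bound $\mathcal{O}(N_{\mathrm{C}}\, b^{H}\,\alpha(N_{\mathrm{C}}))$, so I would simply invoke it to obtain the second displayed term.

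The hard part will be the parallel aggregation in stage $\circled{4}$. By \cref{lem:replanning-complexity}, replanning a single group $\cG_k$ costs $\mathcal{O}(\vert \cG_k\vert \log \vert\cG_k\vert)$, and because the groups are disjoint over the horizon (\cref{lem:reach}) they can be scheduled independently; hence the wall-clock time equals the maximum total work assigned to any single thread. Under the balanced-load assumption each thread receives groups whose sizes sum to $O(\lceil N_{\mathrm{C}}/N_{\mathrm{thread}}\rceil)$, and using $\sum_{\cG_k \text{ on a thread}} \vert\cG_k\vert\log\vert\cG_k\vert \le \big(\sum \vert\cG_k\vert\big)\log\big(\max \vert\cG_k\vert\big)$ the per-thread work is $O(\lceil N_{\mathrm{C}}/N_{\mathrm{thread}}\rceil\log\lceil N_{\mathrm{C}}/N_{\mathrm{thread}}\rceil)$, the third displayed term. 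In the fully imbalanced case a single thread carries all $N_{\mathrm{C}}$ conflicting agents, so the same bound degenerates to $\mathcal{O}(N_{\mathrm{C}}\log N_{\mathrm{C}})$. The main subtlety is precisely this scheduling step: I must formalize the balanced-load assumption so that each thread handles $\Theta(N_{\mathrm{C}}/N_{\mathrm{thread}})$ agents, and justify that the sum of $\vert\cG_k\vert\log\vert\cG_k\vert$ over one thread is controlled by that thread's total agent count via the $\log(\max)$ bound above. Summing the three stage contributions then gives the stated expression.
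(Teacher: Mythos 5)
Your proposal is correct and follows essentially the same route as the paper's proof: a per-stage decomposition that bounds steps \circled{1}, \circled{2}, \circled{6} by $\mathcal{O}(HN)$, invokes \cref{lem:grouping-complexity} for step \circled{3}, and \cref{lem:replanning-complexity} together with a balanced-load scheduling argument for step \circled{4}. Your handling of step \circled{4} is in fact slightly more careful than the paper's---the paper simply asserts that wall-clock time is governed by the largest group, of size $\lceil N_{\mathrm{C}}/N_{\mathrm{thread}}\rceil$ under balance, whereas you explicitly aggregate multiple groups per thread via $\sum_k \vert\cG_k\vert\log\vert\cG_k\vert \le \bigl(\sum_k \vert\cG_k\vert\bigr)\log\bigl(\max_k \vert\cG_k\vert\bigr)$---but this is a refinement of the same argument rather than a different approach.
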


\begin{proof}
The pipeline runs sequentially across major stages.

\emph{Step~$\circled{1}$ (independent $H$-step proposals).} 
Each agent selects $H$ next vertices with $O(1)$ amortized work per selection, in parallel, giving $O\!\big((HN)/N_{\mathrm{thread}}\big)$ wall-clock time.

\emph{Step~$\circled{2}$ (conflict detection).} 
Spatial hashing checks each of the $H$ planned positions per agent once, for $O(HN)$ total time (see \cref{alg:conflict-detection}). 
This term dominates step~$\circled{1}$ for $N_{\mathrm{thread}}\ge 1$, so together steps $\circled{1}$–$\circled{2}$ cost $O(HN)$.

\emph{Step~$\circled{3}$ (reachability grouping).} 
By \cref{lem:grouping-complexity}, grouping conflicting agents costs $O\!\big(N_{\mathrm{C}}\, b^{H}\,\alpha(N_{\mathrm{C}})\big)$, where $\alpha(\cdot)$ is the inverse Ackermann term from union–find operations.

\emph{Step~$\circled{4}$ (parallel replanning by group).} 
By \cref{lem:replanning-complexity}, replanning a group $\cG$ costs $O\!\big(\lvert\cG\rvert \log \lvert\cG\rvert\big)$. 
With $N_{\mathrm{thread}}$ threads, the wall-clock time is governed by the largest group. 
Under a balanced-load assumption (e.g., near-uniform agent distribution), the largest group has size $\lceil N_{\mathrm{C}}/N_{\mathrm{thread}}\rceil$, yielding 
$O\!\big(\lceil N_{\mathrm{C}}/N_{\mathrm{thread}}\rceil \log \lceil N_{\mathrm{C}}/N_{\mathrm{thread}}\rceil\big)$. 
In the worst case (all conflicts concentrated), this term is $O(N_{\mathrm{C}}\log N_{\mathrm{C}})$.

\emph{Step~$\circled{6}$ (merge first actions).} 
Scanning all agents once is $O(N)$, which is subsumed by $O(HN)$.
Summing the terms gives the stated bound.
\end{proof}

Since the first-level factorization substantially reduces the effective fleet size (cf.~\cref{rem:effectiveness-first-level,tab:agent-number-reduction}), the number of conflicting agents $N_{\mathrm{C}}$ is typically much smaller than $N$. 
As a result, the overall complexity of \gls{acr:fico} approaches the near-linear bound $\mathcal{O}(NH)$ in practice.

\subsection{Discussion}
\label{sec:discussion}

The performance advantage of \gls{acr:fico} stems primarily from its mechanism for \emph{priority determination}. 
Previous work~\cite{okumura2025lightweight} has demonstrated that priority assignment is critical to the effectiveness of algorithms such as PIBT. 
In our framework, priorities are dynamically allocated by selecting the set of conflict-free agents ($A_{\mathrm{CF}}$) and granting them the highest priority, thereby ensuring that they can follow their individually optimal trajectories without interference. 
This selection is closely tied to the horizon parameter~$H$, which governs the scope of lookahead and hence the identification of~$A_{\mathrm{CF}}$. 
Furthermore, the closed-loop nature of \gls{acr:fico} allows it to naturally adapt to changes in the environment: any deviations are immediately incorporated through feedback, enabling robust operation in dynamic settings. 
These performance benefits are validated experimentally and are evident in \cref{fig:soc_comparison_oneshot_mapf,fig:throughput_comparison_lifelong_mapf}.

In addition to performance, \gls{acr:fico} also exhibits significant computational advantages. 
By focusing on one-step planning rather than solving for the entire horizon at once, the algorithm avoids the exponential blow-up associated with long-horizon joint planning. 
At the same time, the dual-level factorization scheme exploits the compositional structure of the \gls{acr:mapf} problem: agents are partitioned into conflict-free and conflicting sets, and the remaining conflicts are further decomposed into smaller, independent subproblems with horizon~$H$. 
These subproblems are amenable to parallel computation, leading to substantial gains in scalability, confirmed in our experiments and illustrated in \cref{fig:ERT_comparison_oneshot_mapf}.

\section{Experiments}
\label{sec:experiments}

We evaluate \gls{acr:fico} across multiple settings of the unified \gls{acr:mapf} problem.
Experiments are carefully designed to answer four key questions:
a) How does \gls{acr:fico} compare with state-of-the-art open-loop and closed-loop algorithms in terms of response time and solution quality?
b) How does it perform in lifelong \gls{acr:mapf}, where throughput is the key metric?
c) What is the contribution of each algorithmic component (ablation study)?
d) How robust is \gls{acr:fico} under execution uncertainties such as stochastic delays and agent additions?

All experiments are conducted on a MacBook Pro 2023 with a 12-core CPU and 36 GB of RAM.
Parallel computation is implemented via multi-threading over 12 threads. 
The maps used in the following sections are from the MAPF benchmark~\cite{stern2019mapf}, including \texttt{Empty Map} (\texttt{empty-48-48}), \texttt{Random Map} (\texttt{Random-64-64-10}), \texttt{Warehouse Map A} (\texttt{warehouse-20-40-10-2-1}), and \texttt{Warehouse Map B} (\texttt{warehouse-20-40-10-2-2}).\footnote{https://movingai.com/benchmarks/mapf/index.html}

\subsection{Performance in one-shot and lifelong \gls{acr:mapf}}
\label{sec:experiments-no-uncertainties}

\begin{figure}[tb]
    \centering
    \subfloat[\href{https://movingai.com/benchmarks/mapf/random-64-64-10.png}{\texttt{Random Map}}]{\includegraphics[width=0.5\linewidth]{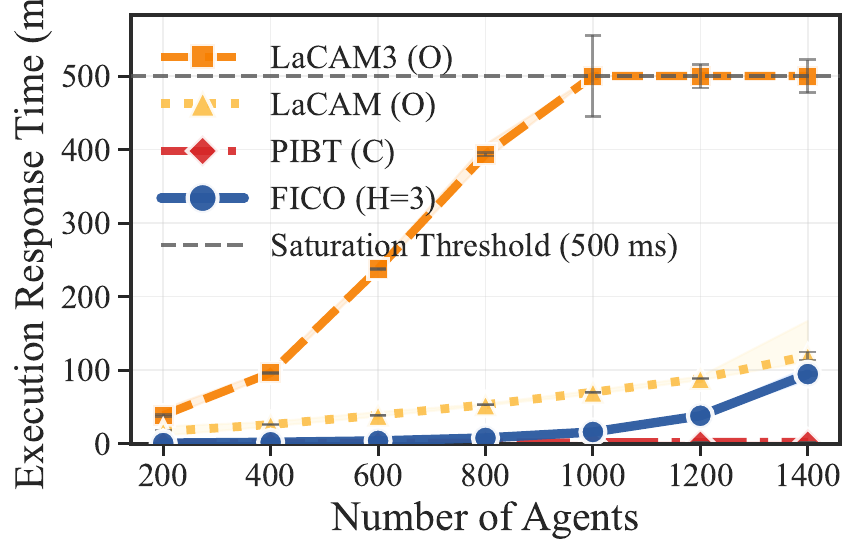}} \hfill
    \subfloat[\href{https://movingai.com/benchmarks/mapf/random-64-64-10.png}{\texttt{Random Map}}]{\includegraphics[width=0.5\linewidth]{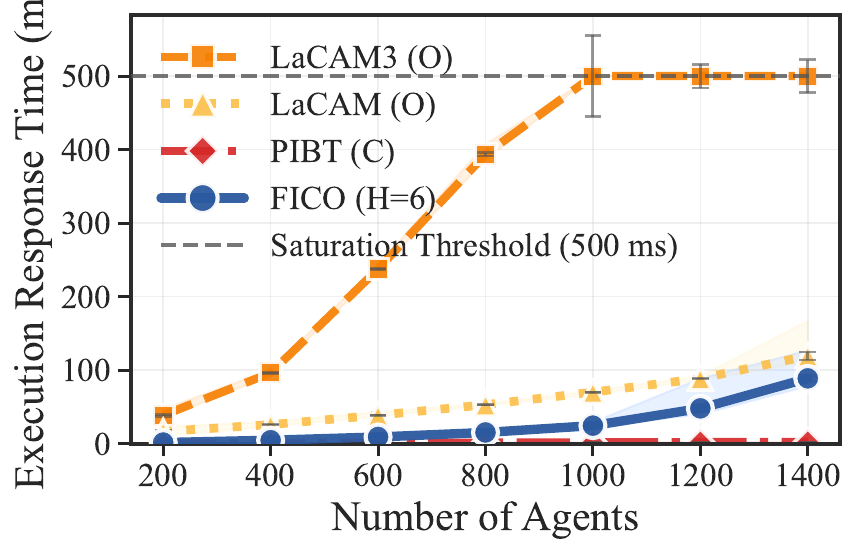}} \\
    \subfloat[\href{https://movingai.com/benchmarks/mapf/warehouse-20-40-10-2-2.png}{\texttt{Warehouse Map B}}]{\includegraphics[width=0.5\linewidth]{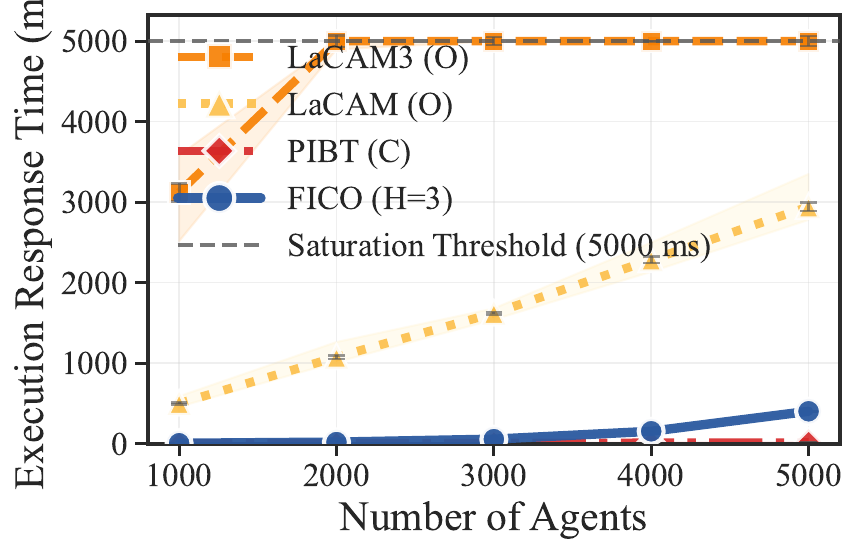}} \hfill
    \subfloat[\href{https://movingai.com/benchmarks/mapf/warehouse-20-40-10-2-2.png}{\texttt{Warehouse Map B}}]{\includegraphics[width=0.5\linewidth]{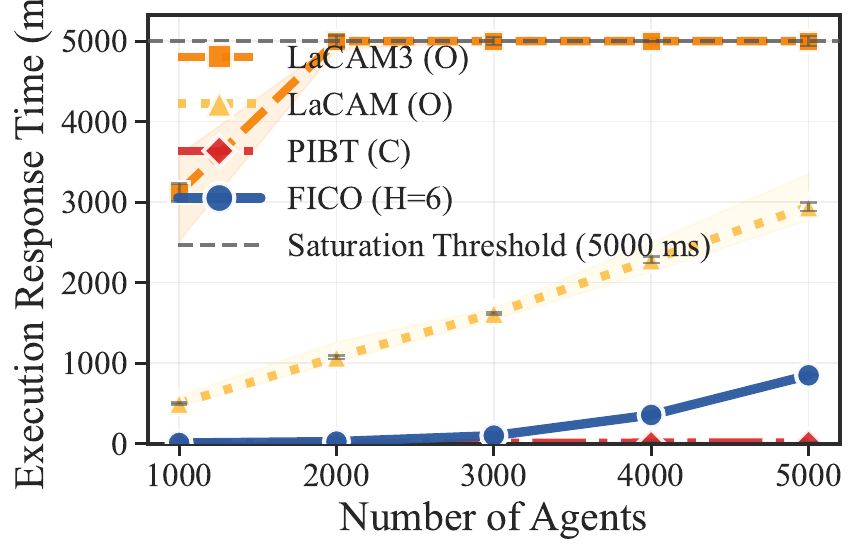}} 

    \caption{Execution Response Time (ERT) comparison. 
Although \gls{acr:fico} is slower than the closed-loop baseline PIBT, its~$H$-step lookahead design still enables real-time operation with near-instant responses, while achieving substantially faster execution than open-loop baselines.}
    \label{fig:ERT_comparison_oneshot_mapf}
    \vspace{-10pt}
\end{figure}

\begin{figure}[tb]
    \centering
    \subfloat[\href{https://movingai.com/benchmarks/mapf/empty-48-48.png}{\texttt{Empty Map}}]{\includegraphics[width=0.5\linewidth]{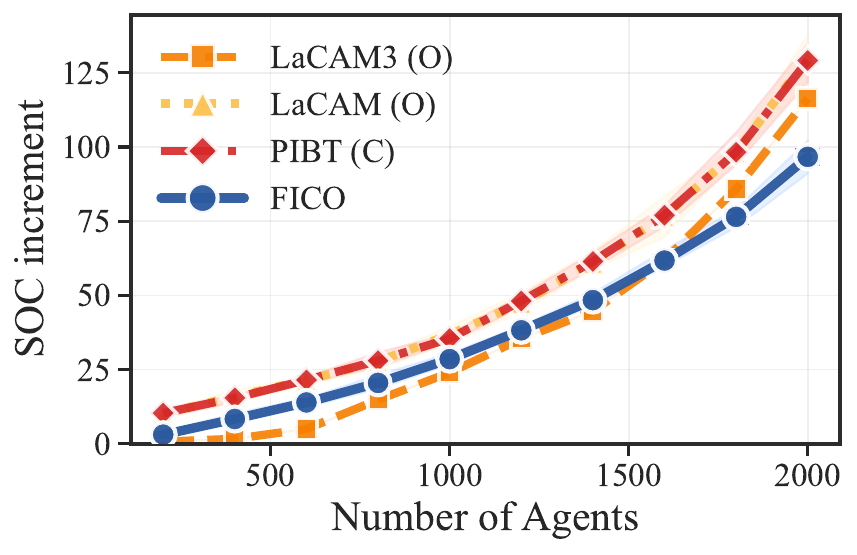}} \hfill
    \subfloat[\href{https://movingai.com/benchmarks/mapf/random-64-64-10.png}{\texttt{Random Map}}]{\includegraphics[width=0.5\linewidth]{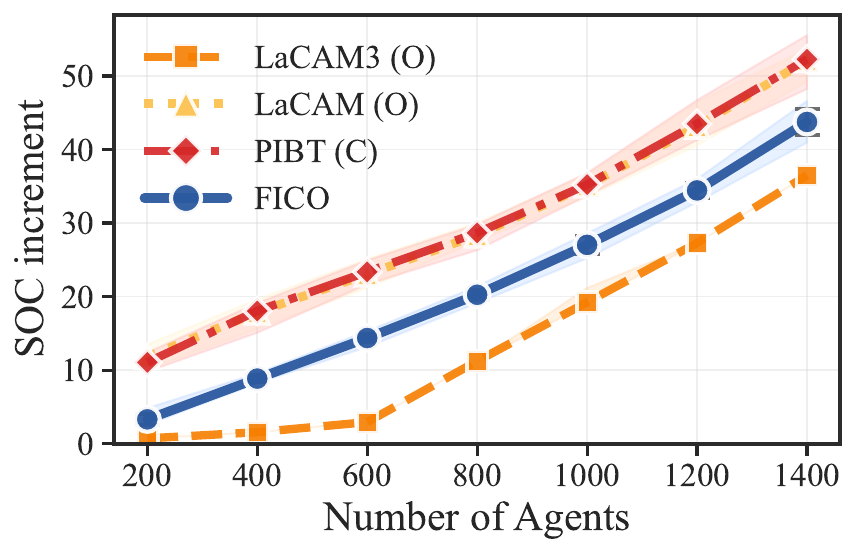}} \\
    \subfloat[\href{https://movingai.com/benchmarks/mapf/warehouse-20-40-10-2-1.png}{\texttt{Warehouse Map A}}]{\includegraphics[width=0.5\linewidth]{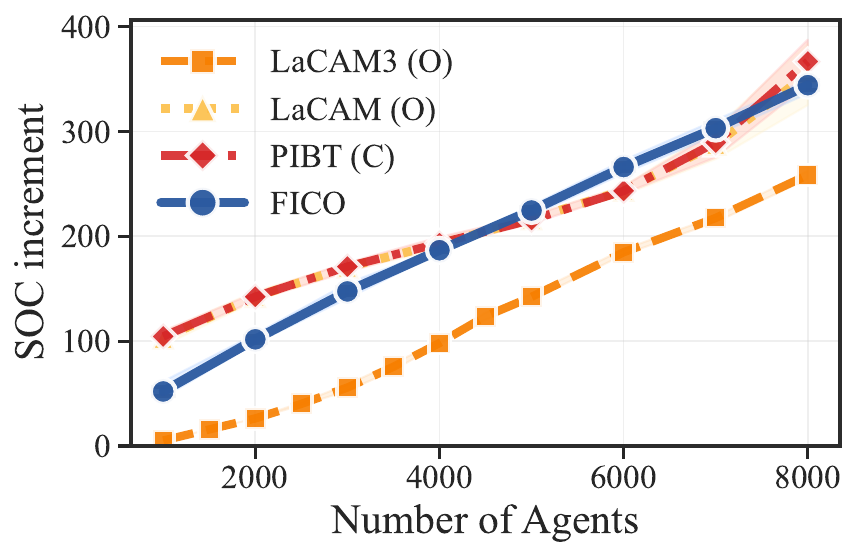}} \hfill
    \subfloat[\href{https://movingai.com/benchmarks/mapf/warehouse-20-40-10-2-2.png}{\texttt{Warehouse Map B}}]{\includegraphics[width=0.5\linewidth]{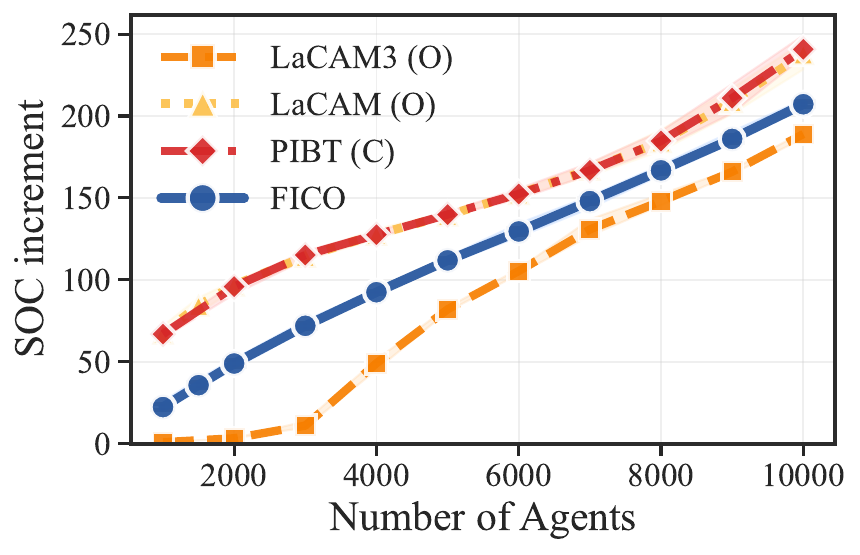}} 
    
    \caption{FICO consistently outperforms closed-loop baselines in terms of \gls{acr:soc} in one-shot \gls{acr:mapf} (\cref{prob:searching-one-shot-mapf}).}
    \label{fig:soc_comparison_oneshot_mapf}
    \vspace{-5pt}
\end{figure}

\myparagraph{One-shot \gls{acr:mapf}.}
We compare \gls{acr:fico} against PIBT~\cite{okumura2022priority} (closed-loop baseline, marked with (C)), Engineered LaCAM~\cite{okumura2023engineering} (open-loop baseline, marked as (O)), and LaCAM~\cite{okumura2023improving} (open-loop baseline, marked as (O)). 
We report (i) the \emph{\gls{acr:ert}}, the wall-clock time from planning start to when the first step is executable; for open-loop methods this equals total search time, whereas closed-loop methods can execute as soon as the first step is computed, and (ii) solution quality via the \gls{acr:soc} \emph{increment} ($\Delta\mathrm{SOC}$) over the theoretical lower bound.\footnote{We plot~$\Delta\mathrm{SOC} \!:=\! \mathrm{SOC}-\sum_{a\in A}\gamma(\rho_s(a),\rho_g(a))$, i.e., the coordination overhead beyond the sum of single-agent shortest-path lengths.}

As shown in \cref{fig:ERT_comparison_oneshot_mapf}, \gls{acr:ert} grows with the number of agents, yet \gls{acr:fico} achieves substantial speedups over the open-loop baseline, while remaining comparable in speed to the simpler PIBT due to parallelization and our algorithmic design. 
In the warehouse case study (\href{https://movingai.com/benchmarks/mapf/warehouse-20-40-10-2-2.png}{\texttt{Warehouse Map B}}), \gls{acr:fico} achieves quasi-instant responsiveness, initiating execution within 15\,ms even with 5000 agents, and reduces computation time by $77.1\%$ and $97.7\%$ compared with open-loop baselines.


For solution quality, we compare~$\Delta\mathrm{SOC}$ in \cref{fig:soc_comparison_oneshot_mapf}.
To ensure a fair trade-off, the anytime baselines continue refining until \gls{acr:fico} finishes all planning steps.
The results show that \gls{acr:fico} consistently delivers competitive solution quality relative to established baselines. 
Its finite-horizon lookahead enables higher-quality solutions than closed-loop PIBT variants, while avoiding the prohibitive computation times of open-loop solvers such as LaCAM3, which only occasionally achieve better \gls{acr:soc} performance.
Overall, \gls{acr:fico} lies on a favorable Pareto frontier, combining near-PIBT responsiveness with markedly better solution quality.

\myparagraph{Lifelong \gls{acr:mapf}.}
Additionally, we next evaluate throughput in lifelong \gls{acr:mapf}, a setting naturally captured by the unified model with a dynamic environment, as the case of changeable goals with fixed-horizon termination. 
Unlike methods that require precomputed guidance structures~\cite{zhang2024guidance}, which can introduce substantial delays, \gls{acr:fico} operates in a true \emph{zero-shot} manner, applying directly without modification, therefore enabling instant execution. Here we compare \gls{acr:fico} with the closed-loop baseline PIBT algorithm, which is also \emph{zero-shot} in the sense that no expensive pre-computation is required. 
As shown in \cref{fig:throughput_comparison_lifelong_mapf}, beyond its computational advantages, the dual-level factorization in \gls{acr:fico} consistently yields higher throughput than the closed-loop PIBT baseline across all maps and scales.
The gains are most pronounced in dense environments, where coordination is especially challenging. 
In the extreme case of 10{,}000 agents in the warehouse setting, \gls{acr:fico} achieves a $59.2\%$ throughput improvement over the baseline.
This highlights the effectiveness of \gls{acr:fico} for large-scale, continuous task execution.

\begin{figure}[tb]
    \centering
    \subfloat[\href{https://movingai.com/benchmarks/mapf/empty-48-48.png}{\texttt{Empty Map}}]{\includegraphics[width=0.5\linewidth]{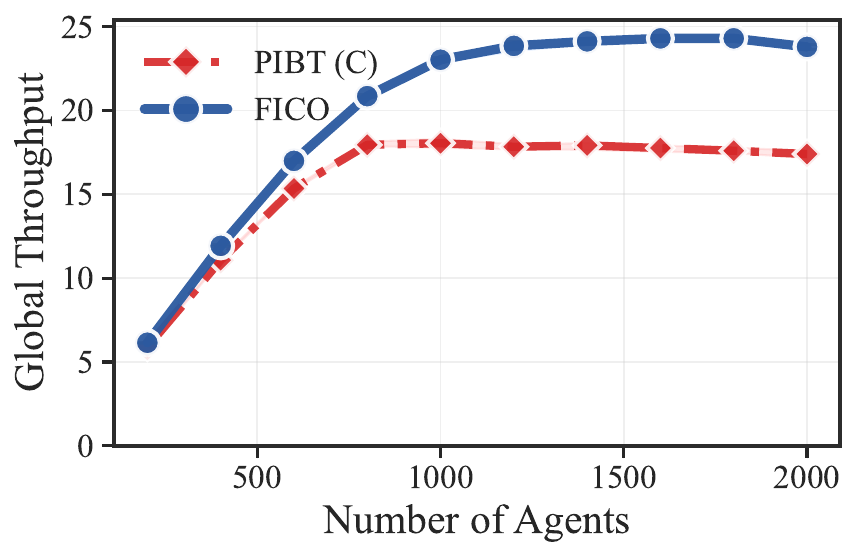}} 
    \subfloat[\href{https://movingai.com/benchmarks/mapf/random-64-64-10.png}{\texttt{Random Map}}]{\includegraphics[width=0.5\linewidth]{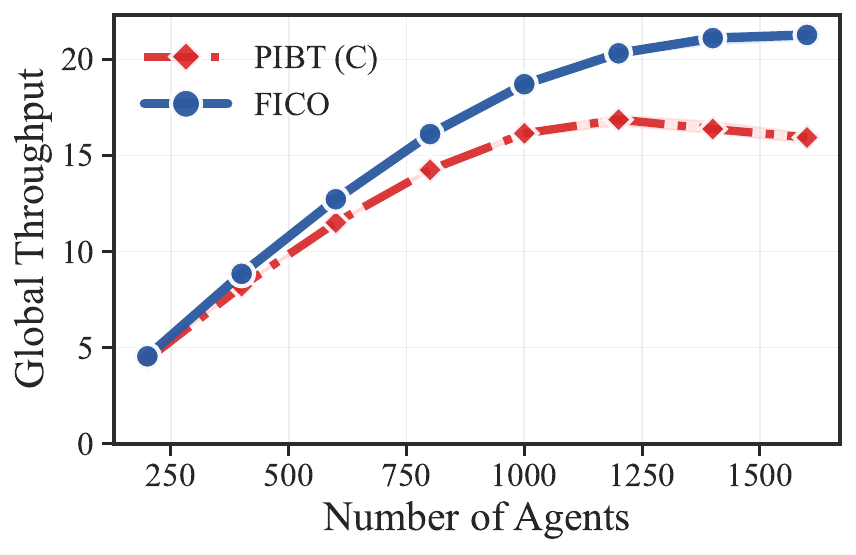}} \\
    \subfloat[\href{https://movingai.com/benchmarks/mapf/warehouse-20-40-10-2-1.png}{\texttt{Warehouse Map A}}]{\includegraphics[width=0.5\linewidth]{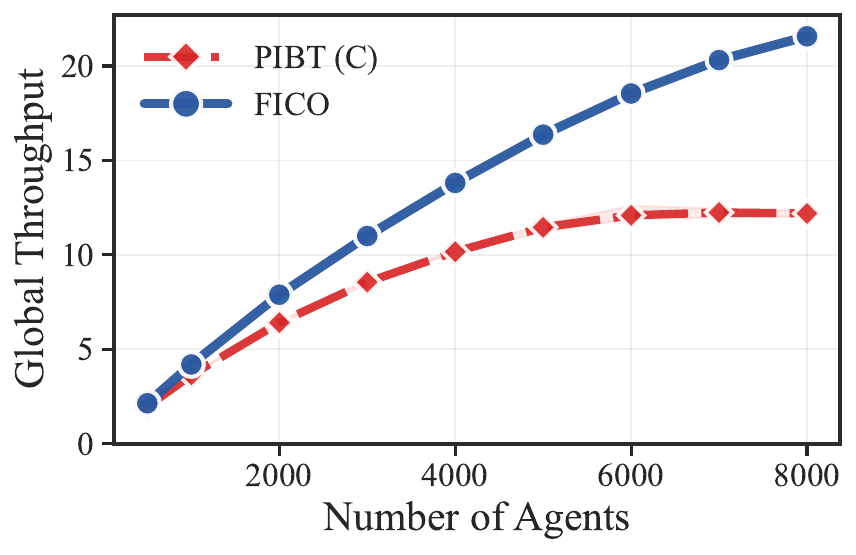}} 
    \subfloat[\href{https://movingai.com/benchmarks/mapf/warehouse-20-40-10-2-2.png}{\texttt{Warehouse Map B}}]{\includegraphics[width=0.5\linewidth]{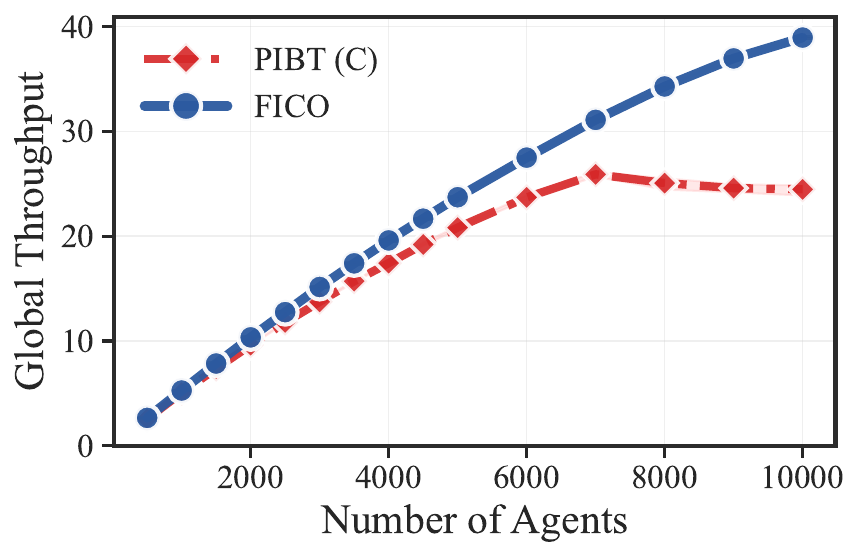}} 
    
    \caption{FICO consistently outperforms closed-loop baselines in terms of throughput in lifelong \gls{acr:mapf} (\cref{prob:searching-lifelong-mapf}). The performance increase is more significant in dense scenarios. }
    \label{fig:throughput_comparison_lifelong_mapf}
    \vspace{-5pt}
\end{figure}

\subsection{Ablation Study}
\label{exp:ablation-study}
To better understand the contribution of each component, we conduct a series of ablations: (i) removing parallel computation, (ii) replacing the balanced tie-breaker with a random one, (iii) varying the planning horizon, and (iv) removing the hindrance mechanism~\cite{okumura2025lightweight} and using the PIBT with the balanced tie-breaker in \cref{lem:optimal-path-distribution}.

\myparagraph{Parallel computation.}
\gls{acr:fico} exploits parallelization in step~\circled{1} (individual path planning) and step~\circled{4} (groupwise replanning).
Converting both to sequential execution significantly increases \gls{acr:ert}, particularly at larger scales (\cref{fig:ablation_multi_processing}).
The benefit is hardware-limited in our experiments (12 threads), but in principle scales linearly, as no communication is required across agents or groups.

\begin{figure}[tb]
    \centering
    \subfloat[\href{https://movingai.com/benchmarks/mapf/random-64-64-10.png}{\texttt{Random Map}}, $H=3$]{\includegraphics[width=0.5\linewidth]{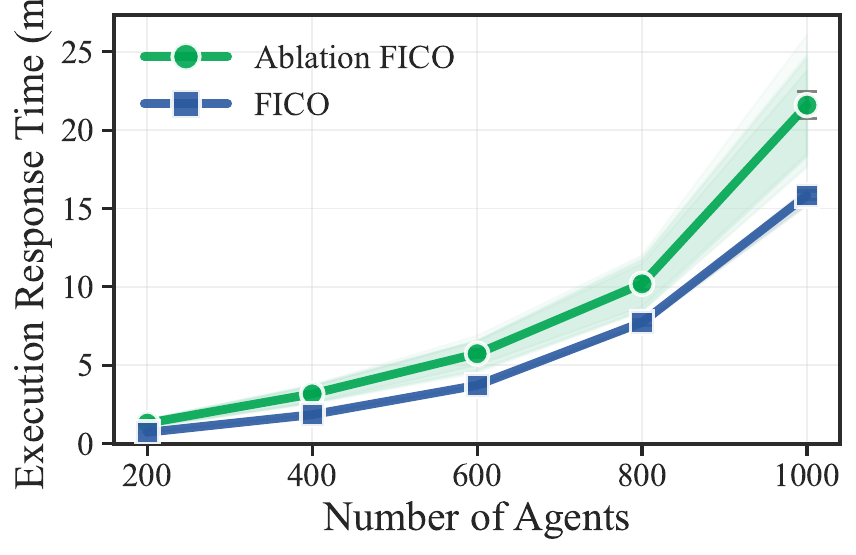}} \hfill
    \subfloat[\href{https://movingai.com/benchmarks/mapf/warehouse-20-40-10-2-2.png}{\texttt{Warehouse Map B}}, $H=3$]{\includegraphics[width=0.5\linewidth]{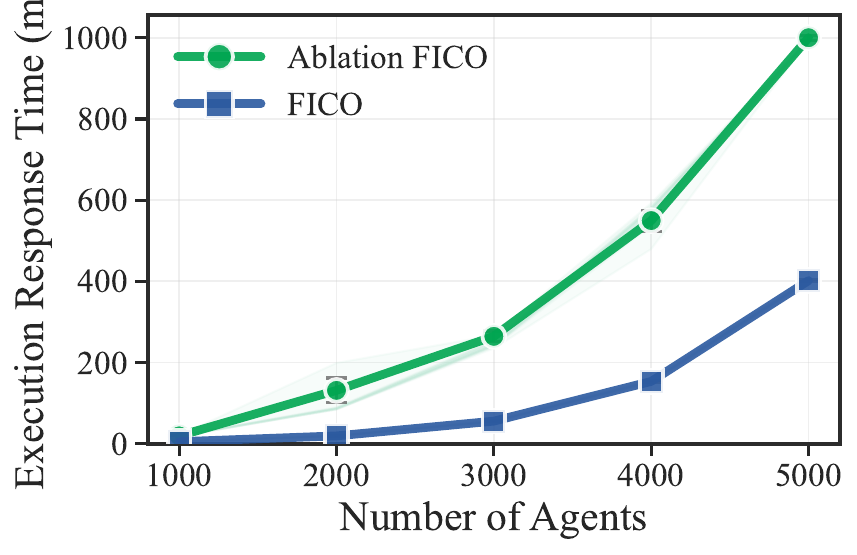}} 
    
    \caption{\textbf{Effect of the multi-processing mechanism.} Removing the multi-processing parallel computation mechanism in \anothercircled{1} and \anothercircled{4} will cause a significant slowdown across all cases, especially in scenarios with more agents.}
    \label{fig:ablation_multi_processing}
    \vspace{-5pt}
\end{figure}

\begin{figure}[tb]
    \centering
    \subfloat[\href{https://movingai.com/benchmarks/mapf/random-64-64-10.png}{\texttt{Random Map}}, one-shot]{\includegraphics[width=0.5\linewidth]{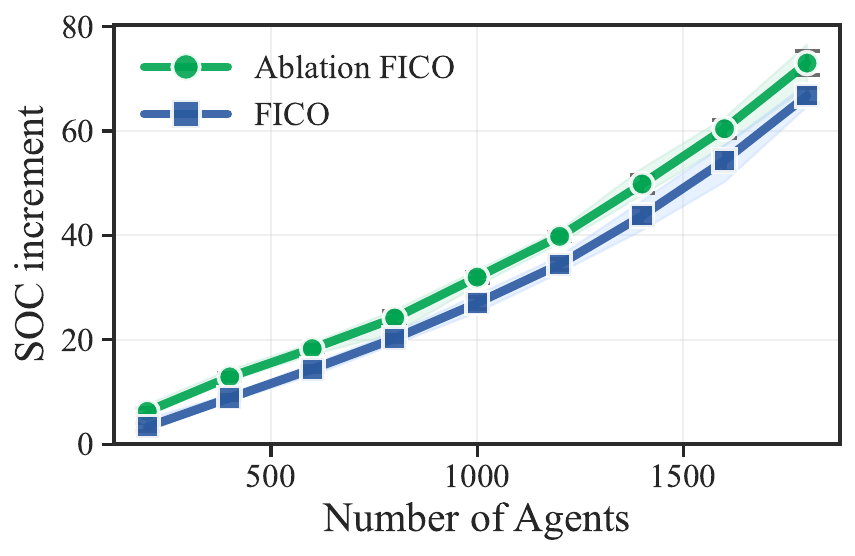}} \hfill
    \subfloat[\href{https://movingai.com/benchmarks/mapf/random-64-64-10.png}{\texttt{Random Map}}, lifelong]{\includegraphics[width=0.5\linewidth]{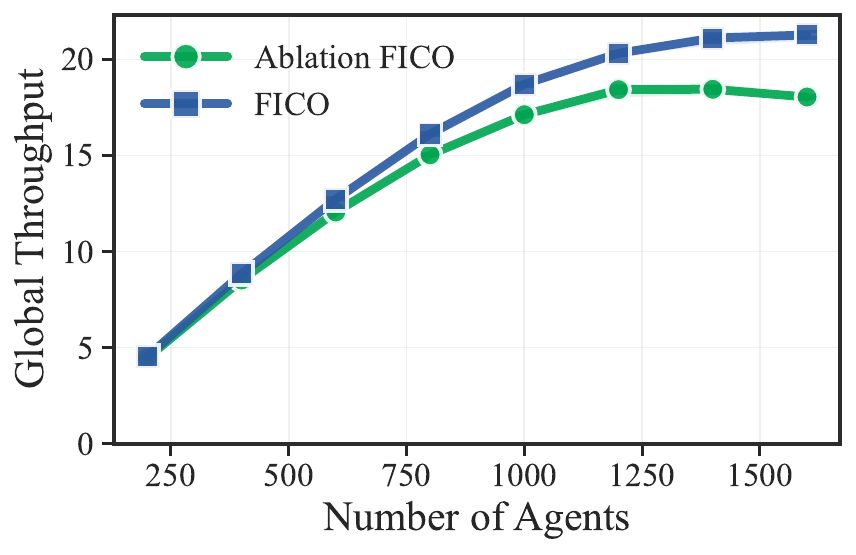}} \\
    \subfloat[\href{https://movingai.com/benchmarks/mapf/warehouse-20-40-10-2-2.png}{\texttt{Warehouse Map B}}, one-shot]{\includegraphics[width=0.5\linewidth]{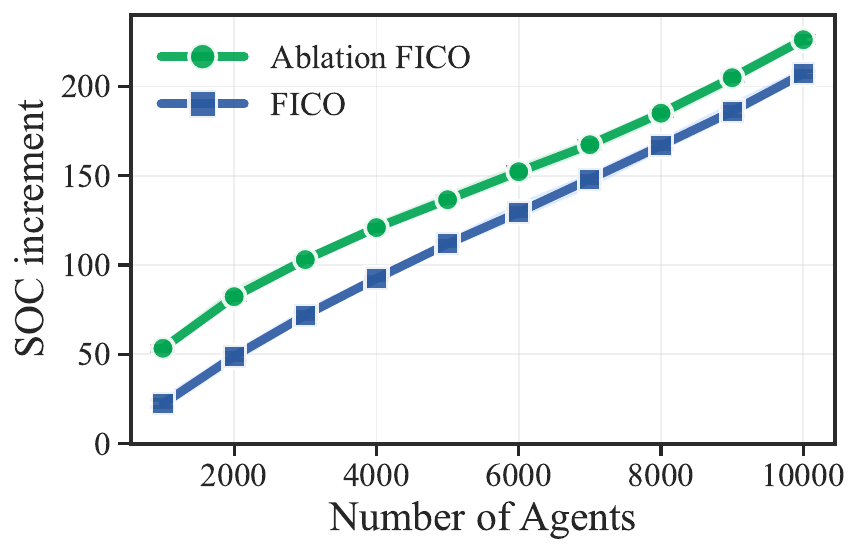}} \hfill
    \subfloat[\href{https://movingai.com/benchmarks/mapf/warehouse-20-40-10-2-2.png}{\texttt{Warehouse Map B}}, lifelong]{\includegraphics[width=0.5\linewidth]{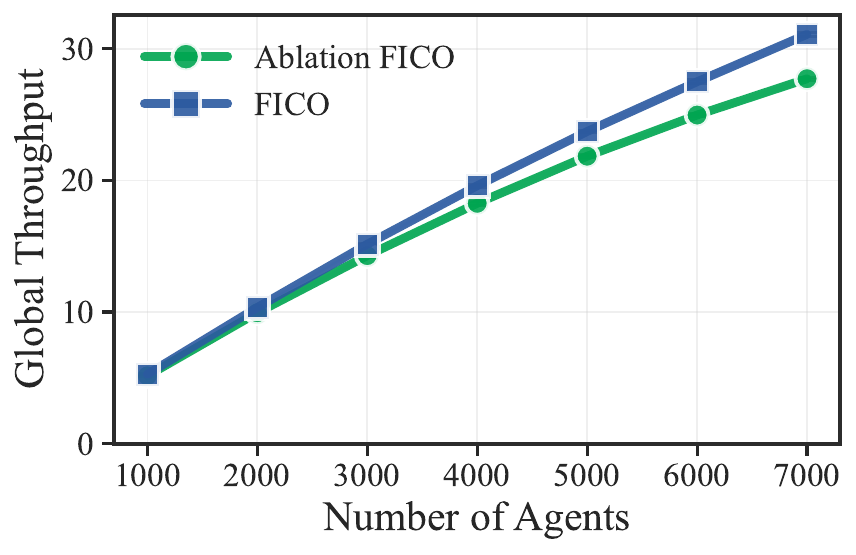}}
    \caption{\textbf{Effect of balanced vs.\ random tie-breaking.} Replacing the balanced tie-breaker in~\cref{lem:optimal-path-distribution} will cause notable solution quality degradation in both one-shot and lifelong \gls{acr:mapf} in different maps across all densities, while the effect is more significant in dense situations. }
    \label{fig:ablation_balanced_tie_breaker}
    \vspace{-10pt}
\end{figure} 

\myparagraph{Balanced tie-breaker.}
The balanced tie-breaker of \cref{lem:optimal-path-distribution} ensures uniform selection among individually optimal paths, avoiding the skew introduced by random tie-breaking.
Replacing it with a random rule leads to substantial \gls{acr:soc} and throughput degradation in the one-shot and lifelong settings (\cref{fig:ablation_balanced_tie_breaker}), confirming the importance of maintaining uniformity.

\myparagraph{Planning horizon.}
The horizon~$H$ controls the algorithm's lookahead and directly impacts coordination.
\cref{fig:ablation_alternative_horizon_size} shows the results of this ablation study, where we alternate the horizon parameter in the same experiment setting. 
In both the \href{https://movingai.com/benchmarks/mapf/random-64-64-10.png}{\texttt{Random Map}} and \href{https://movingai.com/benchmarks/mapf/warehouse-20-40-10-2-2.png}{\texttt{Warehouse Map B}}, when the number of agents is relatively small (\cref{subfig:horizon-random-small} and~\cref{subfig:horizon-warehouse-small}), the solution qualities increase monotonically with longer planning horizons, indicating that extended lookahead is beneficial in these settings. However, in relatively dense scenarios (\cref{subfig:horizon-random-large} and~\cref{subfig:horizon-warehouse-large}), the improvements diminish substantially beyond a certain threshold, represented by a ``Nike-shaped'' curve, reflecting the trade-off between short- and long-sightedness.

\begin{figure}[tb]
    \centering
    \subfloat[\href{https://movingai.com/benchmarks/mapf/random-64-64-10.png}{\texttt{Random Map}}, 200 agents]{\includegraphics[width=0.5\linewidth]{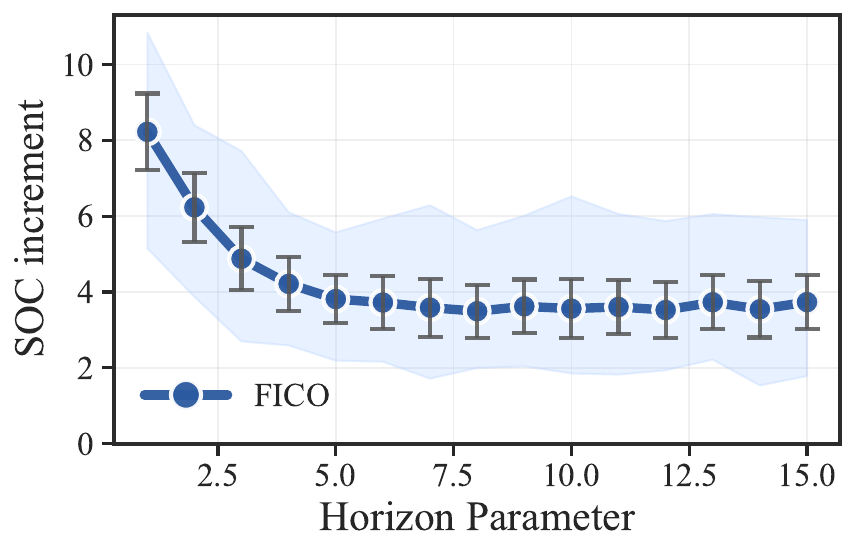}\label{subfig:horizon-random-small}} \hfill
    \subfloat[\href{https://movingai.com/benchmarks/mapf/random-64-64-10.png}{\texttt{Random Map}}, 600 agents]{\includegraphics[width=0.5\linewidth]{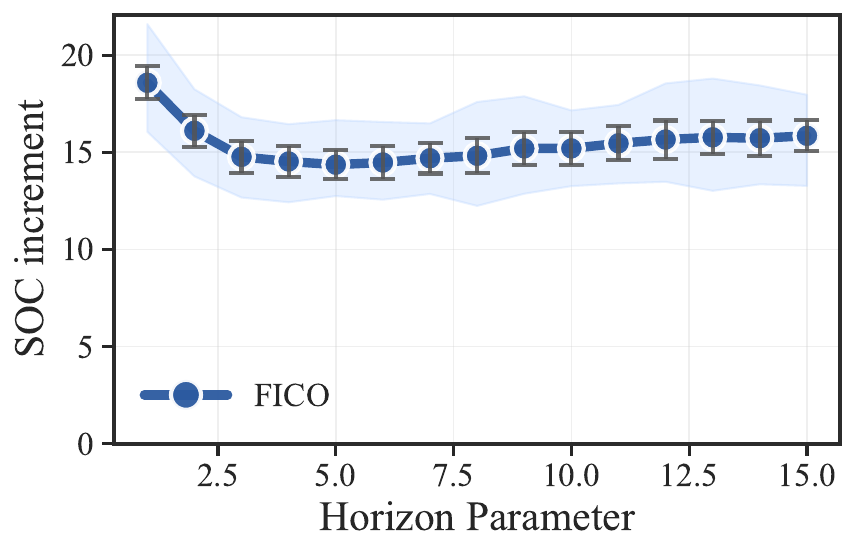}\label{subfig:horizon-random-large}} \\
    \subfloat[\href{https://movingai.com/benchmarks/mapf/warehouse-20-40-10-2-2.png}{\texttt{Warehouse Map B}}, 1000 agents]{\includegraphics[width=0.5\linewidth]{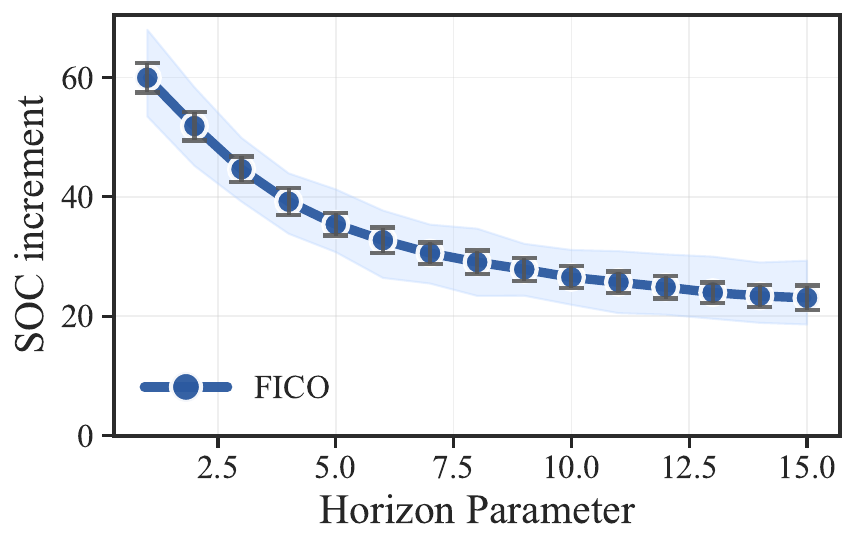}\label{subfig:horizon-warehouse-small}} \hfill
    \subfloat[\href{https://movingai.com/benchmarks/mapf/warehouse-20-40-10-2-2.png}{\texttt{Warehouse Map B}}, 5000 agents]{\includegraphics[width=0.5\linewidth]{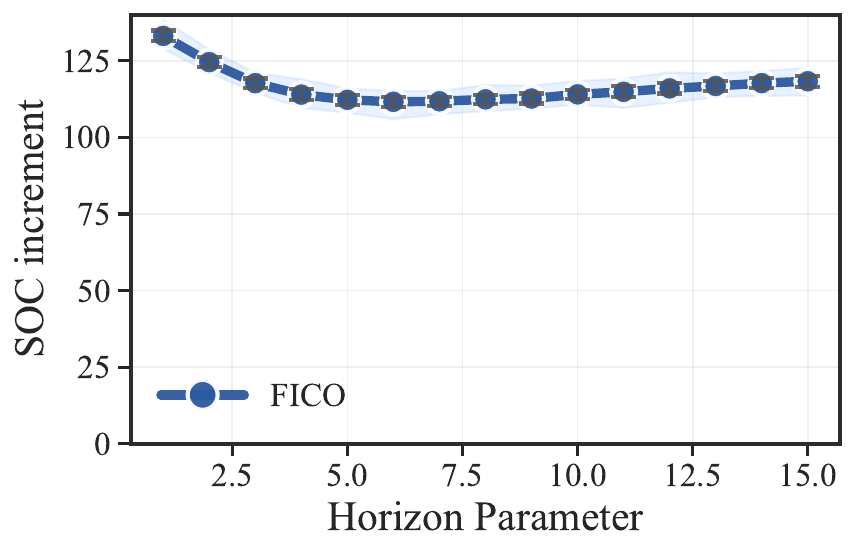}\label{subfig:horizon-warehouse-large}}
    \caption{\textbf{Effect of horizon parameter $H$.} As a decisive hyperparameter, changing the horizon will bring different solution qualities. }
    \label{fig:ablation_alternative_horizon_size}
    \vspace{-10pt}
\end{figure}  

\begin{figure}[tb]
    \centering
    \subfloat[\href{https://movingai.com/benchmarks/mapf/random-64-64-10.png}{\texttt{Random Map}}]{\includegraphics[width=0.5\linewidth]{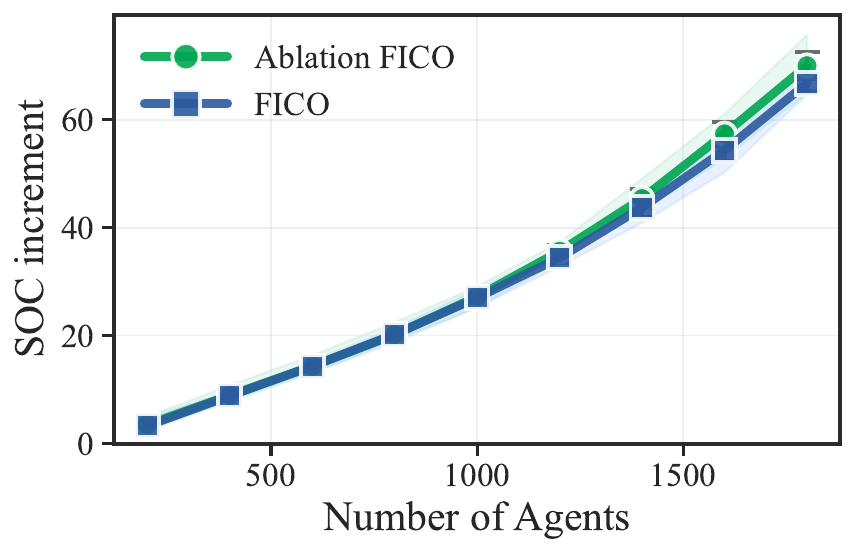}} \hfill
    \subfloat[\href{https://movingai.com/benchmarks/mapf/warehouse-20-40-10-2-2.png}{\texttt{Warehouse Map B}}]{\includegraphics[width=0.5\linewidth]{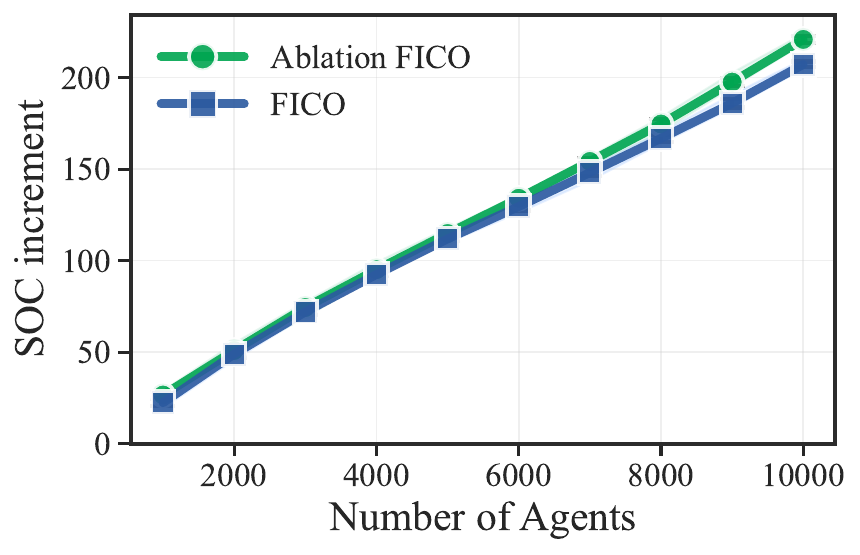}} 
    \caption{\textbf{Effect of the hindrance mechanism in the parallel factorization-compatible replanning~\anothercircled{4}.} Compared with the PIBT-based planner with the balanced tie-breaker, the hindrance mechanism can bring better performance, especially in dense scenarios. }
    \label{fig:ablation_hindrance}
    \vspace{-10pt}
\end{figure}  

\myparagraph{Hindrance.}
We next evaluate the effect of incorporating hindrance-enhanced PIBT in step~\circled{4}, where neighboring vertices are prioritized based on potential interference.
Hindrance-enhanced PIBT, introduced in~\cite{okumura2025lightweight}, augments the standard PIBT by computing a \emph{hindrance score} for each neighbor of an agent, reflecting the potential impact on other agents. 
During neighbor selection, the hindrance score serves as a secondary sorting key, while the primary key remains the distance to the goal. 
This modification replaces the vanilla PIBT approach, where ties are broken randomly, which often leads to suboptimal outcomes. 
An alternative to using hindrance scores is to employ the balanced tie-breaking strategy described in~\cref{lem:optimal-path-distribution} (and applied in step~\circled{1}), which balances neighbor selection among equally distant candidates. 
Our results show that substituting hindrance scoring with balanced tie-breaking slightly reduces solution quality in dense environments, while in less congested scenarios the two strategies perform comparably (\cref{fig:ablation_hindrance}).

\begin{figure}[tb]
    \centering
    \subfloat[\href{https://movingai.com/benchmarks/mapf/warehouse-20-40-10-2-2.png}{\texttt{Warehouse Map B}}, $p_{\mathrm{delay}} = p_{\mathrm{add}} = 0.1$]{\includegraphics[width=\linewidth]{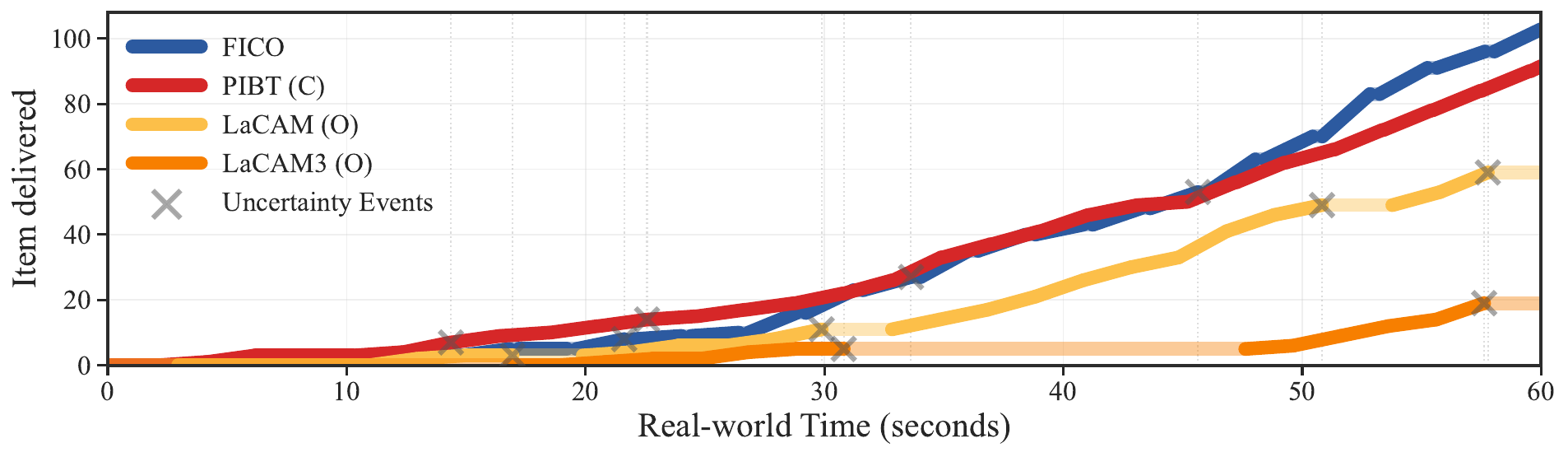}} \\
    \subfloat[\href{https://movingai.com/benchmarks/mapf/warehouse-20-40-10-2-2.png}{\texttt{Warehouse Map B}}, $p_{\mathrm{delay}} = p_{\mathrm{add}} = 0.5$]{\includegraphics[width=\linewidth]{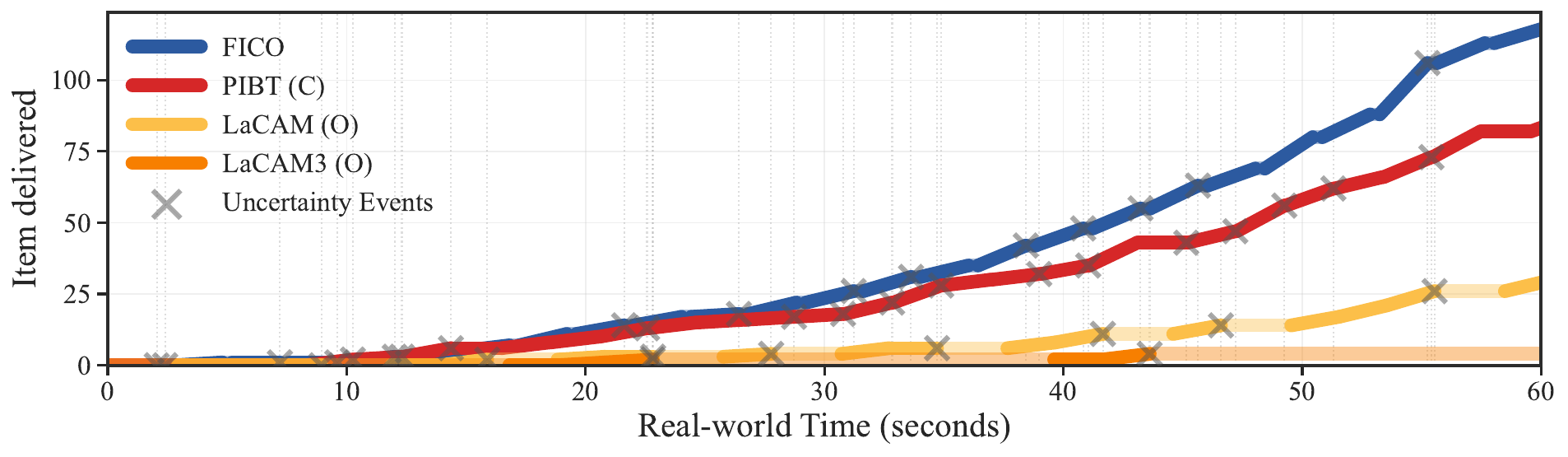}} \\
    \subfloat[\href{https://movingai.com/benchmarks/mapf/warehouse-20-40-10-2-2.png}{\texttt{Warehouse Map B}}, $p_{\mathrm{delay}} = p_{\mathrm{add}} = 0.8$]{\includegraphics[width=\linewidth]{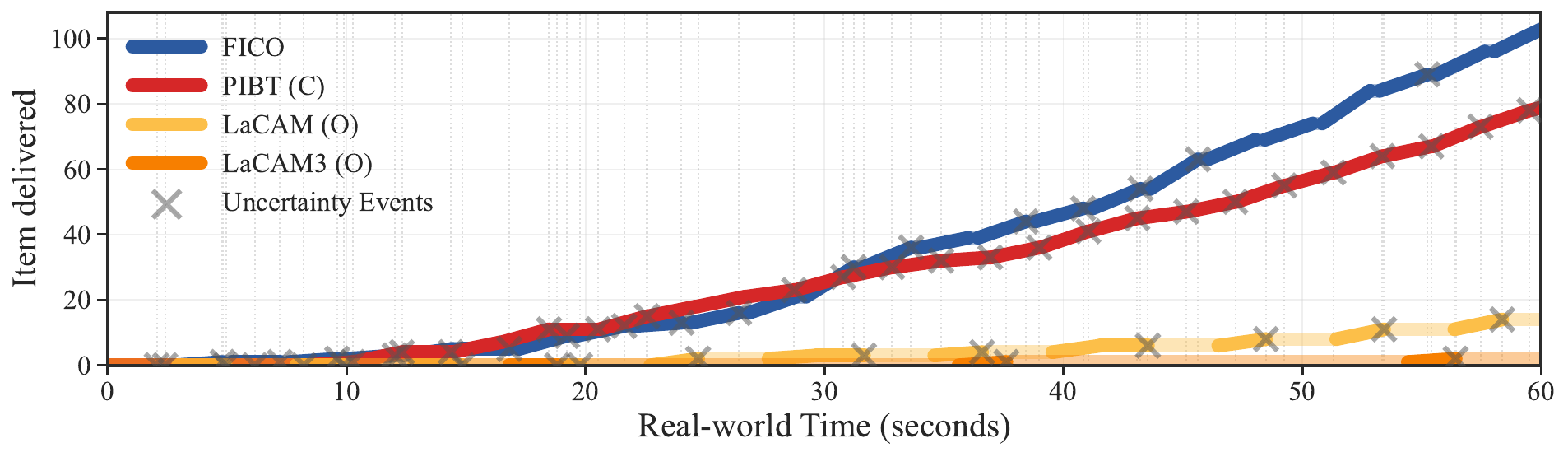}} 
    \caption{\textbf{Item delivery performance under different levels of uncertainty.} 
    Open-loop algorithms degrade sharply due to costly replanning whenever uncertainties occur. 
    The closed-loop baseline PIBT maintains fast planning but yields poor solution quality. 
    \gls{acr:fico} combines closed-loop adaptability with high-quality solutions, achieving the best performance.}
    \label{fig:uncertainties}
    \vspace{-10pt}
\end{figure}

\subsection{Robustness under uncertainty} \label{sec:experiments-uncertainties}
To assess robustness, we conduct a case study of warehouse item delivery in the presence of stochastic execution delays and dynamic agent arrivals.
These uncertainties are incorporated into the \gls{acr:mapf} framework by replacing the idealized actuators and environment with uncertain counterparts (cf.~\cref{exp:actuator-stochastic-delay,exp:env-agent-addition}), yielding variants of the unified \gls{acr:mapf} problem where execution is no longer deterministic. 
We deploy $5000$ agents in the \href{https://movingai.com/benchmarks/mapf/warehouse-20-40-10-2-2.png}{\texttt{Warehouse Map B}} and measure the number of items delivered within one minute. 
Each execution step is assumed to take $2$ seconds, and robots cannot plan while moving, since uncertainties are only revealed once a movement is completed.
\cref{fig:uncertainties} reports the results. 
The light lines indicate planning trajectories, while the dark lines denote actual execution. 
Open-loop baselines, though capable of high solution quality in deterministic settings (cf.~\cref{fig:soc_comparison_oneshot_mapf}), become bottlenecked by repeated, time-consuming replanning in uncertain environments. 
In contrast, the closed-loop baseline PIBT avoids this bottleneck but produces low-quality solutions due to its greedy nature. 
By leveraging factorization and adaptivity, \gls{acr:fico} achieves the highest throughput. 
For instance, when~$p_{\mathrm{delay}} = p_{\mathrm{add}} = 0.5$, \gls{acr:fico} is able to deliver $35.7\%$ more items than the closed-loop baseline, and improves over the open-loop baselines by $353.5\%$ and $3175\%$, respectively.

\section{Conclusion and Future Work}
\label{sec:conclusion}

This paper introduced the \gls{acr:mapf} system, a unified execution-aware model that integrates planning and execution within a feedback loop, and proposed FICO, a closed-loop factorization algorithm inspired by receding-horizon control. 
By combining finite-horizon lookahead with dual-level factorization, FICO leverages the compositional structure of \gls{acr:mapf} to achieve scalability, responsiveness, and robustness. 
Extensive experiments across one-shot, lifelong, and uncertainty-aware settings show that FICO consistently delivers highly competitive solution quality while maintaining real-time responsiveness. 

Looking forward, the framework opens several avenues for exploration. 
First, \gls{acr:fico}’s modular structure enables integration with more sophisticated planning algorithms, providing a pathway to FICO+, which can transform existing \gls{acr:mapf} algorithms to closed-loop algorithms for the unified \gls{acr:mapf} problem. 
Second, the system-level modeling of \gls{acr:mapf} and the controller design perspective of the \gls{acr:mapf} problem can guide the design of new benchmarks, open-loop and closed-loop, and evaluation metrics that explicitly account for feedback, uncertainty, and continual operation. 
Finally, applying FICO to broader domains such as multi-robot manipulation or mixed autonomy traffic systems offers exciting opportunities~\cite{zardini2022analysis}.


{
  \bibliographystyle{IEEEtran}
  \bibliography{references}
}

\vspace{-11pt}
\appendix

\subsection{Assistive algorithms used in \gls{acr:fico}}

\begin{algorithm}[h]
\resizebox{\columnwidth}{!}{
\begin{minipage}{1\columnwidth}
    \small
    \caption{Perfect distance heuristic computation}
    \label{alg:perfect_distance_heuristic}
    \begin{algorithmic}[1]
    \Require{Instance $\cI_t = \tup{G^t,A^t,\rho_s^t, \rho_g^t}$, agent $a_i \in A^t$, vertex $v \in V^t$, pre-computed table $\gamma \colon V^t \times A^t \to \mathbb{N}$ which stores the computed distances and initialized with $\infty$, and queue $Q_d$ which stores the vertices to be visited}
    \Ensure{The distance from $v$ to $\rho_g^t(a_i)$, denoted as $\gamma(v,a_i)$}

    \If{$\gamma(v,a_i) \neq \infty$}
        \State \Return{$\gamma(v,a_i)$}
    \EndIf
    \If{$Q_d = \emptyset$}
        \State $Q_d.\textbf{push}(\rho_g^t(a_i))$
        \State $\gamma(v,a_i) \gets 0$
    \EndIf
    \While{$Q_d \neq \emptyset$}
        \State $v^\prime \gets$ \textbf{pop}($Q_d$)
        \State $d_{v^\prime} \gets \gamma(v^\prime,a_i)$
        \ForAll{$\tilde{v} \in \mathrm{Neighbors}(v^\prime)$}
            \State $\gamma(\tilde{v},a_i) \gets d_{v^\prime} + 1$
            \State $Q_d.\textbf{push}(\tilde{v})$
        \EndFor
        \If{$v^\prime = v$}
            \State \Return{$\gamma(v,a_i)$}
        \EndIf
    \EndWhile
    \end{algorithmic}
\end{minipage}
}
\end{algorithm}
\begin{algorithm}[h]
    \resizebox{\columnwidth}{!}{
    \begin{minipage}{1\columnwidth}
    \small
    \caption{Optimal path count calculation}
    \label{alg:optimal_path_count}
    \begin{algorithmic}[1]
    \Require{Instance $\cI_t = \tup{G^t,A^t,\rho_s^t, \rho_g^t}$, agent $a_i \in A^t$, vertex $v \in V^t$, pre-computed table $C \colon V^t \times A^t \to \mathbb{N}$ which stores the computed optimal path counts and initialized with $0$, and priority queue $PQ_c$ which stores the vertices to be visited according to their distances to the goal $\gamma(v,a_i)$}
    \Ensure{The optimal path count from $v$ to $\rho_g^t(a_i)$, denoted as $c(v,a_i)$}

    \If{$C(v,a_i) \neq 0$}
        \State \Return{$C(v,a_i)$}
    \EndIf
    \If{$PQ_c = \emptyset$}
        \State $PQ_c.\textbf{push}(\rho_g^t(a_i))$
        \State $C(v,a_i) \gets 1$
        \ForAll{$\tilde{v} \in \mathrm{Neighbors}(\rho_g^t(a_i))$}
            \State $PQ_c.\textbf{push}(\tilde{v})$
        \EndFor
    \EndIf
    \While{$PQ_c \neq \emptyset$}
        \State $v^\prime \gets$ \textbf{pop}($PQ_c$)
        \State $c_{v^\prime} \gets 0$
        \ForAll{$\tilde{v} \in \mathrm{Neighbors}(v^\prime)$}
            \If{$\gamma(\tilde v,a_i) = \gamma(v',a_i)-1$}
                \State $c_{v^\prime} \gets c_{v^\prime} + C(\tilde{v},a_i)$
            \ElsIf{$\gamma(\tilde v,a_i) = \gamma(v',a_i)+1$}
                \State $PQ_c.\textbf{push}(\tilde{v})$
            \EndIf
        \EndFor
        \State $C(v^\prime,a_i) \gets c_{v^\prime}$

        \If{$v^\prime = v$}
            \State \Return{$C(v,a_i)$}
        \EndIf
    \EndWhile
    \end{algorithmic}
    \end{minipage}
    }
\end{algorithm}

\end{document}